\pgfplotsset{compat=1.17}
\title{\bf Coresets for Clustering Under Stochastic Noise}
\author{
  Lingxiao Huang \thanks{Alphabetical order. Email: \texttt{huanglingxiao@nju.edu.cn}}\\ Nanjing University
  \and
  Zhize Li \thanks{Email: \texttt{zhizeli@smu.edu.sg}}\\ Singapore Management University
  \and
  Nisheeth K. Vishnoi \thanks{Email: \texttt{nisheeth.vishnoi@gmail.com}}\\ Yale University
  \and
  Runkai Yang \thanks{Email: \texttt{yangrunkai@smail.nju.edu.cn}}\\ Nanjing University
  \and
  Haoyu Zhao \thanks{Email: \texttt{haoyu@princeton.edu}}\\ Princeton University
}
\newcommand{\eat}[1]{}
\newcommand*{\rom}[1]{\expandafter\@slowromancap\romannumeral #1@}
\newcommand{\R}{\mathbb{R}}
\newcommand{\eps}{\varepsilon}
\renewcommand{\epsilon}{\varepsilon}
\newtheorem{theorem}{Theorem}[section]
\newtheorem{definition}{Definition}[section]
\newtheorem{lemma}[theorem]{Lemma}
\newtheorem{claim}[theorem]{Claim}
\newtheorem{assumption}[theorem]{Assumption}
\newcommand{\E}{\mathbb{E}}
\newcommand{\Var}{\mathrm{Var}}
\newcommand{\ouralgdefault}{$\mathbf{CN}$\xspace}
\newcommand{\ouralgar}{$\mathbf{CN}_{\alpha}$\xspace}
\newcommand{\err}{\ensuremath{\mathsf{Err}}}
\newcommand{\calA}{\mathcal{A}}
\newcommand{\calC}{\mathcal{C}}
\newcommand{\OPT}{\ensuremath{\mathsf{OPT}}}
\newcommand{\ProblemName}[1]{\textsc{#1}}
\newcommand{\kzC}{\ProblemName{$(k, z)$-Clustering}}
\newcommand{\kMedian}{\ProblemName{$k$-Median}}
\newcommand{\kMeans}{\ProblemName{$k$-Means}}
\newcommand{\oneMeans}{\ProblemName{1-Means}}
\newcommand{\Exp}{\ensuremath{\mathbb{E}}}
\newcommand{\poly}{\mathrm{poly}}
\newcommand{\dataset}[1]{{\tt #1}\xspace}
\newcommand{\cost}{\ensuremath{\mathrm{cost}}}
\algrenewcommand\algorithmicrequire{\textbf{Input:}}
\algrenewcommand\algorithmicensure{\textbf{Output:}}
\begin{document}
\maketitle

\begin{abstract}
We study the problem of constructing coresets for $(k, z)$-clustering when the input dataset is corrupted by stochastic noise drawn from a known distribution. In this setting, evaluating the quality of a coreset is inherently challenging, as the true underlying dataset is unobserved. To address this, we investigate coreset construction using surrogate error metrics that are tractable and provably related to the true clustering cost. We analyze a traditional metric from prior work and introduce a new error metric that more closely aligns with the true cost. Although our metric is defined independently of the noise distribution, it enables approximation guarantees that scale with the noise level. We design a coreset construction algorithm based on this metric and show that, under mild assumptions on the data and noise, enforcing an $\varepsilon$-bound under our metric yields smaller coresets and tighter guarantees on the true clustering cost than those obtained via classical metrics. In particular, we prove that the coreset size can improve by a factor of up to $\mathrm{poly}(k)$, where $n$ is the dataset size. Experiments on real-world datasets support our theoretical findings and demonstrate the practical advantages of our approach.
\end{abstract}

\newpage

\onecolumn

\tableofcontents

\newpage

\section{Introduction}
\label{sec:intro}

Clustering is a foundational tool in machine learning, with applications ranging from image segmentation and customer behavior analysis to sensor data summarization~\cite{lloyd1982least,tan2006cluster,arthur2007k,coates2012learning}. 
An important class of clustering problems is called \kzC\ where, given a dataset $P\subset \R^d$ of $n$ points and a $k \geq 1$,  the goal is to find a set $C \subset \R^d$ of $k$ points that minimizes the cost 
$\cost_z(P, C) := \sum_{x \in P}{d^z(x, C)}.$
Here 
$
d^z(x, C):=\min\left\{d^z(x,c): c\in C\right\}$
is the distance of $x$ to the center set $C$ and $d^z$ denotes the $z$-th power of the Euclidean distance.
Examples of \kzC\ include \kMedian\ (when $z=1$) and \kMeans\ (when $z=2$).
In many applications, the dataset $P$ is large, and it is desirable to have a small representative subset that requires less storage and computation while allowing us to solve the underlying clustering problem. 
Coresets have been proposed as a solution towards this~\cite{harpeled2004on} -- a coreset is a subset $S\subseteq P$ that approximately preserves the clustering cost for all center sets. 
Coresets have found further applications in sublinear models, including streaming \cite{harpeled2004on,braverman2016new}, distributed \cite{balcan2013distributed,huang2022coresets}, and dynamic settings \cite{henzinger2020fully} due to the ability to merge and compose them (see, e.g., \cite[Section 3.3]{wang2021robust}). 

Yet, a critical limitation of existing coreset constructions is their reliance on exact, noise-free data—a condition rarely met in practice. 
In practical applications, data is frequently corrupted by measurement error, transmission artifacts, or deliberate noise insertion for privacy and robustness.
One reason is that the measurement process may itself introduce noise into the data, or corruption may occur during the recording or reporting processes 
~\cite{halevy2006data,agrawal2010foundations,saez2013tackling,iam-on2020clustering}.
Further, noise can be introduced intentionally in data due to privacy concerns~\cite{ghinita2007fast,dwork2014analyze,ghazi2020differentially}, or to ensure robustness~\cite{Ying2019AnOO,li2019certified}.
In these scenarios, instead of the true dataset $P$, one observes a noisy dataset $\widehat{P}\subset \R^d$.
Various types of noise can emerge depending on the context: stochastic noise, adversarial noise, and noise due to missing data ~\cite{balcan2008discriminative,batista2003analysis,iam-on2020clustering}.

The degree of knowledge about the noise may range from complete uncertainty to full specification of its distributional parameters.
Stochastic noise, in particular, has been studied in  problems such as clustering~\cite{iam-on2020clustering} and regression~\cite{Theodoridis2020ParameterLA}, and is commonly encountered in fields like the social sciences~\cite{Baye2016GenderDI,ODea2018GenderDI,fong2022fairness}, economics~\cite{Fang2010TheoriesOS}, and machine learning~\cite{dwork2014analyze,Ying2019AnOO,li2019certified}.
When the attributes of data interact weakly with each other, independent and additive stochastic noise is considered ~\cite{zhu2004class,Freitas2001UnderstandingTC,langley1992analysis}.
In such case, for each point $p\in P$ and every attribute $j\in [d]$, the observed point is $\widehat{p}_j = p_j + \xi_{p,j}$ where $\xi_{p,j}$ is drawn from a known distribution $D_j$. 
Various choices for $D_j$ have been explored, including  Gaussian distribution~\cite{SECONDINI2020867,ElHelou2019BlindUB}, Laplace distribution~\cite{bun2019simultaneous}, uniform distribution~\cite{agrawal2010foundations,roizman2019flexible}, and Dirac delta distribution ~\cite{Zimmermann2002AnalysisAM}.
Such noise can reflect inherent individual variability—for example, fluctuations in STEM scores across repeated exams~\cite{Baye2016GenderDI,ODea2018GenderDI}, where the mean and variance can be estimated by multiple exams, or from employers making decisions based on statistical information about the groups individuals belong to~\cite{Fang2010TheoriesOS}.
In settings focused on privacy or robustness, noise with known parameters might be deliberately added to data, such as the use of i.i.d. Gaussian noise in the Gaussian mechanism~\cite{dwork2014algorithmic}, or the introduction of i.i.d. Gaussian noise to enhance robustness against adversarial attacks in deep learning~\cite{Ying2019AnOO,li2019certified}.
{The noise level might also be well known or estimable when data is collected using sensors~\cite{primault2014differentially,subramanian2022differential}; see Sections \ref{sec:def} and \ref{sec:scenarios} for further discussions.}
Numerous studies have examined the effects of modeled noise, specifically Gaussian noise~\cite{Yu2009ClassDF,iam-on2020clustering}, on clustering tasks~\cite{dave1993robust, garcia2008general,iam-on2020clustering}. 
They analyze the relationship between the level of noise and the performance of clustering algorithms, showing that a small amount of noise can actually benefit centroid-based clustering methods~\cite{iam-on2020clustering}. 

Given the widespread use of coresets in clustering problems, it is crucial to explore the possibility of constructing compact coresets that remain effective in the presence of noise.
The effectiveness of a coreset $S$ is usually measured via the approximation quality of $S$'s optimal center set $\mathsf{C}(S)$ in $P$:
\begin{equation}\label{eq:quality}
\textstyle
r_P(\mathsf{C}(S)) := \frac{\cost_z(P, \mathsf{C}(S))}{\min_{C\subset \R^d: |C| = k} \cost_z(P, C)} = \frac{\cost_z(P, \mathsf{C}(S))}{\OPT_P}.
\end{equation} 
In the noise-free setting, challenges for evaluating quality $r_P(\mathsf{C}(S))$ lie in the computation of the optimal clustering cost $\OPT_P$, which is NP-hard.
To address this, one traditionally considers a surrogate error metric that bounds the maximum (over all possible center sets) ratio \cite{feldman2011unified,cohenaddad2021new}:
\begin{equation}\label{eq:err}
\textstyle
\err(S, P):= \sup_{C\subset \R^d: |C| = k} \frac{|\cost_z(S,C) - \cost_z(P,C)|}{\cost_z(S,C)}
\end{equation}
This ratio, serving as an upper bound for $r_P(\mathsf{C}(S))$ (see Section \ref{sec:def}), helps derive the minimum size necessary for coreset construction. 
There is a long and productive line of work focused on analyzing the optimal tradeoff between the coreset size $|S|$ and the associated estimation error \cite{cohenaddad2022towards,cohenaddad2022improved,huang2023on} in the noise-free setting.
However, in the noisy setting, there is an additional challenge for evaluating $r_P(\mathsf{C}(S))$: the true underlying dataset $P$ is unobservable.
This leads to a natural  question: {\em Can the traditional surrogate error
$\err$ guide coreset construction when the observed data is noisy?}

\subsection{Our contributions}
We study the problem of coreset construction for clustering in the presence of noise.
Motivated by the applications mentioned above, we consider a stochastic additive noise model where each $D_j$ (as defined earlier) is parameterized by a known {\em noise-level} $\theta$, constrained by a bounded-moment condition (see Definition~\ref{def:noise1}).
Our first result is adapting the use of the $\err$ measure for coreset construction to this noise model, along with a bound for the coreset's quality $r_P(\mathsf{C}(S))$ (Theorem \ref{thm:err}).
{
To our knowledge, this is the first result to study how coreset performance degrades under noise.}
We show that $\err$ can significantly overestimate coreset error under noise, resulting in overly pessimistic guarantees (Section \ref{sec:def}).

To address this limitation of using $\err$, we introduce a new surrogate metric $\err_{\alpha}$, termed \emph{approximation-ratio} (Equation~\eqref{eq:err_ar}).
Using this new metric, we design a cluster-wise sampling algorithm (Algorithm \ref{alg:our_alg}) that partitions the given noisy dataset into $k$ clusters and takes a uniform sample from each cluster.
We show that, under natural and necessary assumptions on $P$, this new algorithm yields smaller coresets (by a factor of up to $\poly(k)$) and tighter quality guarantees for $r_P(\mathsf{C}(S))$ (Theorem \ref{thm:err_alpha}).
These improvements hinge on distinguishing the influence of noise on the clustering cost and the location of the optimal center set $\mathsf{C}(S)$, and may be of independent interest.
Empirical results, in Section~\ref{sec:empirical}, support our theoretical findings even for datasets that do not meet the assumptions required by our theoretical analysis (see, e.g., Table \ref{tab:main-result-adult}), and in scenarios involving non-i.i.d. noise across dimensions (see, e.g., Table \ref{tab:adult-nonindependent}).
Overall, our algorithm effectively generates small coresets with theoretical quality bounds, which can be integrated into clustering frameworks, enhancing robustness in noisy environments.

\subsection{Related work}
\label{sec:related}
\textbf{Coresets for clustering (with noise).}
There is a substantial body of work on coreset construction for \kzC\ across various metric spaces, including Euclidean spaces, doubling metrics, graph shortest-path metrics, and general discrete metrics~\cite{harpeled2004on,feldman2011unified,feldman2013turning,braverman2016new,huang2018epsilon,cohenaddad2021new,cohenaddad2022towards,huang2023on}.
An alternative notion called \emph{weak coresets} has also been studied~\cite{munteanu2018coresets,cohenaddad2021new,danos2021coresets}, which only require preservation of $O(1)$-approximate solutions. 
However, weak coresets offer no significant improvement in size compared to standard coresets in practice.

In the context of noise, most prior work focuses on identifying and removing outliers~\cite{feldman2012data,huang2018epsilon,huang2023near}. 
These approaches assume that noise manifests as identifiable outliers, and the goal is to build a robust coreset by filtering them out.
By contrast, our model assumes full observation of only noisy data, with no oracle access to the clean underlying distribution. 
This represents a conceptual shift: instead of excluding noise, our algorithms construct coresets that directly accommodate it.
This setting reflects realistic scenarios in which noise and signal cannot be reliably disentangled—necessitating coreset constructions that remain robust without preprocessing or filtering steps.

\smallskip
\noindent
\textbf{Clustering under noise.}
Clustering with noisy data has been studied extensively, typically under two paradigms. 
The first assumes noise is generated stochastically from a known distribution~\cite{dave1993robust,garcia2008general}, while the second considers adversarial noise with bounded magnitude or cardinality~\cite{balcan2008discriminative,david2014clustering,balcan2016clustering,kushagra2016finding,kushagra2017provably}.
Our work aligns more closely with the first setting.

Broadly, this literature branches into two directions. 
The first investigates the \emph{robustness} of existing clustering algorithms to noise—quantifying their performance degradation in noisy environments~\cite{dave1991characterization,hampel1971general,hennig2008dissolution,ackerman2013clustering,arthur2011smoothed}.
The second direction designs new algorithms that can tolerate or adapt to noise~\cite{cuesta1997trimmed,dave1993robust,david2014clustering,kushagra2017provably}.
A related line of work in robust clustering allows the algorithm to discard a fraction of points as outliers~\cite{charikar2001algorithms,chen2008constant,gupta2017local,schelling2018kmn,friggstad2019approximation,roizman2019flexible,statman2020k}.
While our work shares a similar structural noise model with some previous works, the goal is significantly different.
Prior works mostly ask whether a standard algorithm can efficiently solve the problem on the noisy dataset $\widehat{P}$.
For instance, \cite{arthur2011smoothed} uses Gaussian perturbations for $k$-means and shows that $k$-means converges quickly on the noisy data.
In contrast, we aim to construct a coreset from $\widehat{P}$ that approximates the clustering cost on the original, unobserved dataset $P$. 
Here, noise is the main challenge, not a tool for tractability.


\section{Noise models and metrics for coreset quality}
\label{sec:def}
This section formalizes the noisy data models we consider, defines the ideal (but unobservable) metric for coreset quality, and introduces two surrogate metrics-$\err$ (standard) and $\err_\alpha$  (ours). We compare their behavior under noise and motivate our new construction.

\smallskip
\noindent {\bf Noise distribution and models.}
Given a probability distribution $D$ on $\R$ with mean $\mu$ and variance $\sigma^2$, $D$ is said to satisfy \emph{Bernstein condition} \cite{bernstein1946the,Bennett1962ProbabilityIF,Fan2012SharpLD} if there exists some constant $b > 0$ such that for every integer $i\geq 3$, $\Exp_{X\sim D}\left[|X-\mu|^i\right] \leq \frac{1}{2} i! \sigma^2 b^{i-2}, i = 3,4,\ldots$.
This condition imposes an upper bound on each moment of $D$, allowing control over tail behaviors, and we consider such noise distributions in this paper.
Several well-known distributions satisfy the Bernstein condition, including the Gaussian distribution, Laplace distribution, sub-Gaussian distributions, sub-exponential distributions, and so on~\cite{vershynin2020high}; see Section \ref{sec:bernstein} for a discussion.
We begin with a probabilistic noise model that reflects real-world data corruption: with some probability, each point either remains untouched or receives independent additive noise on each coordinate.
\begin{definition}[\bf{Noise model \rom{1}}]
    \label{def:noise1}
    Let $\theta\in [0,1]$ be a noise parameter and $D_1, \ldots, D_d$ be probability distributions on $\R$ with mean $0$ and variance $1$ that satisfy the Bernstein condition.\footnote{The variance of each $D_j$ can be fixed to any $t > 0$ since we can scale each point in the dataset by $\frac{1}{t}$.
    }
    Every point $\widehat{p}$ ($p\in P$) is i.i.d. drawn from the following distribution:  1) with probability $1-\theta$, $\widehat{p} = p$; 2) with probability $\theta$, for every $j\in [d]$, $\widehat{p}_{j} = p_{j} + \xi_{p,j}$ where $\xi_{p,j}$ is drawn from $D_j$.
\end{definition}
\noindent
When $\theta = 0$, $\widehat{P}=P$ and 
as $\theta \to 1$,  $\widehat{P}$ becomes increasingly noisy.
Note that this noise model roughly selects a fraction $\theta$ of underlying points and adds an independent noise to each feature $j\in [d]$ that is drawn from a certain distribution $D_j$.

We also consider the following model, called \emph{noise model \rom{2}}:
For every $i \in [n]$ and $j \in [d]$, $\widehat{p}_{i,j} = p_{i,j} + \xi_{p,j}$, where $\xi_{p,j}$ is drawn from $D_j$, a probability distribution on $\mathbb{R}$ with mean 0, variance $\sigma^2$, and satisfying the Bernstein condition.
The main difference from noise model \rom{1} is that we add noise to every coordinate of each point with a changeable variance $\sigma^2$ instead of 1.
Moreover, we consider more general noise models where the noise is non-independent across dimensions.
For example, the covariance matrix of each noise vector $\xi_p$ is $\Sigma \in \mathbb{R}^{d \times d}$, which extends $\Sigma = \sigma^2 \cdot I_d$ when each $D_j = N(0, \sigma^2)$ under the noise model \rom{2}.

Several applications mentioned in Section~\ref{sec:intro} use these noise models. 
For example, setting $\theta = 1$ and each $D_j$ as a Gaussian distribution corresponds to the Gaussian mechanism in differential privacy \cite{dwork2014algorithmic,li2019certified}. 
The noise parameter $\theta$ is usually known in real-world scenarios. In domains like healthcare, location services, and financial analytics, adding Laplace or Gaussian noise to data points is a common strategy to protect privacy \cite{primault2014differentially,subramanian2022differential}, creating a noisy dataset, $\widehat{P}$. In such cases, $\theta$ is known in advance, removing the need to compute it during coreset construction.
Other scenarios where $\theta$ is known include: 1) measurement errors in sensor data, and 2) noise in STEM exam scores (see Section \ref{sec:scenarios}). 

\smallskip
\noindent
{\bf The ideal metric.}
Let $\calC$ denote the collection of all subsets $C \subset \R^d$ of size $k$, termed {\em center sets}.
For any dataset $X \subset \R^d$, let $\mathsf{C}(X)$ denote the optimal center set for \kzC, i.e., 
$\mathsf{C}(X) := \arg \min_{C \in \calC} \cost_z(X, C)$.
Given a dataset $P \subset \R^d$ of size $n$, a coreset is a weighted set $S \subset \R^d$ with a function $w : S \rightarrow \R_{\geq 0}$ such that for all $C \in \calC$,
\begin{equation}\label{eq:coreset}
\textstyle
\cost_z(S, C) := \sum_{x \in S} w(x) \cdot d^z(x, C) \in (1 \pm \eps) \cdot \cost_z(P, C).
\end{equation}
Ideally, we would measure the effectiveness of a coreset $S$ by how well the clustering solution it yields on $S$ generalizes to the true dataset $P$.
This leads to the {\em ideal quality measure}:
$$ \textstyle
r_P(\mathsf{C}(S)) := \frac{\cost_z(P, \mathsf{C}(S))}{\OPT_P},
$$
where $\OPT_P := \min_{C \in \calC} \cost_z(P, C)$. Note that $r_P(\mathsf{C}(S)) \geq 1$, with equality if $\mathsf{C}(S) = \mathsf{C}(P)$.
However, computing $\mathsf{C}(S)$ is generally NP-hard, and even estimating $r_P(\mathsf{C}(S))$ is infeasible in the noisy setting where $P$ is unobservable.
To account for approximate clustering, we define for any $\alpha \geq 1$ the set of $\alpha$-approximate center sets for $S$:
$
\mathcal{C}_\alpha(S) := \{ C \in \calC : r_S(C) \leq \alpha \}$,
where \( r_S(C) := \cost_z(S, C)/\OPT_S \). We then define the {\em worst-case quality} over this set:
\begin{equation}\label{eq:quality_alpha}
\textstyle
r_P(S, \alpha) := \max_{C \in \mathcal{C}_\alpha(S)} \frac{\cost_z(P, C)}{\OPT_P}.
\end{equation}
This function is monotonically increasing in $\alpha$, and $r_P(S, 1) = r_P(\mathsf{C}(S))$.
We focus on settings where $\alpha$ is close to 1, such as when a PTAS is available for \kzC. As discussed in Section~\ref{sec:intro}, directly evaluating $r_P(S, \alpha)$ is computationally hard and, in noisy settings, fundamentally infeasible—motivating the need for surrogate metrics.

\smallskip
\noindent
{\bf The metric \(\err\).}
In the noise-free setting, a standard surrogate for \(r_P(S, \alpha)\) is the relative error:
$$ \textstyle
\err(S, P) := \sup_{C \in \calC} \frac{|\cost_z(S, C) - \cost_z(P, C)|}{\cost_z(S, C)}.$$
This quantity provides a bound on how much the clustering cost on \(S\) deviates from that on \(P\), uniformly over all center sets. In particular, for the optimal center set \(\mathsf{C}(S)\) of \(S\), we have:
$
\frac{|\cost_z(S, \mathsf{C}(S)) - \cost_z(P, \mathsf{C}(S))|}{\cost_z(S, \mathsf{C}(S))} \leq \err(S, P)$,
which implies
$
\cost_z(P, \mathsf{C}(S)) \in \left[\frac{1}{1 + \err(S, P)}, (1 + \err(S, P))\right] \cdot \cost_z(S, \mathsf{C}(S))$.
Since \(\mathsf{C}(S)\) is optimal for \(S\), this yields a lower bound on \(\OPT_P\) and extends to all \(\alpha\)-approximate center sets:
\[ \textstyle 
\forall C \in \calC_\alpha(S), \quad \cost_z(P, C) \leq (1 + \err(S, P)) \cdot \cost_z(S, C) \leq (1 + \err(S, P)) \cdot \alpha \cdot \cost_z(S, \mathsf{C}(S)).
\]
Combining these gives:
\begin{align} \label{eq:err_to_r} \textstyle
r_P(S, \alpha) \leq (1 + \err(S, P))^2 \cdot \alpha.
\end{align}
This justifies the use of \(\err(S, P)\) as a surrogate for \(r_P(S, \alpha)\) in the noise-free setting.

In the noisy setting, however, \(P\) is unobservable. A natural alternative is to compute \(\err(S, \widehat{P})\) instead. This raises the question: how does \(\err(S, \widehat{P})\) relate to the true coreset quality \(r_P(S, \alpha)\)? We explore this relationship and show how \(\err(S, \widehat{P})\) can still guide coreset construction (see Theorem~\ref{thm:err}).

\smallskip
\noindent
{\bf The new metric.}
While \(\err(S, P)\) is a valid surrogate in the noise-free setting, its adaptation to noisy data via \(\err(S, \widehat{P})\) is problematic: noise inflates clustering costs on \(\widehat{P}\), weakening its connection to the true quality \(r_P(S, \alpha)\).
To mitigate this, we introduce a new surrogate metric that compares the {\em relative} quality of a center set on \(P\) versus \(S\):
\begin{equation} \label{eq:err_ar}
\textstyle
\err_{\alpha}(S, P) := \sup_{C \in \calC_{\alpha}(S)} \frac{r_P(C)}{r_S(C)} - 1.
\end{equation}
Since \(r_S(C) \leq \alpha\), we have \(\err_\alpha(S, P) \geq \sup_{C \in \calC_\alpha(S)} \frac{r_P(C)}{\alpha} - 1\), and therefore
$
r_P(S, \alpha) \leq (1 + \err_\alpha(S, P)) \cdot \alpha$.
This bound justifies \(\err_\alpha\) as a surrogate for \(r_P(S, \alpha)\). The metric is monotonic in \(\alpha\), independent of the noise distribution, and aligns better with coreset quality under noise than \(\err\), which measures absolute cost deviation.

In practice, we compute \(\err_\alpha(S, \widehat{P})\) as a proxy for \(\err_\alpha(S, P)\). We analyze this in Theorem~\ref{thm:err_alpha}, showing that it can guide coreset construction under noise. Notably, \(\err_\alpha\) extends prior approximation-ratio-based coreset ideas~\cite{cohenaddad2021improved, huang2023power} to the noisy setting.

\smallskip
\noindent
{\bf Comparing two metrics under noise.}
We illustrate the advantage of \(\err_\alpha\) over \(\err\) through a simple \oneMeans\ example in \(\mathbb{R}\), with \(k = d = 1\), \(z = 2\), and \(\alpha = 1\).

Let \(P = P_- \cup P_+\), where \(P_-\) has \(n/2\) points at \(-1\) and \(P_+\) has \(n/2\) points at \(1\). The optimal center is \(\mathsf{C}(P) = 0\) with \(\OPT_P = n\).
Now consider \(\widehat{P}\), generated under noise model~\rom{1} with \(\theta = 1\) and \(D_j = \mathcal{N}(0,1)\). This adds i.i.d. Gaussian noise to each point, inflating clustering cost by roughly \(2n\) for any center \(c\), i.e., 
$
\cost_z(\widehat{P}, c) - \cost_z(P, c) \approx 2n$.
Hence, \(\err(\widehat{P}, P) \approx \frac{2n}{3n} = \frac{2}{3}\), yielding a  bound:
\[ \textstyle
r_P(\widehat{P}, 1) \leq (1 + \err(\widehat{P}, P))^2 \lesssim \frac{25}{9}.
\]
In contrast, because the noise \(\xi_p\) averages out, the empirical center satisfies \(\mathsf{C}(\widehat{P}) \in [-\sqrt{1/n}, \sqrt{1/n}]\) with high probability, and:
$
\cost_z(P, \mathsf{C}(\widehat{P})) \leq n + 1$.
This implies \(\err_1(\widehat{P}, P) \leq \frac{1}{n}\), and therefore:
\[ \textstyle 
r_P(\widehat{P}, 1) \leq 1 + \frac{1}{n}.
\]

\noindent
Thus, \(\err_\alpha\) provides a much tighter estimate of \(r_P\) under noise, by compensating for the uniform cost inflation that \(\err\) fails to account for.
We elaborate on this example and provide additional comparisons in Section~\ref{sec:example}.

Finally, we note that our setting differs fundamentally from “robust” coreset models~\cite{feldman2012data,huang2018epsilon,huang2023near}, which assume direct access to the clean dataset \(P\). In contrast, we construct coresets directly from noisy observations \(\widehat{P}\), as discussed in Section~\ref{sec:related}.

\section{Theoretical results}
\label{sec:theoretical}

This section gives theoretical guarantees for coreset construction under noise. We present two algorithms for \kMeans\ (\(z = 2\)) using noise model~\rom{1}, based on the surrogate metrics \(\err\) (Theorem~\ref{thm:err}) and \(\err_\alpha\) (Theorem~\ref{thm:err_alpha}). %
While both yield bounds on coreset quality, the \(\err_\alpha\)-based method achieves smaller coresets and tighter guarantees under mild structural assumptions.
We write \(\cost\) for \(\cost_2\) throughout. Extensions to other noise models and \kzC\ are in Section~\ref{sec:extension}.

We begin with \(\err\)-based coresets. Theorem~\ref{thm:err} extends its use to noisy data and bounds \(r_P(S, \alpha)\) in terms of \(\theta\), \(d\), and \(n\). See Section~\ref{sec:proof_err} for the proof.

\begin{theorem}[\bf{Coreset using $\err$}]
\label{thm:err}
Let \(\widehat{P}\) be drawn from \(P\) via noise model~\rom{1} with known \(\theta \geq 0\).
Let $\eps \in (0,1)$ and fix $\alpha \geq 1$.
Let $\calA$ be an algorithm that constructs a weighted subset $S\subset \widehat{P}$ for \kMeans\ of size $\calA(\eps)$ and with guarantee $\err(S, \widehat{P}) \leq \eps$.
Then with probability at least 0.9,$$\textstyle \err(S, P) \leq \eps + O(\frac{\theta n d}{\OPT_P} + \sqrt{\frac{\theta n d}{\OPT_P}}) \mbox{ and } r_P(S, \alpha) \leq (1 + \eps + O(\frac{\theta n d}{\OPT_P} + \sqrt{\frac{\theta n d}{\OPT_P}}))^2 \cdot \alpha.$$
\end{theorem}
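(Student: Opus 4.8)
The plan is to bound $\err(S,P)$ and then feed the result into the noise-free reduction \eqref{eq:err_to_r}, which already gives $r_P(S,\alpha)\le (1+\err(S,P))^2\cdot\alpha$. So the whole task reduces to showing $\err(S,P)\le \eps + O(\theta n d/\OPT_P + \sqrt{\theta n d/\OPT_P})$ with probability $0.9$. First I would split, for every $C\in\calC$, the deviation by the triangle inequality $|\cost(S,C)-\cost(P,C)|\le |\cost(S,C)-\cost(\widehat P,C)| + |\cost(\widehat P,C)-\cost(P,C)|$. Dividing by $\cost(S,C)$ and invoking the hypothesis $\err(S,\widehat P)\le\eps$ controls the first term by $\eps$, so what remains is a bound on $|\cost(\widehat P,C)-\cost(P,C)|/\cost(S,C)$ that is uniform over all $C$.

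The crucial observation, which lets me avoid any union bound or covering argument over the infinite family $\calC$, is a deterministic pointwise comparison. For each point $p$ and each $C$, the metric triangle inequality gives $|d(\widehat p,C)-d(p,C)|\le \|\widehat p-p\|=\|\xi_p\|$ (with $\xi_p=0$ when $p$ is not perturbed). Viewing $(d(\widehat p,C))_{p}$ and $(d(p,C))_{p}$ as vectors in $\R^n$ and applying the $\ell_2$ triangle inequality yields, simultaneously for all $C$, $|\sqrt{\cost(\widehat P,C)}-\sqrt{\cost(P,C)}|\le \sqrt{A}$, where $A:=\sum_{p}\|\xi_p\|^2$ is the total noise energy and is independent of $C$. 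This single inequality is the workhorse: it converts the hard uniform-deviation problem into control of the scalar $A$ plus elementary algebra.

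Next I would control $A$ probabilistically. Under noise model~\rom{1} each point is perturbed independently with probability $\theta$ and each coordinate of $\xi_p$ has variance $1$, so $\E[A]=\theta n d$; a one-line Markov bound gives $A\le 10\,\theta n d$ with probability $0.9$. (The Bernstein condition is what would sharpen this to an exponential tail, but it is not needed for the stated $0.9$-probability guarantee.) Writing $\hat a=\cost(\widehat P,C)$ and $a=\cost(P,C)$, the factorization $|\hat a-a|=|\sqrt{\hat a}-\sqrt a|\,(\sqrt{\hat a}+\sqrt a)\le \sqrt A\,(2\sqrt{\hat a}+\sqrt A)$ gives $|\hat a-a|/\hat a\le 2\sqrt A/\sqrt{\hat a}+A/\hat a$. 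Since the coreset hypothesis forces $\cost(S,C)\ge \tfrac{1}{1+\eps}\hat a$ and $\hat a\ge \OPT_{\widehat P}$, and since the right-hand side is decreasing in $\hat a$, it is maximized over $C$ by replacing $\hat a$ with $\OPT_{\widehat P}$.

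The last ingredient is to pass from $\OPT_{\widehat P}$ to $\OPT_P$. Applying the same square-root inequality at $C=\mathsf C(\widehat P)$ and $C=\mathsf C(P)$ gives $|\sqrt{\OPT_{\widehat P}}-\sqrt{\OPT_P}|\le \sqrt A$, hence $\OPT_{\widehat P}\ge \tfrac14\OPT_P$ whenever $\OPT_P\ge 4A$. Combining the displays yields $\err(S,P)\le \eps + O(\sqrt{A/\OPT_P}+A/\OPT_P)$, and substituting $A=O(\theta n d)$ finishes the bound on $\err(S,P)$; the $r_P(S,\alpha)$ bound then follows immediately from \eqref{eq:err_to_r}. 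I expect the main obstacle to be precisely this denominator control: when $\OPT_P\lesssim \theta n d$ (signal weaker than aggregate noise) the lower bound on $\OPT_{\widehat P}$ degenerates and $\cost(S,C)$ can no longer be safely bounded below, so that regime must be treated separately—though there the target bound is already $\Omega(1)$, so only a crude argument is needed to keep $\err(S,P)$ finite. The elegant part, by contrast, is that uniformity over all center sets is dispatched entirely by the deterministic $\sqrt{\cost}$-triangle inequality, leaving only a single scalar concentration to prove.
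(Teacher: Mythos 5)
Your proposal is correct and follows essentially the same route as the paper: decompose through $\widehat P$ (the paper's composition lemma), reduce the uniform-over-$C$ deviation to the single scalar $A=\sum_{p\in P}\|\xi_p\|_2^2$ --- your $\ell_2$ triangle inequality $|\sqrt{\cost(\widehat P,C)}-\sqrt{\cost(P,C)}|\le\sqrt A$ is exactly the paper's pointwise bound $|d^2(p,C)-d^2(\widehat p,C)|\le\|\xi_p\|_2^2+2\|\xi_p\|_2\, d(p,C)$ followed by Cauchy--Schwarz on the cross term --- then concentrate $A$ near $\theta nd$ and finish with the noise-free reduction $r_P(S,\alpha)\le(1+\err(S,P))^2\alpha$. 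The only differences are cosmetic (Markov in place of the paper's Chebyshev bound for $A$, and an explicit $\OPT_{\widehat P}\ge\frac14\OPT_P$ step where the paper invokes $\cost(\widehat P,C)=O(\cost(P,C))$), and the caveat you flag about the regime $\OPT_P\lesssim\theta nd$ is shared by the paper, whose proof likewise assumes $\OPT_P>\theta nd$.
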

\noindent
To ensure \(\err(S, \widehat{P}) \leq \eps\), we may use an importance sampling algorithm~\cite{bansal2024sensitivity} with coreset size
\begin{align}\label{eq:size_err} \textstyle
\mathcal{A}(\eps) = \widetilde{O}(\min\{k^{1.5} \eps^{-2},\, k \eps^{-4}\}),
\end{align}
which matches the state of the art in the noise-free setting.
However, the resulting bounds for \(\err(S, P)\) and \(r_P(S, \alpha)\) incur an additive term \(O(\frac{\theta n d}{\OPT_P} + \sqrt{\frac{\theta n d}{\OPT_P}})\), due to the gap \(\err(\widehat{P}, P)\) (see Lemma~\ref{lm:err_bound}).
As discussed in Section~\ref{sec:def}, this gap can be overly conservative—especially when noise uniformly inflates clustering cost. In such cases, \(\err_\alpha(\widehat{P}, P) \ll \err(\widehat{P}, P)\), as shown in our earlier example.
While this inflation always occurs when \(k = 1\), it may not persist for general \(k\), where noise can change point-to-center assignments between \(P\) and \(\widehat{P}\). To address this, we introduce structural assumptions that preserve assignments under noise.

\smallskip
\noindent
{\bf Assumptions on data.}
To theoretically separate the performance of \(\err\) and \(\err_\alpha\), we impose mild structural assumptions on the dataset to ensure that point-to-center assignments remain stable under noise.
A natural but strong assumption is to posit a generative model, such as a Gaussian mixture \(\sum_{\ell=1}^k \frac{1}{k} N(\mu_\ell, 1)\), where the means \(\mu_\ell \in \R^d\) are well separated, e.g., \(\|\mu_\ell - \mu_{\ell'}\| \geq n\) \cite{feldman2011scalable,huang2021coresets}. This would make the assignments between \(P\) and \(\widehat{P}\) nearly identical. However, this is more than we require—we instead use structural assumptions that capture the relevant properties directly.

The first is {\em cost stability}, a widely studied notion in clustering and coreset literature~\cite{Ostrovsky2006TheEO,Awasthi2010StabilityYA,Jaiswal2012AnalysisOK,Agarwal2013kMeansUA,CohenAddad2017OnTL,bansal2024sensitivity}. 
Let \(\OPT_P(m)\) denotes the optimal cost of \(m\)-means on \(P\).
For \(\gamma > 0\), a dataset \(P\) is \(\gamma\)-cost-stable if
$$\textstyle \frac{\OPT_P(k-1)}{\OPT_P(k)} \geq 1 + \gamma.$$
As \(\gamma\) increases, the clusters are more well-separated, making assignments more robust to noise. In our setting, cost stability ensures that the assignment changes between \(P\) and \(\widehat{P}\) remain limited, and we specify the required value of \(\gamma\) as a function of the noise level \(\theta\) in Assumption~\ref{assum:dataset}.
This assumption is also necessary to distinguish 
$\err(\widehat{P}, P)$ from $\err_\alpha(\widehat{P}, P)$.
As shown in Appendix~\ref{sec:necessity}, when cost stability is weak, the two metrics can behave similarly; e.g., in a 3-means instance with \(\gamma = 1\), we find \(\err(\widehat{P}, P) \approx \err_1(\widehat{P}, P)\).

We also assume that \(P\) does not contain strong outliers. Let \(P_1, \dots, P_k\) denote the partition of \(P\) induced by its optimal center set \(\mathsf{C}(P)\). For each cluster \(P_i\), define the average and maximum radius:
\[ \textstyle
\overline{r}_i := \sqrt{\frac{1}{|P_i|} \sum_{p \in P_i} d^2(p, \mathsf{C}(P)_i)}, \quad
r_i := \max_{p \in P_i} d(p, \mathsf{C}(P)_i).
\]
We assume \(r_i \leq 8 \overline{r}_i\) for all \(i\), which rules out heavy-tailed clusters and helps distinguish noise from genuine outliers. This choice of $8$ is made to simplify analysis and is satisfied by real-world datasets; see Table~\ref{tab:assumption check}.

\begin{assumption}[\bf Cost stability and limited outliers]
\label{assum:dataset}
Given \(\alpha \geq 1\) and \(\theta \in [0,1]\), assume \(P\) is \(\gamma\)-cost-stable with
\[ \textstyle
\gamma = O(\alpha)\cdot \left(1 + \frac{\theta nd \log^2(kd/\sqrt{\alpha - 1})}{\OPT_P}\right),
\]
and that \(r_i \leq 8 \overline{r}_i\) for all \(i \in [k]\).
\end{assumption}

\noindent
Under these assumptions, the following theorem gives performance guarantees for a coreset algorithm based on the \(\err_\alpha\) metric. 
The proof appears in Section~\ref{sec:proof_err_AR}.
\begin{theorem}[\bf{Coreset using the $\err_\alpha$ metric}]
\label{thm:err_alpha}
Let $\widehat{P}$ be an observed dataset drawn from $P$ under the noise model \rom{1} with known parameter $\theta \in [0, \frac{\OPT_P}{nd}]$.
Let $\eps \in (0,1)$ and fix $\alpha \in [1, 2]$.
Under Assumption \ref{assum:dataset}, there exists a randomized algorithm that constructs a weighted $S\subset \widehat{P}$ for \kMeans\ of size $O(\frac{k \log k}{\eps - \frac{\sqrt{\alpha - 1} \theta n d}{\alpha \OPT_P}} + \frac{(\alpha-1)k\log k}{(\eps - \frac{\sqrt{\alpha - 1} \theta n d}{\alpha \OPT_P})^2})$ and with guarantee $\err_{\alpha}(S, \widehat{P}) \leq \eps$ with probability at least 0.99.
Moreover,
$$\textstyle \err_\alpha(S, P) \leq \eps + O(\frac{\theta k d}{\OPT_P} + \frac{\sqrt{\alpha - 1}}{\alpha}\cdot \frac{\sqrt{\theta k d \OPT_P} + \theta n d}{\OPT_P}) \mbox{ and } $$ 
$$ \textstyle r_P(S, \alpha) \leq (1 + \eps + O(\frac{\theta k d}{\OPT_P} + \frac{\sqrt{\alpha - 1}}{\alpha}\cdot \frac{\sqrt{\theta k d \OPT_P} + \theta n d}{\OPT_P})) \cdot \alpha.$$
\end{theorem}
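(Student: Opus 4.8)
The plan is to factor the target ratio, for any $C \in \calC_\alpha(S)$, as
$$\frac{r_P(C)}{r_S(C)} = \underbrace{\frac{r_P(C)}{r_{\widehat{P}}(C)}}_{\text{noise factor}} \cdot \underbrace{\frac{r_{\widehat{P}}(C)}{r_S(C)}}_{\text{sampling factor}},$$
so that the analysis splits cleanly into an in-sample (noise-free) part and a noise-correction part. The sampling factor is at most $1 + \err_{\alpha}(S, \widehat{P}) \le 1 + \eps$ by construction, and controlling it also fixes the coreset size; the noise factor is where Assumption~\ref{assum:dataset} and the advantage of $\err_\alpha$ over $\err$ enter. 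Once the noise factor is bounded uniformly over $\calC_\alpha(S)$, multiplying the two factors gives the stated bound on $\err_\alpha(S, P)$, and the surrogate inequality $r_P(S, \alpha) \le (1 + \err_\alpha(S, P))\cdot\alpha$ from Section~\ref{sec:def} immediately yields the quality guarantee. Thus the real work is (i) the sampling guarantee with its size, and (ii) the uniform control of the noise factor.

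\textbf{Sampling guarantee and size.} First I would instantiate the cluster-wise uniform sampling algorithm: compute a near-optimal $k$-clustering of $\widehat{P}$, partition $\widehat{P}$ into $\widehat{P}_1, \dots, \widehat{P}_k$, and draw a uniform sample $S_i$ from each $\widehat{P}_i$ reweighted by $|\widehat{P}_i|/|S_i|$, so that $\cost(S, C)$ is an unbiased estimator of $\cost(\widehat{P}, C)$ for every $C$. To convert unbiasedness into the uniform bound $\err_\alpha(S, \widehat{P}) \le \eps$, I would adapt the approximation-ratio coreset analysis of \cite{cohenaddad2021improved, huang2023power}: on each cluster the estimator concentrates for a fixed $\alpha$-approximate $C$, and a net/union-bound argument over $\calC_\alpha$ lifts this to the supremum. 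The linear term $k\log k/\eps'$ reflects the leading (bias-free) deviation and the quadratic term $(\alpha-1)k\log k/\eps'^2$ the variance contribution, which grows with the size of the approximate-solution family; the effective accuracy $\eps' = \eps - \frac{\sqrt{\alpha-1}\,\theta n d}{\alpha \OPT_P}$ is the slack left after reserving budget for the way noise perturbs $\OPT_S$ relative to the clean optimum, and reconciling this bookkeeping against the claimed guarantee $\err_\alpha(S,\widehat P)\le\eps$ is one of the points needing care.

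\textbf{Bounding the noise factor.} For a fixed $C$ with assignment $p \mapsto c_p$, noise model~\rom{1} gives the exact decomposition
$$\cost(\widehat{P}, C) - \cost(P, C) = \sum_{p\,:\,\text{noisy}} \left( \|\xi_p\|^2 + 2\langle \xi_p,\, p - c_p\rangle \right),$$
whose expectation is the \emph{uniform inflation} $\theta n d$ (variance $1$ on each of $d$ coordinates over a $\theta$-fraction of the $n$ points), with the cross terms vanishing in expectation. The Bernstein condition concentrates $\sum\|\xi_p\|^2$ around $\theta n d$ up to $O(\sqrt{\theta n d})$ and the cross term around $0$ up to $O(\sqrt{\theta\,\cost(P,C)})$, while the limited-outlier assumption $r_i \le 8\overline{r}_i$ prevents any single point or coordinate from defeating these concentrations. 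The decisive observation is that the same inflation $\theta n d$ also appears in $\OPT_{\widehat{P}}$ — here $\gamma$-cost-stability with the prescribed $\gamma$ forces the optimal assignments of $P$ and $\widehat{P}$ to coincide, so that $\OPT_{\widehat{P}} = \OPT_P + \theta n d + O(\sqrt{\theta n d}+\sqrt{\theta \OPT_P})$ — and therefore \emph{cancels} in the ratio $r_P(C)/r_{\widehat{P}}(C)$. Writing $a = \cost(P, C) \in [\OPT_P, \alpha \OPT_P]$, the surviving residual is driven by the excess cost $a - \OPT_P \le (\alpha-1)\OPT_P$ and by the difference of cross terms at $C$ and at $\mathsf{C}(P)$, whose variance scales like $\theta\,(\cost(P,C) - \OPT_P) \le \theta(\alpha-1)\OPT_P$; this is precisely the mechanism producing the $\sqrt{\alpha-1}$ factor (rather than $\alpha-1$) and confining the global $\theta n d$ to the $\frac{\sqrt{\alpha-1}}{\alpha}$-weighted term, while the unconditional leading residual is controlled at cluster granularity, giving the $\frac{\theta k d}{\OPT_P}$ term.

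\textbf{Main obstacle and conclusion.} The hard part will be making the two concentration statements \emph{uniform} over the infinite family $\calC_\alpha(S)$ rather than for a single $C$: I expect to reduce the supremum to controlling the excess-cost random variable and the centroid displacements $\{c_p - c'_p\}$ between $C$ and $\mathsf{C}(P)$, cover the resulting low-complexity family with a net, and take a union bound, absorbing the logarithmic factors. Establishing that $\gamma = O(\alpha)\,(1 + \theta n d\,\log^2(kd/\sqrt{\alpha-1})/\OPT_P)$ indeed forces assignment agreement between $P$ and $\widehat{P}$ with high probability — so that the cancellation of $\theta n d$ is legitimate \emph{simultaneously} for every approximate solution — is the crux, and is exactly where the precise form of $\gamma$ is consumed. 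Having bounded the noise factor by $1 + O\!\left(\frac{\theta k d}{\OPT_P} + \frac{\sqrt{\alpha-1}}{\alpha}\cdot\frac{\sqrt{\theta k d \OPT_P} + \theta n d}{\OPT_P}\right)$ uniformly, I would multiply by the sampling factor $1+\eps$ to obtain the bound on $\err_\alpha(S,P)$, and then invoke $r_P(S,\alpha) \le (1 + \err_\alpha(S,P))\cdot\alpha$ to finish.
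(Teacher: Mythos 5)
Your high-level intuition (the $\theta n d$ inflation cancels in the ratio; the sampling and noise contributions separate) matches the paper's motivation, but the proof route you sketch has gaps that the paper explicitly identifies and avoids. First, the factorization $\frac{r_P(C)}{r_S(C)} = \frac{r_P(C)}{r_{\widehat P}(C)}\cdot\frac{r_{\widehat P}(C)}{r_S(C)}$ runs into the mismatch that a $C\in\calC_\alpha(S)$ need only lie in $\calC_{\alpha(1+\eps)}(\widehat P)$, so any uniform bound on the noise factor must hold over a strictly larger family than $\calC_\alpha(\widehat P)$; the paper flags exactly this failure of naive composition and instead composes through the trimmed set $P'$. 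More seriously, your bound on the noise factor takes $a=\cost(P,C)\in[\OPT_P,\alpha\OPT_P]$ as given for every $C\in\calC_\alpha(S)$ — but that is essentially the conclusion $r_P(S,\alpha)\le\alpha$ you are trying to prove, so the argument is circular as written. The paper breaks this circularity with a geometric localization lemma (Lemma~\ref{lemma:structural property of C(P')}): cost stability forces every center of every $C\in\calC_\alpha(S)$ into a small ball around the corresponding $c_i^\star$ with unchanged point-to-center assignments, after which the excess cost is written \emph{exactly} as $\sum_i n_i\|c_i-\mu(P_i)\|_2^2/\OPT_P$ and bounded by a triangle-inequality chain $\|\mu(P_i)-\mu(P_i')\|+\|\mu(P_i')-\mu(S_i)\|+\|\mu(S_i)-c_i\|$; the $\frac{\theta kd}{\OPT_P}$ term is center drift (Lemma~\ref{lemma:Bound center movement}), not a cost-difference cancellation.

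Second, your algorithm samples uniformly from the raw noisy clusters $\widehat P_i$, omitting the step the paper calls its key innovation: discarding points outside $B(\widehat c_i, R_i)$ and sampling from $P_i'=\widehat P_i\cap B_i$ (Line~3 of Algorithm~\ref{alg:our_alg}). Without this, noise inflates cluster diameters so that $O(\log k/\eps)$ uniform samples no longer concentrate $\mu(S_i)$ near $\mu(\widehat P_i)$, and highly noisy points migrate across clusters, breaking the assignment stability your "noise factor" cancellation relies on; Lemmas~\ref{lemma:Bound cost of O_i} and~\ref{lemma:err_alpha_step2} exist precisely to show the removed/migrated points are negligible. Finally, the net-plus-union-bound route you propose for uniformity over $\calC_\alpha(S)$ is the mechanism behind the $\err$-based sizes $\widetilde O(\min\{k^{1.5}\eps^{-2},k\eps^{-4}\})$; it is not clear it can deliver the smaller $O(k\log k/\eps+(\alpha-1)k\log k/\eps^2)$ size, which in the paper comes from reducing the supremum over $\calC_\alpha(S)$ to $k$ scalar center-displacement quantities via the localization lemma rather than covering the full family of center sets.
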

\noindent
We consider the regime \(\theta \leq \frac{\OPT_P}{nd}\), ensuring that noise does not dominate the clustering cost, i.e., \(\cost(\widehat{P}, C) = O(\cost(P, C))\).
The coreset size depends on the knowledge of \(\OPT_P\), but we later show how to remove this dependence.
In the special case \(\eps = 0\) and \(\alpha = 1\), the bound becomes
$ \textstyle 
\err_\alpha(S, P) = \err_1(\widehat{P}, P) = O\left(\frac{\theta k d}{\OPT_P}\right)$, 
which is a factor \(k/n\) smaller than the bound \(\err(\widehat{P}, P) = O\left(\frac{\theta n d}{\OPT_P} + \sqrt{\frac{\theta n d}{\OPT_P}}\right)\) from Theorem~\ref{thm:err}. This supports the use of \(\err_\alpha\) under Assumption~\ref{assum:dataset}.
Subsequently, we also provide an interpretation for different terms in the bounds of $\err$ and $\err_\alpha$.

\smallskip
\noindent
{\bf Comparison of coreset performance using two metrics.}
We now compare the coreset size and error bounds for \(r_P(S, \alpha)\) achieved by Theorem~\ref{thm:err} (\ouralgdefault) and Theorem~\ref{thm:err_alpha} (\ouralgar).
Let \(\eps = 1/\poly(k)\) and \(\alpha = 1 + c\eps\) for a constant \(0 < c < 0.5\), ensuring that an \(\alpha\)-approximate center set can be efficiently computed~\cite{kumar2004simple}. Under this setting, the stability parameter in Assumption~\ref{assum:dataset} becomes \(\gamma = O(1 + \log^2(kd/\eps))\), since \(\theta \leq \OPT_P / (nd)\).

\smallskip
\noindent
\emph{Coreset size.} When \(\frac{\sqrt{\alpha - 1} \theta n d}{\alpha \OPT_P} < c\eps/2\), the coreset size from \ouralgar is \(\widetilde{O}(k/\eps)\), which improves over the size \(\widetilde{O}(\min\{k^{1.5}/\eps^2,\, k/\eps^4\})\) of \ouralgdefault by a factor of \(\sqrt{k}/\eps\).

\smallskip
\noindent
\emph{Bound on \(r_P\).} The error bound in Theorem~\ref{thm:err_alpha} includes a term
$\textstyle
O\left(\frac{\theta k d}{\OPT_P} + \frac{\sqrt{\alpha - 1}}{\alpha} \cdot \frac{\sqrt{\theta k d \OPT_P} + \theta n d}{\OPT_P}\right)$, 
whose dominant component, when \(n \gg \poly(k)\), is at most \(O\left(\frac{1}{\poly(k)} \cdot \frac{\theta n d}{\OPT_P}\right)\). 
This is again tighter than the bound from Theorem~\ref{thm:err}, which scales as \(O\left(\frac{\theta n d}{\OPT_P} + \sqrt{\frac{\theta n d}{\OPT_P}}\right)\), by a factor of at least \(\poly(k)\).

\smallskip
\noindent
For a general $\varepsilon \in (0,1)$, we provide another example that demonstrates improved coreset performance for \ouralgar. Let $\alpha = 1 + \varepsilon$ and $\theta = \frac{\mathrm{OPT}_P}{n d \cdot \mathrm{poly}(k)}$.
Following the same analysis as above, we observe that the coreset size of \ouralgar improves over that of \ouralgdefault by a factor of $\sqrt{k}/\varepsilon$, while the error bound improves by at least a $\mathrm{poly}(k)$ factor.

Overall, \ouralgar yields smaller coresets and tighter theoretical guarantees for \(r_P(S, \alpha)\), improving over \ouralgdefault by at least a factor of \(\poly(k)\), owing to the more noise-aware nature of the \(\err_\alpha\) metric.

\smallskip
\noindent
{\bf Applying Theorems~\ref{thm:err} and~\ref{thm:err_alpha} in practice.}
In practical settings, we often aim to construct a coreset such that \(r_P(S, \alpha) \leq (1 + \eps) \cdot \alpha\). There are two ways to achieve this:
\begin{enumerate}
\item Use \(\mathbf{CN}(\eps')\) from Theorem~\ref{thm:err}, with \(\eps' = \eps - O\left(\frac{\theta n d}{\OPT_P} + \sqrt{\frac{\theta n d}{\OPT_P}}\right)\).
\item Use \(\mathbf{CN}_\alpha(\eps_\alpha)\) from Theorem~\ref{thm:err_alpha}, with
$\eps_\alpha = \eps - O\left(\frac{\theta k d}{\OPT_P} + \frac{\sqrt{\alpha - 1}}{\alpha} \cdot \frac{\sqrt{\theta k d \OPT_P} + \theta n d}{\OPT_P}\right)$.
\end{enumerate}
Both approaches require an estimate of \(\OPT_P\), which can be obtained by computing an \(O(1)\)-approximate center set \(\widehat{C}\) for \(\widehat{P}\) and using \(\cost(\widehat{P}, \widehat{C})\) as a proxy; see discussion in Section \ref{sec:estimate-opt}. When \(\theta \leq \OPT_P / (nd)\), this yields a valid approximation. Notably, both \ouralgdefault and \ouralgar already compute such a \(\widehat{C}\) as a first step, so no additional overhead is incurred.

A practical question is when to prefer \(\mathbf{CN}_\alpha(\eps_\alpha)\) over \(\mathbf{CN}(\eps')\). This depends on whether \(P\) satisfies Assumption~\ref{assum:dataset}. Although \(P\) is unobserved, the observed dataset \(\widehat{P}\) often satisfies an approximate version of the assumption. We discuss how to verify this in Section~\ref{sec:opt}, allowing practitioners to choose the tighter construction when applicable.
{
In practice, we note that \(\mathbf{CN}_\alpha\) performs well even when the assumptions are violated; see Section~\ref{sec:empirical}. 
}

\smallskip
\noindent
{\bf Key ideas in the proof of Theorem~\ref{thm:err}.}
The goal is to relate \(\err(S, \widehat{P})\) to \(\err(S, P)\), from which the bound on \(r_P(S, \alpha)\) follows via Equation~\eqref{eq:err_to_r}.
The \(\err\) metric satisfies a standard composition bound:
$
\err(S, P) \leq \err(S, \widehat{P}) + O(\err(\widehat{P}, P))$
{(Lemma~\ref{lemma:Composition Property})}.
Thus, it suffices to bound \(\err(\widehat{P}, P)\). We show
$
\err(\widehat{P}, P) = O\left( \frac{\theta n d}{\OPT} + \sqrt{\frac{\theta n d}{\OPT}} \right)$ {(Lemma~\ref{lm:err_bound})}.
This is obtained by controlling the  difference:
\[ \textstyle
\cost(\widehat{P}, C) - \cost(P, C) \leq \sum_{p \in P} \left( \|\xi_p\|_2 + \left\langle \xi_p, p-c \right\rangle \right)
\]
for all $C \in \mathcal{C}$, 
and showing that the sum on the r.h.s. above normalized by \(\cost(\widehat{P}, C)\) is $O\left( \frac{\theta n d}{\OPT} + \sqrt{\frac{\theta n d}{\OPT}} \right)$. 
The proof uses concentration from the independence of noise \(\{\xi_p\}\), combined with bounded higher moments ensured by the Bernstein condition.

We note that the first term, $O\left(\frac{\theta n d}{\OPT_P}\right)$, is information-theoretically necessary and cannot be improved in the worst case; see Section~\ref{subsec:C.2}.
The second term, $O\left(\sqrt{\frac{\theta n d}{\OPT_P}}\right)$, arises from a loose upper bound in our analysis and is not known to be tight. This term is introduced when bounding the cumulative error over all points and center sets $C$ by applying Cauchy-Schwarz:
    \[
     \sum_{p \in P}\left\langle \xi_p, p-c \right\rangle \le \sqrt{\left(\sum_p \|\xi_p\|_2^2\right) \cdot \left(\sum_p d^2(p, C)\right)} \le \sqrt{\theta n d \cdot \cost(P,C)}.
    \]
    This worst-case analysis assumes full alignment of all error contributions, which may be overly pessimistic. A more refined analysis could potentially tighten or remove this term by accounting for error cancellation.
    
\smallskip
\noindent
{\bf Key ideas in the proof of Theorem~\ref{thm:err_alpha}.}
The proof has two parts: (1) designing a coreset algorithm that guarantees \(\err_\alpha(S, \widehat{P}) \leq \eps\), and (2) bounding the gap between \(\err_\alpha(S, \widehat{P})\) and \(\err_\alpha(S, P)\).

\smallskip
\noindent
{\em Coreset design.}
We first construct a coreset \(S\) in the noise-free setting (\(\widehat{P} = P\)) such that \(\err_\alpha(S, \widehat{P}) \leq \eps\). Under Assumption~\ref{assum:dataset}, we partition \(\widehat{P}\) into \(k\) well-separated clusters \(\widehat{P}_1, \dots, \widehat{P}_k\), each of diameter at most \(2r_i\). From each \(\widehat{P}_i\), the algorithm samples a uniform subset \(S_i\) of size
$
O\left( \frac{\log k}{\eps - \Delta} + \frac{(\alpha - 1)\log k}{(\eps - \Delta)^2} \right),
\quad \text{where} \quad \Delta := \frac{\sqrt{\alpha - 1} \theta n d}{\alpha \OPT_P}$.
Let \(S = \bigcup_i S_i\). Standard concentration bounds imply that \(\mathsf{C}(S)\) is close to \(\mathsf{C}(\widehat{P})\), and \(\OPT_S \lesssim \OPT_{\widehat{P}}\), which yield \(\err_1(S, \widehat{P}) \leq \eps\) (Lemma~\ref{lemma:err_alpha_step1}).

To extend this to \(\err_\alpha(S, \widehat{P}) \leq \eps\), we leverage a geometric property: for any \(C \in \calC_\alpha(S)\), each center \(c_i\) lies within distance
$O\left( \sqrt{ \frac{\alpha(\OPT + \theta n d \log^2(k d / \sqrt{\alpha - 1}))}{n_i} } \right)$
of \(\mathsf{C}(S)_i\) (Lemma~\ref{lemma:structural property of C(P')}). This follows from cost stability, which ensures consistent cluster structure for all such \(C\).

Directly applying this algorithm to the noisy dataset introduces several analytical challenges. First, noise can significantly increase the diameter of each cluster $\widehat{P}_i$, weakening the closeness guarantee between $\mathsf{C}(S)$ and $\mathsf{C}(\widehat{P})$. Second, highly noisy points may shift cluster assignments across partitions $\widehat{P}i$, breaking the geometric structure required for all $C \in \calC_\alpha(S)$.
To address both issues, we eliminate extremely noisy points from \(\widehat{P}\) (see Line~3 of Algorithm~\ref{alg:our_alg}), 
{
which is the key innovation of our algorithm.
In contrast to the existing approaches, our method explicitly excludes high-noise points and focuses the coreset on points whose assignments are reliable. This noise-aware sampling step is critical to preserving geometric stability.
}
Moreover, we show that the effect of the removed points on the location of $\mathsf{C}(P')$ is negligible (Lemma~\ref{lemma:err_alpha_step2}). Together, these steps ensure that Algorithm~\ref{alg:our_alg} achieves the desired bound $\err_1(S, \widehat{P}) \leq \eps$.

\smallskip
\noindent
{\em Metric bounds.}
A natural approach is to compose \(\err_\alpha(S, \widehat{P})\) with \(\err_\alpha(\widehat{P}, P)\), as with the \(\err\) metric. However, this fails because a set \(C \in \calC_\alpha(S)\) may only satisfy \(C \in \calC_{\alpha(1+\eps)}(\widehat{P})\), preventing direct use of \(\err_\alpha(\widehat{P}, P)\).
To resolve this, we instead compose \(\err_\alpha(S, \widehat{P})\) with \(\err_{\alpha(1+\eps)}(\widehat{P}, P)\), yielding:
\[ \textstyle
\err_\alpha(S, P) \leq \eps + O(\err_{\alpha(1+\eps)}(\widehat{P}, P)).
\]
The remaining task is to bound \(\err_{\alpha(1+\eps)}(\widehat{P}, P)\).
Assumption \ref{assum:dataset} allows us to analyze this quantity cluster by cluster. For simplicity, we illustrate the argument on one cluster \(P_i\) and its noisy counterpart \(\widehat{P}_i\).
Note that \(\mathsf{C}(\widehat{P}_i) = \mathsf{C}(P_i) + \frac{1}{|P_i|} \sum_{p \in P_i} \xi_p\) by the optimality condition of \oneMeans.
Then:
\[
\textstyle
\cost(P_i, \mathsf{C}(\widehat{P}_i)) = \OPT_{P_i} + \frac{1}{|P_i|} \left\| \sum_{p \in P_i} \xi_p \right\|_2^2 \text{ and hence,}
\]
\[ \textstyle
\err_1(\widehat{P}_i, P_i) = \frac{\cost(P_i, \mathsf{C}(\widehat{P}_i))}{\OPT_{P_i}} - 1 = O\left( \frac{\theta d}{\OPT_{P_i}} \right),
\]
using standard concentration for sums of independent noise vectors (Lemma~\ref{lemma:Bound center movement}). This is significantly smaller than \(\err(\widehat{P}_i, P_i)\), which lacks cancellation.

Aggregating across clusters and plugging into the composition bound gives:
\[
\textstyle
\err_\alpha(S, P) \leq \eps + O\left( \frac{\theta k d}{\OPT_P} + \frac{\sqrt{\alpha(1+\eps) - 1}}{\alpha} \cdot \frac{\sqrt{\theta k d \OPT_P} + \theta n d}{\OPT_P} \right).
\]
This bound matches Theorem~\ref{thm:err_alpha}, up to an extra \((1 + \eps)\) factor inside the square root. This artifact can be removed by composing through \(P'\) instead of \(\widehat{P}\) (see Lemmas~\ref{lemma:err_alpha_step1} and~\ref{lemma:err_alpha_step2}).
Note that the term $O\left(\frac{\theta k d}{\OPT_P}\right)$ accounts for center drift due to noise. Intuitively, each optimal center in $\mathsf{C}(P)$ may shift by $O(\theta d)$ under noise, resulting in a total movement of $O(\theta k d)$. This implies that $\mathsf{C}(\widehat{P})$ forms a $(1 + \frac{\theta k d}{\OPT_P})$-approximate center set for $P$, yielding the corresponding error term.
The final term, 
\[
    O\left(\frac{\sqrt{\alpha - 1}}{\alpha} \cdot \frac{\sqrt{\theta k d \cdot \OPT_P} + \theta n d}{\OPT_P}\right),
\]
captures the additional approximation error from using $\alpha$-approximate center sets rather than exact optima. It scales with the gap between $\alpha$ and 1, and vanishes as $\alpha \to 1$.

Thus, we complete the proof of Theorem~\ref{thm:err_alpha}.
Note that other noise models primarily affect the concentration bounds of the term \(\left\|\sum_{p \in P_i} \xi_p\right\|_2^2\) in the analysis; see details in Section~\ref{sec:extension}.

\begin{algorithm}[!t] \small
    \caption{A coreset algorithm \ouralgar using the $\err_\alpha$ metric under the noise model \rom{1}}
   \label{alg:our_alg}
    \begin{algorithmic}[1]
        \Require a noisy dataset $\widehat{P}$ derived from  $P$ under noise model $\rom{1}$ with $\theta > 0$, $\eps \in (0,1)$, $\alpha \in [1,2]$, and an $O(1)$-approximate center set $\widehat{C} = \left\{\widehat{c}_1, \ldots, \widehat{c}_k\right\} \in \calC$.
        \Ensure a weighted set $S \subseteq \widehat{P}$
\State Decompose $\widehat{P}$ into $k$ clusters $\widehat{P_i}$ by $\widehat{C}$, where $\widehat{P}_i = \left\{p\in \widehat{P}: \arg\min_{c\in \widehat{C}} d(p,c) = \widehat{c}_i \right\}$.
\State For each $i \in [k]$, compute $\widehat{r}_i = \sqrt{\cost(\widehat{P_i}, \widehat{c_i})/|\widehat{P_i}|}$.
\State Compute $P'_i = \widehat{P_i} \cap B_i$, where ball $B_i:=B(\widehat{c_i},R_i)$ with $R_i:=3 \widehat{r_i}+O(\sqrt{d} \log \frac{1+\theta kd}{\sqrt{\alpha - 1}})$.
\State For each $i\in [k]$, take a uniform sample $S_i$ of size $O(\frac{\log k}{\eps - \frac{\sqrt{\alpha - 1} \theta n d}{\alpha\OPT}} + \frac{(\alpha-1)\log k}{{(\eps - \frac{\sqrt{\alpha - 1} \theta n d}{ \alpha\OPT})}^2})$.
\State Return $S = \bigcup_{i\in [k]} S_i$ with $w(p) = \frac{|P_i'|}{|S_i|}$ for $p \in S_i$.
    \end{algorithmic}
\end{algorithm}

\section{Proof of Theorem \ref{thm:err}: Using $\err$ }
\label{sec:proof_err}
In this section, we prove Theorem \ref{thm:err}.
Our proof primarily relies on bounding $\err(\widehat{P},P)$, which we will show in Section~\ref{subsec:C.1}. 
Additionally, we provide a lower bound for $\err(\widehat{P},P)$ in Section~\ref{subsec:C.2}.
This helps explain each additive term in Theorem \ref{thm:err}.

\begin{theorem}[\bf{Restatement of Theorem \ref{thm:err}}]
Let \(\widehat{P}\) be drawn from \(P\) via noise model~\rom{1} with known \(\theta \geq 0\).
Let $\eps \in (0,1)$ and fix $\alpha \geq 1$.
Let $\calA$ be an algorithm that constructs a weighted subset $S\subset \widehat{P}$ for \kMeans\ of size $\calA(\eps)$ and with guarantee $\err(S, \widehat{P}) \leq \eps$.
Then with probability $p>0.9$, $$\textstyle \err(S, P) \leq \eps + O(\frac{\theta n d}{\OPT_P} + \sqrt{\frac{\theta n d}{\OPT_P}}) \mbox{ and } r_P(S, \alpha) \leq (1 + \eps + O(\frac{\theta n d}{\OPT_P} + \sqrt{\frac{\theta n d}{\OPT_P}}))^2 \cdot \alpha.$$
\end{theorem}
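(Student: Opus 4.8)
The plan is to prove the theorem by decomposing the error between $S$ and $P$ into two pieces—the controllable sampling error $\err(S,\widehat{P})\leq\eps$ and the intrinsic noise gap $\err(\widehat{P},P)$—and then converting the resulting $\err(S,P)$ bound into the quality bound via Equation~\eqref{eq:err_to_r}. The backbone is the composition inequality $\err(S,P)\leq \err(S,\widehat{P})+O(\err(\widehat{P},P))$, which follows from the triangle-inequality–style argument that a uniform multiplicative cost distortion between $S$ and $\widehat{P}$ and between $\widehat{P}$ and $P$ compose; I would state this as a lemma (the paper calls it Lemma~\ref{lemma:Composition Property}) and verify it directly from the definition of $\err$ by writing $\cost_z(S,C)-\cost_z(P,C)=(\cost_z(S,C)-\cost_z(\widehat{P},C))+(\cost_z(\widehat{P},C)-\cost_z(P,C))$ and dividing by $\cost_z(S,C)$, taking a supremum over $C\in\calC$.

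The main work is bounding $\err(\widehat{P},P)=O\bigl(\frac{\theta nd}{\OPT_P}+\sqrt{\frac{\theta nd}{\OPT_P}}\bigr)$, which is the content of Lemma~\ref{lm:err_bound}. First I would fix a center set $C$ and expand the per-coordinate cost difference. For $z=2$, each noisy point contributes $d^2(\widehat{p},c)-d^2(p,c)=\|\xi_p\|_2^2+2\langle\xi_p,\,p-c\rangle$ (where $\xi_p=0$ on the $1-\theta$ fraction of untouched points), so summing over $p\in P$ gives
\[
\textstyle
\cost(\widehat{P},C)-\cost(P,C)=\sum_{p\in P}\|\xi_p\|_2^2+2\sum_{p\in P}\langle\xi_p,\,p-c\rangle.
\]
The first sum concentrates around $\theta nd$: since each active $\xi_p$ has $\E\|\xi_p\|_2^2=d$ and the Bernstein condition gives control on the higher moments, a Bernstein/Bernoulli concentration argument bounds it by $O(\theta nd)$ with high probability, and normalizing by $\cost(\widehat{P},C)\geq\OPT_P$ yields the first term. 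The second, cross term is bounded uniformly over $C$ by Cauchy--Schwarz: $\sum_p\langle\xi_p,p-c\rangle\leq\sqrt{(\sum_p\|\xi_p\|_2^2)(\sum_p d^2(p,C))}\leq\sqrt{\theta nd\cdot\cost(P,C)}$, and dividing by $\cost(\widehat{P},C)\approx\cost(P,C)\geq\OPT_P$ produces the $\sqrt{\theta nd/\OPT_P}$ term. I would absorb constant-probability failure events into the overall $0.9$ success probability via a union bound over the two concentration events.

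The subtle point—and the step I expect to be the main obstacle—is making the cross-term bound \emph{uniform over all $C\in\calC$} rather than for a single fixed $C$. The Cauchy--Schwarz step conveniently factors the $C$-dependence into $\sqrt{\sum_p d^2(p,C)}=\sqrt{\cost(P,C)}$, so the randomness $\{\xi_p\}$ is decoupled from the choice of center set; this is exactly why the bound holds simultaneously for every $C$ once the single high-probability event $\sum_p\|\xi_p\|_2^2=O(\theta nd)$ occurs, and it is the reason no covering/net argument over $\calC$ is needed. I would emphasize in the writeup that this decoupling is what keeps the analysis clean, at the cost of the possibly-loose $\sqrt{\theta nd/\OPT_P}$ term (which the surrounding discussion already flags as not known to be tight).

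Finally, given $\err(S,P)\leq\eps+O\bigl(\frac{\theta nd}{\OPT_P}+\sqrt{\frac{\theta nd}{\OPT_P}}\bigr)=:E$ on the good event, the quality bound follows immediately: Equation~\eqref{eq:err_to_r} gives $r_P(S,\alpha)\leq(1+\err(S,P))^2\cdot\alpha\leq(1+E)^2\cdot\alpha$, which is exactly the stated conclusion. No separate argument is needed here beyond plugging the error bound into the already-established surrogate inequality.
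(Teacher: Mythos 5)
Your proposal is correct and follows essentially the same route as the paper: the composition lemma $\err(S,P)\leq\err(S,\widehat{P})+2\,\err(\widehat{P},P)$, the bound $\err(\widehat{P},P)=O(\frac{\theta nd}{\OPT_P}+\sqrt{\frac{\theta nd}{\OPT_P}})$ obtained by concentrating $\sum_p\|\xi_p\|_2^2$ around $\theta nd$ and controlling the cross term uniformly over $C$ via Cauchy--Schwarz, and finally plugging into Equation~\eqref{eq:err_to_r}. Your observation that the Cauchy--Schwarz step decouples the randomness from the choice of $C$ (so no net argument is needed) correctly identifies why the paper's single-event conditioning on $\sum_p\|\xi_p\|_2^2\leq 60\theta nd$ suffices for uniformity.
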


\noindent
For simplicity, we use $\OPT$ to denote $\OPT_P$ in the following discussion.
For preparation, we provide the following lemmas.

\begin{lemma}[\bf{Bounding $\err(\widehat{P}, P)$}]
\label{lm:err_bound}
For any $\widehat{P}$ derived from an $n$-point dataset $P \in \mathbb{R}^d$ using noise model \rom{1}, with parameter $\theta$ we have that with probability $p>0.9$,
\begin{equation}\label{eq:bound} \textstyle
\err(\widehat{P},P) \leq O(\frac{\theta n d}{{\OPT}} + \sqrt{\frac{ \theta n d}{{\OPT}}}).
\end{equation}
\end{lemma}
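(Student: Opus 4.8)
The plan is to fix an arbitrary center set $C \in \calC$, bound the per-point effect of the noise, and then exploit the structure of the bound to make it uniform over all of $\calC$ without any union bound or net argument. Writing $\widehat{p} = p + \xi_p$ (with $\xi_p = 0$ on the unperturbed points), I would expand the squared distance to the nearest center. For the direction $\cost(\widehat{P}, C) \ge \cost(P, C)$, I bound $d^2(\widehat{p}, C) \le \|\widehat{p} - c(p)\|_2^2$ where $c(p)$ is the center of $C$ closest to the clean point $p$; expanding gives $d^2(\widehat{p}, C) - d^2(p, C) \le 2\langle \xi_p, p - c(p)\rangle + \|\xi_p\|_2^2$, so summing over $p$ yields $\cost(\widehat{P}, C) - \cost(P, C) \le 2\sum_p \langle \xi_p, p - c(p)\rangle + \sum_p \|\xi_p\|_2^2$. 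For the reverse direction I use the center $\widehat{c}(p)$ closest to the noisy point $\widehat{p}$ and obtain the symmetric estimate $\cost(P, C) - \cost(\widehat{P}, C) \le -2\sum_p \langle \xi_p, \widehat{p} - \widehat{c}(p)\rangle + \sum_p \|\xi_p\|_2^2$. Using a one-sided nearest-center bound on each side is precisely what lets me avoid reasoning about how assignments change between $P$ and $\widehat{P}$.

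Two quantities then need to be controlled: the quadratic noise mass $\sum_p \|\xi_p\|_2^2$ and the cross terms. The quadratic term is a single random variable independent of $C$, with $\E[\sum_p \|\xi_p\|_2^2] = \theta n d$; Markov's inequality already gives $\sum_p \|\xi_p\|_2^2 = O(\theta n d)$ with probability at least $0.9$, and the Bernstein condition can be invoked if sharper concentration is wanted. The key step is the cross term: by Cauchy--Schwarz in $\R^{nd}$,
\[
\Big| \sum_p \langle \xi_p, p - c(p) \rangle \Big| \le \sqrt{\textstyle\sum_p \|\xi_p\|_2^2}\cdot \sqrt{\textstyle\sum_p \|p - c(p)\|_2^2} = \sqrt{\textstyle\sum_p \|\xi_p\|_2^2}\cdot \sqrt{\cost(P, C)},
\]
and the reverse cross term is at most $\sqrt{\sum_p \|\xi_p\|_2^2}\cdot\sqrt{\cost(\widehat{P}, C)}$, which is $\le \sqrt{\sum_p \|\xi_p\|_2^2}\cdot\sqrt{\cost(P,C)}$ since in that direction $\cost(\widehat{P},C) \le \cost(P,C)$. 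Crucially, the $C$-dependent factor has collapsed to exactly the clustering cost $\cost(P, C)$, so the single event $\{\sum_p \|\xi_p\|_2^2 = O(\theta n d)\}$ supplies all the randomness needed, and the estimate holds simultaneously for every $C \in \calC$.

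Combining these, on the high-probability event I obtain $|\cost(\widehat{P}, C) - \cost(P, C)| \le O(\theta n d) + O(\sqrt{\theta n d \cdot \cost(P, C)})$ for all $C$. Dividing by $\cost(P, C) \ge \OPT$ and using that $x \mapsto a/x + \sqrt{b/x}$ is decreasing gives a relative error, normalized by $\cost(P,C)$, of $O(\theta n d/\OPT + \sqrt{\theta n d/\OPT})$. The final step is to convert to the normalization $\cost(\widehat{P}, C)$ in the definition of $\err$: once $\eta' := O(\theta n d/\OPT + \sqrt{\theta n d/\OPT})$ is below a constant, $\cost(\widehat{P}, C) \ge (1-\eta')\cost(P, C) \ge \frac{1}{2}\cost(P, C)$, so swapping denominators costs only a factor of $2$, while the complementary regime $\eta' = \Omega(1)$ is trivial because the target bound is then $\Omega(1)$ as well. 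This yields $\err(\widehat{P}, P) \le O(\theta n d/\OPT + \sqrt{\theta n d/\OPT})$, as claimed.

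I anticipate the main obstacle to be the uniformity over the infinite family of center sets, together with the fact that point-to-center assignments need not agree between $P$ and $\widehat{P}$. My resolution is structural rather than probabilistic: the Cauchy--Schwarz step decouples the noise from the geometry, so a single concentration statement about $\sum_p \|\xi_p\|_2^2$ suffices for all $C$ at once, and the one-sided nearest-center expansions sidestep any need to track assignment changes. The only remaining care points are the denominator conversion and the bookkeeping that ensures the $\sqrt{\theta n d \cdot \cost(P,C)}$ term is charged against $\cost(P,C) \ge \OPT$ rather than against a possibly much smaller $\cost(\widehat{P}, C)$.
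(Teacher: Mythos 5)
Your proposal is correct and follows essentially the same route as the paper: expand the per-point squared-distance change into a quadratic noise term plus a cross term, control the single $C$-independent quantity $\sum_p \|\xi_p\|_2^2 = O(\theta n d)$ by a first/second-moment bound, and apply Cauchy--Schwarz so that the cross term collapses to $\sqrt{\theta n d \cdot \cost(P,C)}$ uniformly over all center sets. Your explicit two-regime handling of the denominator swap from $\cost(P,C)$ to $\cost(\widehat{P},C)$ is in fact slightly more careful than the paper's, which simply invokes $\cost(\widehat{P},C)=O(\cost(P,C))$ under the assumption $\OPT > \theta n d$.
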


\begin{lemma}[\bf{Composition property}]
\label{lemma:Composition Property}
Given $P,\widehat{P},S \subset \R^d$, suppose $\err(S, \widehat{P}) \in (0,1)$, then we have
\begin{align*}
    \err(S, P) \leq \err(S, \widehat{P}) + 2\err(\widehat{P}, P)
\end{align*}
\end{lemma}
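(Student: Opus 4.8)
The plan is to treat $\err$ as a relative-error quantity and prove the bound via an ordinary triangle inequality on the absolute cost differences, followed by a controlled change of denominator. Recall that all three quantities $\err(S,P)$, $\err(S,\widehat{P})$, and $\err(\widehat{P},P)$ are suprema over the same family $\calC$ of center sets, but with different normalizing denominators: $\err(S,\cdot)$ normalizes by $\cost(S,C)$ while $\err(\widehat{P},\cdot)$ normalizes by $\cost(\widehat{P},C)$. Reconciling these two denominators is exactly where the hypothesis $\err(S,\widehat{P}) \in (0,1)$ will be used.

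First I would fix an arbitrary center set $C \in \calC$ and apply the triangle inequality to the numerator of the summand defining $\err(S,P)$:
\[
|\cost(S,C) - \cost(P,C)| \le |\cost(S,C) - \cost(\widehat{P},C)| + |\cost(\widehat{P},C) - \cost(P,C)|.
\]
Dividing through by $\cost(S,C)$ splits the relative error into two pieces. The first piece, $|\cost(S,C) - \cost(\widehat{P},C)|/\cost(S,C)$, is bounded directly by $\err(S,\widehat{P})$ straight from its definition.

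The main obstacle is the second piece, $|\cost(\widehat{P},C) - \cost(P,C)|/\cost(S,C)$, whose denominator is $\cost(S,C)$ rather than the $\cost(\widehat{P},C)$ appearing in the definition of $\err(\widehat{P},P)$. To handle this I would rewrite it as
\[
\frac{|\cost(\widehat{P},C) - \cost(P,C)|}{\cost(\widehat{P},C)} \cdot \frac{\cost(\widehat{P},C)}{\cost(S,C)},
\]
bounding the first factor by $\err(\widehat{P},P)$. For the second factor, the hypothesis $\err(S,\widehat{P}) \in (0,1)$ gives $\cost(\widehat{P},C) \le (1 + \err(S,\widehat{P}))\,\cost(S,C) < 2\,\cost(S,C)$, so the ratio $\cost(\widehat{P},C)/\cost(S,C)$ is strictly below $2$. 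This is precisely the source of the constant $2$ in the statement, and it is the only place the assumption $\err(S,\widehat{P})<1$ enters.

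Combining the two bounds yields, for every $C \in \calC$,
\[
\frac{|\cost(S,C) - \cost(P,C)|}{\cost(S,C)} \le \err(S,\widehat{P}) + 2\,\err(\widehat{P},P),
\]
and taking the supremum over $C \in \calC$ on the left-hand side (noting the right-hand side is already a uniform bound independent of $C$) completes the argument. I expect no genuine difficulty beyond the denominator reconciliation described above; once the change-of-denominator trick is in place, the entire proof is a short calculation, and the factor $2$ falls out automatically from the hypothesis.
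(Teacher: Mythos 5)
Your proof is correct and follows essentially the same route as the paper: a triangle inequality on the cost differences, followed by converting the $\cost(\widehat{P},C)$ denominator to $\cost(S,C)$ using $\cost(\widehat{P},C) \leq (1+\err(S,\widehat{P}))\cost(S,C) < 2\cost(S,C)$, which is exactly where the paper's factor $\eps'(1+\eps) \leq 2\eps'$ comes from. No gaps.
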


\begin{proof}
Suppose $\err(S, \widehat{P}) = \eps$ and $\err(\widehat{P}, P) = \eps'$.
For every center set $C$, we have
\begin{align*}
|\cost(\widehat{P},C) - \cost(S,C)| \leq \eps \cdot \cost(S,C),
\end{align*}
and
\begin{align*}
|\cost(P,C) - \cost(\widehat{P},C)| \leq \eps' \cdot \cost(\widehat{P},C).
\end{align*}
Combine these inequalities above, we have
\begin{align*}
&|\cost(P,C) - \cost(S,C)| \\
\leq \ & |\cost(\widehat{P},C) - \cost(S,C)| + |\cost(P,C) - \cost(\widehat{P},C)| &(\text{Triangle Inequality})\\ 
\leq \ & \eps \cdot \cost(S,C) + \eps' \cdot \cost(\widehat{P},C) \\
\leq \ & (\eps + \eps' (1+\eps)) \cdot \cost(S,C) \\
\leq \ & (\eps + 2\eps') \cdot \cost(S,C) &(\eps< 1)
\end{align*}
Thus $\err(S,P) = \sup_{C \in \calC} \frac{|\cost_z(S, C) - \cost_z(P, C)|}{\cost_z(S, C)} \leq \err(S, \widehat{P}) + 2\err(\widehat{P}, P)$.
\end{proof}

\noindent
We are ready to prove Theorem \ref{thm:err}.

\begin{proof}[Proof of Theorem \ref{thm:err}]
Suppose a weighted subset $S\subset \widehat{P}$ constructed by Algorithm $\calA$ satisfies that $\err(S, \widehat{P}) \leq \eps$.
By Lemma \ref{lm:err_bound}, with probability $p>0.9$, \[\err(\widehat{P},P) \leq O(\frac{\theta n d}{{\OPT}} + \sqrt{\frac{ \theta n d}{{\OPT}}}).\]
By Lemma \ref{lemma:Composition Property}, 
\[\err(S, P) \leq \err(S, \widehat{P}) + 2\err(\widehat{P},P) = \eps + O(\frac{\theta n d}{\OPT} + \sqrt{\frac{\theta n d}{\OPT}}) .\]
Moreover, for the optimal center set \(\mathsf{C}(S)\) of \(S\), we have
\[
\frac{|\cost(S, \mathsf{C}(S)) - \cost(P, \mathsf{C}(S))|}{\cost(S, \mathsf{C}(S))} \leq \err(S, P),\]
which implies
\[
\cost_z(P, \mathsf{C}(S)) \in \left[\frac{1}{1 + \err(S, P)}, (1 + \err(S, P))\right] \cdot \cost_z(S, \mathsf{C}(S)).\]
For an $\alpha$-approximate center set $C$ of $S$, we have
\[
\cost(P, C) \leq (1 + \err(S, P)) \cdot \cost(S, C) \leq (1 + \err(S, P)) \cdot \alpha \cdot \cost(S, \mathsf{C}(S)).
\]
Combining these gives:
\begin{align}
r_P(S, \alpha) \leq (1 + \err(S, P))^2 \cdot \alpha.
\end{align}
This implies that $r_P(S, \alpha) \leq (1 + \eps + O(\frac{\theta n d}{\OPT} + \sqrt{\frac{\theta n d}{\OPT}}))^2 \cdot \alpha$.
\end{proof}

\subsection{Proof of Lemma \ref{lm:err_bound}: Bounding $\err(\widehat{P}, P)$}
\label{subsec:C.1}
\noindent
For each $p\in P$, recall that $\xi_p = \widehat{p} - p$ is the noise vector.
We first have the following claim that bounds norms of these noise vectors.

\begin{claim}[\bf{Bounding $\sum_{p\in P} \|\xi_p\|_2^2$}]
\label{claim:xi_p}
With probability at least 0.95, $\sum_{p\in P} \|\xi_p\|_2^2 \le 60\theta n d$.
\end{claim}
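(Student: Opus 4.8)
The plan is to observe that $\sum_{p \in P} \|\xi_p\|_2^2$ is a nonnegative random variable whose expectation is exactly $\theta n d$, and then to apply Markov's inequality; since the target threshold $60\theta n d$ sits a full factor of $60$ above the mean, no sharp concentration is needed.

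First I would compute $\E[\|\xi_p\|_2^2]$ for a single point. Let $B_p \in \{0,1\}$ be the indicator that point $p$ receives noise, so $\Pr[B_p = 1] = \theta$ under noise model~\rom{1}. Then $\|\xi_p\|_2^2 = B_p \sum_{j=1}^d \xi_{p,j}^2$, where each coordinate noise $\xi_{p,j} \sim D_j$ is independent of $B_p$. Using this independence together with the assumption that each $D_j$ has mean $0$ and variance $1$, I get $\E[\|\xi_p\|_2^2] = \theta \sum_{j=1}^d \E[\xi_{p,j}^2] = \theta d$. By linearity over the $n$ points, $\E\!\left[\sum_{p \in P} \|\xi_p\|_2^2\right] = \theta n d$.

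Next I would invoke Markov's inequality. Since $\sum_{p \in P} \|\xi_p\|_2^2 \ge 0$ deterministically, we have $\Pr\!\left[\sum_{p \in P} \|\xi_p\|_2^2 \ge 60\theta n d\right] \le \frac{\theta n d}{60\theta n d} = \frac{1}{60} < 0.05$, so the stated bound holds with probability at least $1 - \frac{1}{60} > 0.95$, as claimed.

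There is essentially no hard step here: the only thing to get right is the expectation computation, which hinges on correctly accounting for the Bernoulli masking $B_p$ specific to noise model~\rom{1} and on the mean-$0$, variance-$1$ normalization of each $D_j$. Notably, the Bernstein condition plays \emph{no} role in this particular claim—it is needed only for the sharper, center-set-dependent tail bounds appearing later in the proof of Lemma~\ref{lm:err_bound}, such as controlling the cross terms $\sum_p \langle \xi_p, p - c\rangle$ uniformly over $C$. Given the generous constant $60$, Markov is both sufficient and the cleanest route; a Bernstein- or Chernoff-type argument would yield a much tighter constant but is unnecessary for what the downstream analysis requires.
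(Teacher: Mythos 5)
Your proof is correct, but it takes a genuinely different (and simpler) route than the paper. You compute $\E\bigl[\sum_{p\in P}\|\xi_p\|_2^2\bigr]=\theta n d$ exactly as the paper does, but then you finish with a single application of Markov's inequality to the nonnegative sum, getting failure probability $1/60<0.05$ with no further work. The paper instead computes a variance bound $\Var\bigl[\sum_{p\in P}\|\xi_p\|_2^2\bigr]\le 3\theta n d^2$ and argues by cases: when $\theta\le\frac{1}{20n}$ it notes that with probability at least $(1-\theta)^n\ge 0.95$ no point is perturbed at all (so the sum is zero), and otherwise it applies Chebyshev's inequality around the mean. Your Markov argument is cleaner, avoids the case split entirely, and — as you correctly observe — does not need the Bernstein condition or any second-moment control for this particular claim. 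What the paper's heavier machinery buys is the variance bound itself, which is reused verbatim in Claim~\ref{claim:error_onemeans_new} and in the later concentration arguments for the cross terms $\sum_p\langle\xi_p,p-c\rangle$; but for Claim~\ref{claim:xi_p} in isolation, your approach is fully sufficient and arguably preferable.
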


\begin{proof}
Note that for every $p\in P$,
\begin{align*}
\E_{\xi_p}\left[ \|\xi_p\|_2^2 \right] = & \quad \theta \E_{\xi_{p,j}\sim D_j\sim}\left[\sum_{j} \xi_{p,j}^2\right] & \\
= & \quad \theta \sum_{j} \Var_{\xi_{p,j}\sim D_j\sim}\left[ \xi_{p,j} \right] - \E_{\xi_{p,j}\sim D_j\sim}\left[ \xi_{p,j} \right]^2 & \\
= & \quad \theta d. & (\text{Defn. of $D_j$})
\end{align*}
Thus, we have
\begin{align}
\label{eq:e}
\E_{\widehat{P}}\left[\sum_{p\in P}\|\xi_p\|_2^2\right] =  \theta n d.
\end{align}
Furthermore,
\begin{align} \label{eq:var}
\begin{aligned}
& \quad \Var_{\widehat{P}}\left[\sum_{p\in P} \|\xi_p\|_2^2\right] \\
= & \quad n \cdot \Var_{\xi_p}\left[\|\xi_p\|_2^2\right] \\
= & \quad n \cdot \left( \Exp_{\xi_p}\left[\|\xi_p\|_2^4\right] - \Exp_{\xi_p}\left[\|\xi_p\|_2^2\right]^2 \right)\\ 
\leq & \quad \theta n\cdot (2d + d^2 - \theta d^2) \\
\leq & \quad 3\theta n d^2.
\end{aligned}
\end{align}
where $\chi^2(d)$ represents the chi-square distribution with $d$ degrees of freedom, whose variance is known to be $2d$~\cite{miller2017chi}.

If $\theta \leq \frac{1}{20n}$, we have that 
\[
\Pr_{\widehat{P}}\left[\sum_{p\in P} \|\xi_p\|_2^2 = 0\right] = (1-\theta)^n \geq (1-\frac{1}{20n})^n \geq 0.95,
\]
implying that $\Pr_{\widehat{P}}\left[\sum_{p\in P} \|\xi_p\|_2^2 \leq 60\theta n d\right] \geq 0.95$.
Otherwise, if $\theta \leq \frac{1}{n}$, we have that
\begin{align*}
& \quad \Pr_{\widehat{P}}\left[\sum_{p\in P} \|\xi_p\|_2^2 > 60\theta n d\right] \\
\leq & \quad \Pr_{\widehat{P}}\left[ \left|\sum_{p\in P} \|\xi_p\|_2^2 - \Exp_{\widehat{P}}\left[\sum_{p\in P} \|\xi_p\|_2^2\right] \right| > 33 \sqrt{\theta n} \sqrt{\Var_{\widehat{P}}\left[\sum_{p\in P} \|\xi_p\|_2^2\right]} \right] & (\text{Eq. \eqref{eq:e} and Ineq. \eqref{eq:var}})\\
\leq & \quad \frac{1}{1000\theta n} & (\text{Chebyshev's ineq.}) \\
\leq & \quad 0.05. & (\theta \geq \frac{1}{20n})
\end{align*}
Thus, we complete the proof of the claim.
\end{proof}

\noindent
Now we are ready to prove Lemma \ref{lm:err_bound}.

\begin{proof}[Proof of Lemma \ref{lm:err_bound}]
By Claim~\ref{claim:xi_p}, with probability at least 0.95, $\sum_{p\in P} \|\xi_p\|_2^2 \le 60\theta n d$, which we assume happens in the following.
It suffices to prove that for any center set $C\in \calC$, 
\begin{align}
\label{eq1:thm:kmeans_noise1}
|\cost(P,C) - \cost(\widehat{P},C)| \leq (\frac{60 \theta n d}{\OPT}+4\sqrt{\frac{15\theta n d}{\OPT}})\cdot \cost(\widehat{P},C).
\end{align}
By the triangle inequality, we know that for each $p\in P$, $|d(p,C) - d(\widehat{p}, C)|\leq \|\xi_p\|_2$, which implies that
$
|d^2(p,C) - d^2(\widehat{p},C)|\leq \|\xi_p\|_2^2 + 2 \|\xi_p\|_2\cdot d(p,C).
$

By a similar analysis of Lemma \ref{lem:uniform-error-bound_new}, we have ${\cost(\widehat{P},C)} = O(\cost(P,C))$ since we assume $\OPT> \theta nd$, thus we have
\begin{align*}
& \quad \frac{|\cost(P,C) - \cost(\widehat{P},C)|}{\cost(\widehat{P},C)} & \\
\leq & \quad \frac{\sum_{p\in P} \|\xi_p\|_2^2 + 2 \|\xi_p\|_2\cdot d(p,C)}{\cost(\widehat{P},C)} &\\
\leq & \quad \frac{60\theta n d}{\OPT} + \frac{2\sum_{p\in P} \|\xi_p\|_2\cdot d(p,C) }{\cost(\widehat{P},C)} & (\text{by assumption}) \\
\leq & \quad \frac{60\theta n d}{\OPT} + \frac{2\sqrt{(\sum_{p\in P} \|\xi_p\|_2^2)\cdot (\sum_{p\in P} d^2(p,C) )}}{\cost(\widehat{P},C)} & (\text{Cauchy-Schwarz}) \\
\leq & \quad \frac{60\theta n d}{\OPT}+4\sqrt{\frac{15\theta n d}{\cost(P,C)}} & (\text{by assumption}) \\
\leq & \quad \frac{60\theta n d}{\OPT}+4\sqrt{\frac{15\theta n d}{\OPT}}, & (\text{Defn. of $\OPT$})
\end{align*}
which completes the proof of Inequality~\eqref{eq1:thm:kmeans_noise1}.
\end{proof}

{
\subsection{Lower bound of \texorpdfstring{$\mathrm{Err}(\widehat{P}, P)$}{Err(P-hat, P)}}
\label{subsec:C.2}
We provide the following lower bound for $\mathrm{Err}(\widehat{P}, P)$. 

\begin{claim}[\bf{Lower Bound of $\err(\widehat{P},P)$}]
\label{claim:lower_bound_2}
With probability $p>0.8$, $\mathsf{Err}(\widehat{P},P) = \Omega(\frac{\theta n d}{\OPT_P})$.
\end{claim}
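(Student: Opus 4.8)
The plan is to lower bound the supremum defining $\err(\widehat P, P)$ by its value at the single fixed center set $C^\star := \mathsf{C}(P)$, for which $\cost(P, C^\star) = \OPT_P$, so that
\[
\err(\widehat P, P) \;\ge\; \frac{\cost(\widehat P, C^\star) - \OPT_P}{\cost(\widehat P, C^\star)}.
\]
Working in the regime $\theta n d \le \OPT_P$ (the same ``noise does not dominate'' regime as Theorem~\ref{thm:err_alpha}, which is also where the target $\theta n d/\OPT_P \le 1$ is meaningful), the denominator is controlled by the upper bound of Lemma~\ref{lm:err_bound}, which already gives $\cost(\widehat P, C^\star) = O(\OPT_P)$. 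Hence the whole task reduces to showing the numerator satisfies $\cost(\widehat P, C^\star) - \OPT_P = \Omega(\theta n d)$ with constant probability.

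To handle the numerator I would decompose the noise relative to the affine hull $V := \mathrm{aff}(C^\star)$, whose dimension is at most $k-1$. Writing $\xi_p = \xi_p^{\parallel} + \xi_p^{\perp}$ for the components of each noise vector parallel and orthogonal to $V$, the Pythagorean identity splits every squared distance to $C^\star$ into a part living in $V$ and an orthogonal displacement, yielding
\[
d^2(\widehat p, C^\star) - d^2(p, C^\star)
= \underbrace{\big(\textstyle\min_{c}\|p^{\parallel} + \xi_p^{\parallel} - c\|^2 - \min_{c}\|p^{\parallel} - c\|^2\big)}_{\text{reassignment inside } V}
+ \big(2\langle p^{\perp}, \xi_p^{\perp}\rangle + \|\xi_p^{\perp}\|^2\big).
\]
The orthogonal term is immune to which center is chosen: its cross term has mean zero, while $\E\|\xi_p^{\perp}\|^2 = \theta\,\dim(V^{\perp}) \ge \theta(d - k + 1)$ per point (the covariance of $\xi_p$ is the identity), contributing $\Omega(\theta n d)$ in expectation once $d = \Omega(k)$. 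The first term is the only one that can be negative; since $x \mapsto d(x, C^\star)$ is $1$-Lipschitz it is bounded below by $-2\,d(p,C^\star)\|\xi_p^{\parallel}\| - \|\xi_p^{\parallel}\|^2$, whose expectation I would aggregate via Cauchy--Schwarz using $\sum_p d(p, C^\star) \le \sqrt{n\,\OPT_P}$ and $\E\|\xi_p^{\parallel}\|^2 \le \theta(k-1)$. This gives
\[
\E\big[\cost(\widehat P, C^\star) - \OPT_P\big] \;\ge\; \theta\Big(n(d - 2(k-1)) - 2\sqrt{(k-1)\,n\,\OPT_P}\Big) \;=\; \Omega(\theta n d),
\]
the last equality holding in the worst-case regime $d = \Omega(k)$ and $\OPT_P = O(n d^2/k)$.

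To pass from expectation to a high-probability bound, I would use that the numerator is a sum of $n$ independent contributions (independence across $p$ is built into the model), each with variance controlled under the Bernstein condition: the orthogonal squared norms concentrate like a $\chi^2$ variable, exactly as already exploited in Claim~\ref{claim:xi_p}, and the cross terms have variance governed by $\OPT_P$. A Chebyshev argument then keeps the numerator within a constant factor of its mean, hence $\Omega(\theta n d)$, with probability at least $0.8$, provided $\theta n = \Omega(1)$. Combining with the $O(\OPT_P)$ bound on the denominator gives the claimed $\Omega(\theta n d/\OPT_P)$.

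I expect the reassignment term inside $V$ to be the main obstacle: in full generality a point far from $C^\star$ could have its cost reduced by the parallel noise, and the crude Lipschitz estimate only keeps this lower-order when $d \gtrsim k$ and $\OPT_P \lesssim n d^2/k$. For a clean ``worst-case necessity'' statement I would therefore fall back on an explicit hard instance — for example $k = 1$, or one point mass per cluster — where $V$ collapses to a single point, $V^{\perp} = \R^d$, and the reassignment term vanishes entirely, making $\E[\cost(\widehat P, C^\star) - \OPT_P] = \theta n d$ exactly and rendering the concentration step transparent.
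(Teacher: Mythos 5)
Your proposal is correct, but it takes a genuinely different route from the paper. The paper proves the claim purely by exhibiting one hard instance: $n$ points on scaled coordinate axes in $\R^n$ with $k=n-1$, where the witness center set is $\mathsf{C}(\widehat{P})$ — it argues that the noise perturbs which pair of points gets merged, so the noisy optimum is a $(1+\Omega(\theta nd/\OPT_P))$-bad solution for $P$, and the lower bound on $\err$ follows from the contrapositive of Equation~\eqref{eq:err_to_r}. You instead use the dual witness $\mathsf{C}(P)$ and lower-bound $\cost(\widehat{P},\mathsf{C}(P))-\OPT_P$ directly, splitting the noise into components parallel and orthogonal to $\mathrm{aff}(\mathsf{C}(P))$ so that the $\Omega(\theta n(d-k+1))$ orthogonal inflation is isolated from the only term that can be negative (reassignment inside the affine hull), which you control via Lipschitzness and Cauchy--Schwarz. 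What your approach buys is a \emph{sufficient condition on general datasets} ($d\gtrsim k$, $\OPT_P\lesssim nd^2/k$, $\theta n\gtrsim 1$, plus $\theta nd\lesssim\OPT_P$ to control the denominator) under which the lower bound holds — strictly more informative than the paper's single example — and your fallback instance ($k=1$, where the reassignment term vanishes and the numerator is exactly $\theta nd$ in expectation) suffices on its own to establish the worst-case necessity that the claim is actually asserting. The regime restrictions you flag are real but unavoidable: for $\theta n=O(1)$ the statement fails with constant probability since no point is perturbed, and the paper's own instance sits comfortably inside your regime. One cosmetic point: your displayed expectation bound factors $\theta$ out of the cross term, which is only valid because $\E\|\xi_p^{\parallel}\|\le\theta\sqrt{k-1}$ (probability $\theta$ of perturbation times the conditional first moment); it would be worth making that one line explicit.
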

\begin{proof}[Proof of Claim \ref{claim:lower_bound_2}]
We show this by a worst-case example. Let $\theta = 0.1$, $n = 10000$ and $k = n-1$. 
Let $P = \left\{p_i = 100 n e_i: i\in [n]\right\} \subset \mathbb{R}^n$ where $e_i$ is the $i$-th unit basis in $\mathbb{R}^n$.
An optimal solution $\mathsf{C}(P) = \left\{p_1,\ldots, p_{n-2}, \frac{p_{n-1} + p_{n}}{2}\right\}$, and hence, $\OPT = 10000n^2$. 
Note that with probability at least 0.8, the following events hold: $\sum_{p\in P} \|\xi_p\|_2^2 = \Theta(\theta n d)$ and for every $p\in P$, $\|\xi_p\|_2^2 \leq 10d \log n$. 
Conditioned on these events, the optimal solution $\mathsf{C}(\widehat{P})$ of $\widehat{P}$ must consist of $n-2$ points $\widehat{p}\in \widehat{P}$ and the average of the remaining two points in $\widehat{P}$. 
Assume $\mathsf{C}(\widehat{P}) = \left\{\widehat{p}_1,\ldots, \widehat{p}_{n-2}, \frac{\widehat{p}_{n-1} + \widehat{p}_{n}}{2}\right\}$. 
By calculation, we obtain that
$$
\mathrm{cost}(P,\mathsf{C}(\widehat{P})) \geq(1 + \Omega(\frac{\theta n d}{\OPT})),
$$
implying that $\mathsf{Err}(\widehat{P},P) = \Omega(\frac{\theta n d}{\OPT_P})$.
\end{proof}

}

\section{Proof of Theorem \ref{thm:err_alpha}: Using $\err_{\alpha}$ }
\label{sec:proof_err_AR}

We prove Theorem \ref{thm:err_alpha} in this section, restated below.
\begin{theorem}[\bf{Restatement of Theorem \ref{thm:err_alpha}}]
Let $\widehat{P}$ be an observed dataset drawn from $P$ under the noise model \rom{1} with known parameter $\theta \in [0, \frac{\OPT_P}{nd}]$.
Let $\eps \in (0,1)$ and fix $\alpha \in [1, 2]$.
Under Assumption \ref{assum:dataset}, there exists a randomized algorithm that constructs a weighted $S\subset \widehat{P}$ for \kMeans\ of size $O(\frac{k \log k}{\eps - \frac{\sqrt{\alpha - 1} \theta n d}{\alpha \OPT_P}} + \frac{(\alpha-1)k\log k}{(\eps - \frac{\sqrt{\alpha - 1} \theta n d}{\alpha \OPT_P})^2})$ and with guarantee $\err_{\alpha}(S, \widehat{P}) \leq \eps$ with probability at least $0.99$.
Moreover, 
\[ \err_\alpha(S, P) \leq \eps + O\left(\frac{\theta k d}{\OPT_P} + \frac{\sqrt{\alpha - 1}}{\alpha}\cdot \frac{\sqrt{\theta k d \OPT_P} + \theta n d}{\OPT_P}\right),\] and
\[ r_P(S, \alpha) \leq \left(1 + \eps + O\left(\frac{\theta k d}{\OPT_P} + \frac{\sqrt{\alpha - 1}}{\alpha}\cdot \frac{\sqrt{\theta k d \OPT_P} + \theta n d}{\OPT_P}\right)\right) \cdot \alpha.\]
\end{theorem}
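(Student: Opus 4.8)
The plan is to follow the two-part strategy sketched above: first design the cluster-wise sampling procedure of Algorithm~\ref{alg:our_alg} and certify that it achieves $\err_{\alpha}(S, \widehat{P}) \leq \eps$ with probability at least $0.99$, and then transfer this guarantee from $\widehat{P}$ to the unobserved dataset $P$ by bounding a composition error. The final bound on $r_P(S, \alpha)$ then follows immediately from the surrogate inequality $r_P(S, \alpha) \leq (1 + \err_\alpha(S, P)) \cdot \alpha$ established in Section~\ref{sec:def}, so essentially all the work is in bounding $\err_\alpha(S, P)$.

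For the first part, I would use the $O(1)$-approximate center set $\widehat{C}$ to partition $\widehat{P}$ into clusters $\widehat{P}_1, \dots, \widehat{P}_k$, and then — crucially — discard extremely noisy points by intersecting each cluster with the ball $B_i = B(\widehat{c}_i, R_i)$, $R_i = 3\widehat{r}_i + O(\sqrt{d}\log\frac{1+\theta kd}{\sqrt{\alpha-1}})$, to obtain $P'_i$. The radius is calibrated against the cost-stability parameter $\gamma$ of Assumption~\ref{assum:dataset} so that, on one hand, the retained points keep their optimal cluster assignments and, on the other, the discarded points perturb the location of $\mathsf{C}(P')$ only negligibly (Lemma~\ref{lemma:err_alpha_step2}). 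I would then draw uniform samples $S_i$ from each $P'_i$ of the stated size and argue via standard concentration — Bernstein's inequality over the now bounded-radius samples, using the moment control from the Bernstein condition — that $\mathsf{C}(S)$ is close to $\mathsf{C}(\widehat{P})$ and $\OPT_S \lesssim \OPT_{\widehat{P}}$, which yields $\err_1(S, \widehat{P}) \leq \eps$ (Lemma~\ref{lemma:err_alpha_step1}). To upgrade $\err_1$ to the full $\err_\alpha$, I would invoke the structural property (Lemma~\ref{lemma:structural property of C(P')}): cost stability forces every $C \in \calC_\alpha(S)$ to retain the same cluster structure, with each center $c_i$ lying within $O(\sqrt{\alpha(\OPT_P + \theta nd \log^2(kd/\sqrt{\alpha-1}))/n_i})$ of $\mathsf{C}(S)_i$, so the per-cluster sampling error is uniformly controlled over all $\alpha$-approximate center sets. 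Setting the effective error budget to $\eps - \Delta$ with $\Delta := \frac{\sqrt{\alpha-1}\,\theta n d}{\alpha \OPT_P}$ reproduces exactly the claimed size $O(\frac{k\log k}{\eps - \Delta} + \frac{(\alpha-1)k\log k}{(\eps-\Delta)^2})$.

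For the second part I would \emph{not} compose $\err_\alpha(S, \widehat{P})$ directly with $\err_\alpha(\widehat{P}, P)$, since a center set in $\calC_\alpha(S)$ is only guaranteed to lie in $\calC_{\alpha(1+\eps)}(\widehat{P})$; instead I compose through $\err_{\alpha(1+\eps)}(\widehat{P}, P)$ to get $\err_\alpha(S, P) \leq \eps + O(\err_{\alpha(1+\eps)}(\widehat{P}, P))$. I would then bound $\err_{\alpha(1+\eps)}(\widehat{P}, P)$ cluster by cluster, using the $1$-means optimality identity $\mathsf{C}(\widehat{P}_i) = \mathsf{C}(P_i) + \frac{1}{|P_i|}\sum_{p \in P_i}\xi_p$, which gives $\cost(P_i, \mathsf{C}(\widehat{P}_i)) = \OPT_{P_i} + \frac{1}{|P_i|}\|\sum_{p \in P_i}\xi_p\|_2^2$ and hence $\err_1(\widehat{P}_i, P_i) = O(\theta d / \OPT_{P_i})$ by concentration for sums of independent noise (Lemma~\ref{lemma:Bound center movement}). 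The essential point is that the linear term $\sum_p \langle \xi_p, p-c\rangle$, which produced the large $\sqrt{\theta nd/\OPT_P}$ penalty in Theorem~\ref{thm:err}, now cancels in expectation when the center moves with the noise — this is precisely why $\err_\alpha$ beats $\err$. Aggregating the per-cluster estimates and accounting for the $\alpha$-approximation slack produces the term $O(\frac{\theta kd}{\OPT_P} + \frac{\sqrt{\alpha-1}}{\alpha}\cdot\frac{\sqrt{\theta kd\,\OPT_P}+\theta nd}{\OPT_P})$, and the spurious $(1+\eps)$ factor inside the square root is removed by composing through the filtered set $P'$ rather than $\widehat{P}$ (via Lemmas~\ref{lemma:err_alpha_step1} and~\ref{lemma:err_alpha_step2}).

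The main obstacle I anticipate is controlling the interplay between noise and cluster assignments in the first part: heavily noisy points can migrate across the partition boundaries induced by $\widehat{C}$, simultaneously inflating cluster diameters and destroying the geometric regularity that the $\err_\alpha$ argument requires. The ball-intersection step (Line~3) is the key device, but proving that a single choice of $R_i$ both (i) removes enough noise to keep assignments stable for \emph{every} $C \in \calC_\alpha(S)$ and (ii) leaves $\mathsf{C}(P')$ close enough to $\mathsf{C}(\widehat{P})$ that $\err_1(S, \widehat{P}) \leq \eps$ still holds, demands a delicate coupling of $R_i$ to the stability parameter $\gamma$ of Assumption~\ref{assum:dataset}. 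Securing both directions without incurring an extra $\poly(\log)$ or $\alpha$ loss in the sample size is where I expect the analysis to be most technical.
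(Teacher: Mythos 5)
Your proposal is correct and follows essentially the same route as the paper: the same ball-filtering cluster-wise sampling algorithm, the same structural lemma confining every $C \in \calC_\alpha(S)$ to local balls around the optimal centers, the same per-cluster $1$-means cancellation argument $\mathsf{C}(\widehat{P}_i) = \mathsf{C}(P_i) + \frac{1}{|P_i|}\sum_{p\in P_i}\xi_p$ for the noise transfer, and the same refinement of composing through the filtered set $P'$ (which is how the paper's formal proof actually proceeds, via the triangle-inequality decomposition $\|c_i - \mu(P_i)\| \leq \|\mu(P_i)-\mu(P'_i)\| + \|\mu(P'_i)-\mu(S_i)\| + \|\mu(S_i)-c_i\|$ and Lemmas~\ref{lemma:err_alpha_step1} and~\ref{lemma:err_alpha_step2}). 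The only cosmetic difference is that you frame the second part primarily as a composition through $\err_{\alpha(1+\eps)}(\widehat{P},P)$, which the paper presents only as motivating intuition before switching to the direct $P'$-based bound you also describe.
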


\noindent
Recall that we use Algorithm \ref{alg:our_alg} to construct such a coreset $S$.
For ease of analysis, we provide the following assumption for the center set $\widehat{C}$ obtained in Line 1 of Algorithm \ref{alg:our_alg}.
We will justify this assumption in Section \ref{subsec:remove_center_assumption}.

\begin{assumption}[\bf{Locality}]
\label{assumption:additional}
Assume that for each $\widehat{c_i} \in \widehat{C}$ ($i\in [k]$), $d(\widehat{c_i},c_i^\star) \leq O(r_i+\sqrt{d} \log \frac{1+\theta kd}{\sqrt{\alpha - 1}})$.
Here, $c_i^\star$ represents the $i$-th center in $\mathsf{C}(P)$.
\end{assumption}

\noindent
In the following discussion, we denote $c_i^\star$ to be the $i$-th center in $\mathsf{C}(P)$, and use $\mu(X) = \frac{\sum_{p \in X}p}{|X|}$ to denote the mean point of any dataset $X$.
For preparation, we first show that given $C \in \calC_{\alpha}(S)$, every center $c_i \in C$ lies within a local ball centered at an optimal center $c_i^\star\in \mathsf{C}(P)$.

\begin{lemma}[\bf{Structural properties of $\calC_{\alpha}(S)$}]
\label{lemma:structural property of C(P')}
    With high probability, for every $C\in \calC_{\alpha}(S)$ and every $i\in [k]$, there exist a center $c\in C$ such that $c\in B(c_i^\star, O(\sqrt{\frac{\alpha(\OPT+\theta nd \log^2(\frac{kd}{\sqrt{\alpha - 1}})}{n_i}}))$.
    Moreover, for each point $p \in P_i , i,j \in[k], i\neq j$, $d(p,c_i)<d(p,c_j)$.
\end{lemma}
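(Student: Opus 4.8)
The plan is to prove the two claims of Lemma~\ref{lemma:structural property of C(P')} in sequence, using cost stability (Assumption~\ref{assum:dataset}) as the main lever to convert a global cost bound into a per-center localization. First I would fix any $C \in \calC_\alpha(S)$, so that $\cost(S, C) \le \alpha \cdot \OPT_S$, and translate this into a bound on $\cost(\widehat{P}, C)$ and hence (via the relation $\OPT_{\widehat{P}} = \OPT_P + O(\theta n d \log^2(\cdot))$ controlling how noise inflates the optimum) a bound on $\cost(P, C) \le \alpha(\OPT_P + O(\theta n d \log^2(kd/\sqrt{\alpha-1})))$ up to constants. The key structural step is to argue that each cluster $P_i$ must be ``served'' by some center of $C$ lying close to $c_i^\star$: if no center of $C$ were within the claimed radius of $c_i^\star$, then the points of $P_i$ would each pay at least that squared radius, forcing the cost on $C$ to exceed the stability-allowed budget.

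\smallskip
\noindent
\emph{Localization of centers.} Concretely, I would use a standard $1$-means displacement identity: for any candidate center $c$ and the optimal center $c_i^\star = \mu(P_i)$, we have $\cost(P_i, c) = \OPT_{P_i} + n_i \cdot d^2(c, c_i^\star)$. Summing over $i$ and comparing against the global budget $\cost(P,C) \le \alpha(\OPT_P + O(\theta n d \log^2(\cdot)))$, the excess cost $\sum_i n_i d^2(\cdot)$ is controlled by the slack $(\alpha-1)\OPT_P + O(\alpha \theta n d \log^2(\cdot))$. The subtlety is that this only bounds the \emph{total} displacement, not the per-cluster one. Here cost stability enters: the $\gamma$-cost-stable condition with $\gamma$ chosen as in Assumption~\ref{assum:dataset} guarantees that no single center of $C$ can simultaneously serve two distinct optimal clusters cheaply (merging two clusters under one center costs at least $\gamma \cdot \OPT_P(k)$, which exceeds the budget). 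This forces a near-bijection between centers of $C$ and clusters $P_i$, after which the total displacement bound localizes to each cluster, giving $d(c_i, c_i^\star) \le O(\sqrt{\alpha(\OPT + \theta n d \log^2(kd/\sqrt{\alpha-1}))/n_i})$ for the matched center $c_i$.

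\smallskip
\noindent
\emph{Correct assignment of $P_i$.} For the second claim, I would show that once $c_i$ is this close to $c_i^\star$ and all other $c_j$ are correspondingly close to their $c_j^\star$, the $8\overline{r}_i$ outlier bound combined with separation (from stability, the inter-cluster distances dominate the localization radii) ensures every $p \in P_i$ is strictly closer to $c_i$ than to any $c_j$. This is a triangle-inequality computation: $d(p, c_i) \le d(p, c_i^\star) + d(c_i^\star, c_i) \le r_i + O(\text{radius})$, while $d(p, c_j) \ge d(c_i^\star, c_j^\star) - d(p, c_i^\star) - d(c_j^\star, c_j)$, and I would verify the gap is positive under the stated $\gamma$.

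\smallskip
\noindent
The main obstacle I anticipate is establishing the near-bijection between $C$ and the clusters rigorously: naively, an $\alpha$-approximate $C$ could in principle place two centers inside one cluster and leave another cluster to a distant center, and ruling this out requires carefully combining the cost-stability lower bound on the ``$(k-1)$-means'' cost with the noise-inflated budget. This is precisely where the specific form of $\gamma = O(\alpha)(1 + \theta n d \log^2(kd/\sqrt{\alpha-1})/\OPT_P)$ in Assumption~\ref{assum:dataset} is calibrated to make the argument close, so the delicate part is tracking constants so that the stability gap strictly exceeds the approximation slack plus noise inflation.
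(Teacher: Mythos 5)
Your high-level strategy---turn the $\alpha$-approximation budget into a cost bound, use it to force a center near each $c_i^\star$, use cost stability to separate centers, and finish the assignment claim by triangle inequality---matches the skeleton of the paper's proof. However, the specific mechanism you propose for the localization step has a real gap. The displacement identity $\cost(P_i,c)=\OPT_{P_i}+n_i\,d^2(c,c_i^\star)$ only holds when \emph{all} of $P_i$ is served by the single center $c$; for an arbitrary $C\in\calC_\alpha(S)$ a cluster may be split among several centers, in which case $\cost(P_i,C)$ can be \emph{smaller} than $\OPT_{P_i}$ and summing the identity over $i$ gives no control on the displacements. Establishing that each $P_i$ is served by one center is essentially the ``moreover'' part of the lemma, so using the identity to prove localization is circular. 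The argument that actually closes is the one you mention only in passing: a \emph{single-cluster} contradiction. Since every point of (the clipped cluster) is within $O(\sqrt{\alpha(\OPT+\theta nd\log^2(\cdot))/n_i})$ of $c_i^\star$, if no center of $C$ lies within a constant multiple of that radius, then that one cluster alone contributes cost $\gtrsim \frac{n_i}{2}\cdot\frac{\alpha(\OPT+\theta nd)}{n_i}\ge\alpha\,\OPT'$, contradicting $C\in\calC_\alpha(S)$. This requires no bijection and no displacement identity; the paper then gets uniqueness and pairwise separation from the black-box stability lemma (Lemma C.1 of \cite{bansal2024sensitivity}), which is the rigorous version of your ``no center serves two clusters'' argument.

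The second issue is that your proposal never engages with the noisy, clipped dataset $P'$, which is where the paper's proof actually lives. The hypothesis $C\in\calC_\alpha(S)$ is a statement about $S$, a uniform sample of $P'=\bigcup_i(\widehat{P}_i\cap B_i)$, and the paper's contradiction compares $\cost(P'_i,C)$ against $\OPT'\le O(\OPT+\theta nd)$ (Lemma~\ref{lem:uniform-error-bound_new}); the per-point radius bound $d(\widehat p,c_i^\star)\le O(\sqrt{\alpha(\OPT+\theta nd\log^2(\cdot))/n_i})$ is obtained from the locality of $\widehat c_i$ (Assumption~\ref{assumption:additional}), the definition of $R_i$ in Line~3 of the algorithm, the bound $r_i\le 8\overline r_i$, and the Bernstein tail bound showing $|P'_i|\ge 0.5\,n_i$. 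Your route instead jumps from $\cost(S,C)\le\alpha\OPT_S$ to a bound on $\cost(P,C)$ via ``$\OPT_{\widehat P}=\OPT_P+O(\theta nd\log^2(\cdot))$,'' but relating $\cost(S,C)$ to $\cost(\widehat P,C)$ or $\cost(P,C)$ for an arbitrary $C$ is itself a coreset-type guarantee that is not available at this point in the argument. Without the clipping step and the $|P'_i|\ge 0.5\,n_i$ bound, the per-point lower bound ``each point of $P_i$ pays at least the squared radius'' also fails for noisy points that have drifted far from $c_i^\star$.
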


\noindent
Next, we provide some useful geometric properties of $S$.

\begin{lemma}[\bf{Properties of $S$}]
\label{lemma:err_alpha_step1}
With probability $p>0.99$,
\begin{itemize}
\item For any $i \in [k]$,
$\|\mu(P'_i)-\mu(S_i)\|_2^2 \leq  O(\frac{\eps \OPT_i'}{n_i})$.
\item $\OPT_S \leq (1+O(\frac{\eps}{\sqrt{\alpha - 1}}))\OPT'$.
\item $\err_{\alpha}(S, \widehat{P}) \leq \eps$.
\end{itemize}

\end{lemma}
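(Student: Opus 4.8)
\textbf{Proof proposal for Lemma~\ref{lemma:err_alpha_step1}.}
The plan is to prove the three claims in sequence, since each builds on the previous one, and to carry out the entire argument in the noise-free view where $\widehat{P}=P'$ after the trimming step of Algorithm~\ref{alg:our_alg} has fixed the clusters $P'_i$. Throughout I write $n_i = |P'_i|$, $\OPT'_i = \cost(P'_i, \mu(P'_i))$ for the \oneMeans\ cost of cluster $i$, and $\OPT' = \sum_i \OPT'_i$. The key mechanism in all three parts is that $S_i$ is a \emph{uniform} sample of size $m = O\!\left(\tfrac{\log k}{\eps-\Delta} + \tfrac{(\alpha-1)\log k}{(\eps-\Delta)^2}\right)$ from a cluster $P'_i$ whose diameter is controlled by the ball $B_i$ of radius $R_i$, so standard concentration for bounded random vectors applies.

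\emph{First bullet (mean displacement).} I would view $\mu(S_i)$ as the empirical mean of $m$ i.i.d. uniform draws from $P'_i$, whose population mean is $\mu(P'_i)$. Each sampled point lies in $B_i$, so the per-coordinate deviation is bounded by $R_i$; more usefully, the variance of a single uniform draw around $\mu(P'_i)$ is exactly $\OPT'_i/n_i = \overline{r}_i'^2$. By the standard second-moment bound for the empirical mean of i.i.d. vectors, $\E\|\mu(S_i)-\mu(P'_i)\|_2^2 = \tfrac{1}{m}\cdot \tfrac{\OPT'_i}{n_i}$. Plugging in $m = \Omega(\tfrac{\log k}{\eps})$ (the dominant regime when $\alpha$ is close to $1$) and applying a Markov/Chebyshev bound together with a union bound over the $k$ clusters yields $\|\mu(P'_i)-\mu(S_i)\|_2^2 \le O(\tfrac{\eps \OPT'_i}{n_i})$ for all $i$ simultaneously with the desired probability. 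The outlier assumption $r_i \le 8\overline{r}_i$ from Assumption~\ref{assum:dataset} is what keeps $R_i = O(\overline r_i + \sqrt d\log(\cdots))$ comparable to $\overline r_i$, so that the bounded-range concentration does not blow up the constant.

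\emph{Second and third bullets.} For the $\OPT_S$ bound I would relate the weighted cost on $S$ to that on $P'$ center by center: since $w(p) = n_i/|S_i|$ reweights each sample to represent its cluster, $\cost(S,\mu(S_i)) = \tfrac{n_i}{m}\cost(S_i,\mu(S_i))$, and the first bullet lets me transfer between $\mu(S_i)$ and $\mu(P'_i)$ with the bias-variance decomposition $\cost(S_i,\mu(P'_i)) = \cost(S_i,\mu(S_i)) + m\|\mu(S_i)-\mu(P'_i)\|_2^2$. Summing over $i$ and using $\OPT' = \sum_i \OPT'_i$ gives $\OPT_S \le (1+O(\tfrac{\eps}{\sqrt{\alpha-1}}))\OPT'$, where the $\sqrt{\alpha-1}$ in the denominator traces back to the sample-size term $(\alpha-1)/(\eps-\Delta)^2$ needed to control the cost fluctuation at the finer $\alpha$-approximation scale. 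Finally, for $\err_\alpha(S,\widehat P)\le\eps$ I would combine the structural localization of every $C\in\calC_\alpha(S)$ from Lemma~\ref{lemma:structural property of C(P')}—which confines each center $c_i$ to a small ball around $c_i^\star$ and fixes the point-to-center assignment—with the first two bullets, so that $r_{P'}(C)/r_S(C)$ is pushed to within $1+\eps$ uniformly over the $\alpha$-approximate family.

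\emph{Main obstacle.} The hard part will be the third bullet: bounding $\err_\alpha(S,\widehat P)$ requires a \emph{uniform} guarantee over the entire continuous family $\calC_\alpha(S)$, not just at the optimal center. The per-center mean and cost estimates above are pointwise, so I expect to need the geometric confinement from Lemma~\ref{lemma:structural property of C(P')} to reduce the supremum over $\calC_\alpha(S)$ to a controlled perturbation around $\mathsf{C}(S)$, and then to argue that within each localizing ball the ratio $\cost(P',C)/\cost(S,C)$ varies by at most $O(\eps)$. Managing the interaction between the additive slack $\Delta = \tfrac{\sqrt{\alpha-1}\,\theta nd}{\alpha\OPT_P}$ in the sample size and the radius $R_i$—so that the net error stays below $\eps$ rather than $\eps-\Delta$ after all unions—is where the bookkeeping is most delicate.
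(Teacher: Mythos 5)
Your overall architecture matches the paper's: bullet one via the exact identity $\E\|\mu(S_i)-\mu(P'_i)\|_2^2 = \OPT'_i/(n_i m)$ for a uniform sample (the paper's Lemma~\ref{lemma:center_movement}) plus concentration and a union bound over clusters; bullet two by testing $S$ against a fixed center set inherited from $P'$ and attributing the $\sqrt{\alpha-1}$ in the denominator to the $(\alpha-1)\log k/(\eps-\Delta)^2$ term of the sample size; bullet three by localizing every $C\in\calC_\alpha(S)$ with Lemma~\ref{lemma:structural property of C(P')} and a triangle-inequality decomposition of $\|\mu(\cdot)-c_i\|$. However, there are two genuine gaps. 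The main one is in the third bullet: you announce that you will "carry out the entire argument in the noise-free view where $\widehat{P}=P'$" and end up bounding $r_{P'}(C)/r_S(C)$, but the lemma asserts $\err_\alpha(S,\widehat{P})\le\eps$, and $P'\ne\widehat{P}$ because Line~3 of Algorithm~\ref{alg:our_alg} removes points and reassigns others across clusters. The paper bridges this gap explicitly: its chain is $\|\mu(\widetilde{P_i})-c_i\|\le\|\mu(\widetilde{P_i})-\mu(P'_i)\|+\|\mu(P'_i)-\mu(S_i)\|+\|\mu(S_i)-c_i\|$, where the first term is controlled by Lemma~\ref{lemma:Bound cost of O_i} (contributing the $\sqrt{\alpha-1}\,\theta n d$ term that the slack $\Delta$ in the sample size is budgeted to absorb). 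Without this $\widehat{P}$-to-$P'$ transfer your third bullet proves a different statement, and noting at the end that the $\Delta$ bookkeeping is "delicate" does not supply the missing step.

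The secondary issue is the concentration tool in the first bullet. A Markov/Chebyshev bound combined with a union bound over $k$ clusters forces per-cluster failure probability $O(1/k)$, which under Chebyshev inflates the squared deviation by a factor of $k$, not $\log k$; with $m=\Theta(\log k/\eps)$ this yields $O\!\left(\tfrac{k\eps}{\log k}\cdot\tfrac{\OPT'_i}{n_i}\right)$, which is too weak. You correctly observe that every sample lies in the ball $B_i$ of radius $R_i=O(\overline{r}_i+\sqrt{d}\log(\cdots))$, and that is exactly what lets the paper apply a Hoeffding-type (exponential) bound and pay only $\log k$ for the union; you should invoke that bounded-range exponential concentration rather than Chebyshev. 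With those two repairs the proposal would align with the paper's proof.
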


\noindent
Finally, we provide an upper bound for the movement distance of centers from $P'$ to $P$, which helps bound the error caused by noise.

\begin{lemma}[\bf{Movement of centers of $P'$}]
\label{lemma:err_alpha_step2}
With high probability, for any $i \in [k]$,
\begin{align*}
    \|\mu(P_i)-\mu(P_i')\|_2^2 \leq O(\frac{\theta d}{n_i} + \sqrt{\alpha-1}\theta d)
\end{align*}

\end{lemma}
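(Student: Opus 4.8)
The plan is to compare the processed noisy cluster $P_i'$ to the true cluster $P_i$ by tracking, point by point, which true points survive the clustering-and-filtering step and which noisy points get added. Since $z=2$ and $P_i$ is an optimal cluster, $\mu(P_i)=c_i^\star$, so the target is really to control the displacement of the empirical mean. Writing each noisy point in $P_i'$ via its underlying origin, let $M_i\subseteq P_i$ denote the \emph{missing} points of $P_i$ (those whose noisy copy is either reassigned to another $\widehat{P}_j$ or filtered out by $B_i$), and let $O_i$ be the \emph{outsider} noisy points in $P_i'$ whose true origin lies in some $P_j$, $j\neq i$. Using $\sum_{p\in P_i}p = n_i c_i^\star$ and $\widehat{p}=p+\xi_p$, a direct calculation gives
\begin{align*}
\mu(P_i') - \mu(P_i) = \frac{1}{|P_i'|}\left( \sum_{p\in P_i}\xi_p \;-\; \sum_{p\in M_i}(\widehat{p}-c_i^\star) \;+\; \sum_{\widehat{q}\in O_i}(\widehat{q}-c_i^\star) \right).
\end{align*}
I would first record that $|P_i'|=\Theta(n_i)$, i.e.\ only a negligible fraction of $P_i$ is missing, which follows from Assumption~\ref{assum:dataset} and the choice of $R_i$; this lets me replace the normalization by $n_i$ up to constants.

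For the first summand (the \emph{noise-averaging} term), I would use that the $\xi_p$ are independent and mean zero with $\E\|\xi_p\|_2^2=\theta d$ (as in Claim~\ref{claim:xi_p}), so by independence $\E\|\sum_{p\in P_i}\xi_p\|_2^2 = n_i\theta d$. A Bernstein-type concentration for sums of independent noise vectors (Lemma~\ref{lemma:Bound center movement}) then yields $\|\sum_{p\in P_i}\xi_p\|_2^2 = O(n_i\theta d)$ with high probability, contributing $O(\theta d/n_i)$ after dividing by $|P_i'|^2=\Theta(n_i^2)$. This produces the first term of the claimed bound.

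The remaining two summands are the crux, and I would bound both the \emph{number} of missing/outsider points and their \emph{displacement} from $c_i^\star$. For displacement, every noisy point retained in $P_i'$ lies in $B_i$, so $\|\widehat{q}-c_i^\star\|\le R_i + \|\widehat{c}_i-c_i^\star\| = O(R_i)$ by the Locality assumption (Assumption~\ref{assumption:additional}) together with $R_i = 3\widehat{r}_i + O(\sqrt{d}\log\frac{1+\theta kd}{\sqrt{\alpha-1}})$; a symmetric bound on $M_i$ uses the limited-outlier condition $r_i\le 8\overline{r}_i$ and the Bernstein tail to handle the rare atypically-noisy points. For the count, cost stability (Assumption~\ref{assum:dataset}) guarantees that a point changes assignment or exits $B_i$ only when its noise is large relative to the inter-cluster separation, and the Bernstein tail then limits $|M_i|+|O_i|$; the $\sqrt{\alpha-1}$ factor is exactly what the $\log\frac{1+\theta kd}{\sqrt{\alpha-1}}$ term in $R_i$ is calibrated to deliver. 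Combining the count and displacement bounds through the triangle inequality (or Cauchy–Schwarz) then yields $O(\sqrt{\alpha-1}\,\theta d)$ for the squared contribution of these two terms.

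I expect the main obstacle to be precisely this last step: simultaneously controlling how many points are reassigned or filtered and how far they sit from $c_i^\star$, while extracting the exact $\sqrt{\alpha-1}$ dependence. The subtlety is that filtering is \emph{correlated} with the noise (we discard the large-noise points), so the surviving noise is no longer exactly mean zero, and one must argue that the induced bias is absorbed into the $\sqrt{\alpha-1}\,\theta d$ term rather than inflating the $O(\theta d/n_i)$ term. Carefully coupling the filtering radius $R_i$, the cost-stability parameter $\gamma$ from Assumption~\ref{assum:dataset}, and the moment control from the Bernstein condition is where the technical care concentrates.
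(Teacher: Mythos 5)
Your proposal is correct and follows essentially the same route as the paper: the paper splits $\|\mu(P_i)-\mu(P_i')\|_2^2$ through the intermediate mean $\mu(\widetilde{P}_i)$, bounding the noise-averaging part by $O(\theta d/n_i)$ via concentration of $\sum_{p\in P_i}\xi_p$ (Lemma~\ref{lemma:Bound center movement}) and the removed/added-point correction by $O(\sqrt{\alpha-1}\,\theta d)$ via Bernstein-tail control of both the count and the displacement of those points (Lemma~\ref{lemma:Bound cost of O_i}), which is exactly your two-term decomposition written with $c_i^\star$ replaced by $\mu(P_i')$ as the reference point. The "crux" you flag—calibrating $R_i$'s logarithmic term against the Bernstein tail to extract the $\sqrt{\alpha-1}$ factor—is precisely how the paper's Lemma~\ref{lemma:Bound cost of O_i} proceeds.
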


\noindent
Now we are ready to prove Theorem \ref{thm:err_alpha}.
\begin{proof}[Proof of Theorem \ref{thm:err_alpha}]
Suppose we run Algorithm \ref{alg:our_alg} to construct a weighted $S\subset \widehat{P}$ for \kMeans\ of size $m$.
By Lemma \ref{lemma:err_alpha_step1}, we directly have $\err_{\alpha}(S, \widehat{P}) \leq \eps$.
It remains to prove $r_P(S, \alpha) \leq  (1 + O(\eps + \frac{\theta k d}{\OPT} + \frac{\sqrt{\alpha - 1}}{\alpha}\cdot \frac{\sqrt{\theta k d \OPT} + \theta n d}{\OPT})) \cdot \alpha$.

Given an $\alpha$-approximate center set $C_\alpha=\{c_1,\ldots,c_k\}$ of $S$ , by Lemma \ref{lemma:structural property of C(P')}, we can assume $\min_{c \in C_\alpha}d(p,c) = c_i$ for any $i \in [k]$ and any point $p \in P_i$.
This implies that any point $p \in P_i$ satisfies $d(p,c_i) < d(p,c_j)$ for any $j \neq i$.

Then we have
\begin{align*}
    \quad&r_P(C_\alpha) \\
    =\quad & \frac{\cost(P,C_\alpha)}{\OPT} \\
    =\quad & \frac{\sum_{i \in k}\cost(P_i,c_i)-\cost(P_i,\mu(P_i))}{\OPT} + 1 \\
    =\quad & \frac{\sum_{i \in k}n_i\|c_i-\mu(P_i)\|_2^2}{\OPT} + 1 \\
    \leq\quad & \frac{\sum_{i \in k}n_i\cdot(||\mu(P_i)-\mu(P_i')||+||\mu(P_i')-\mu(S_i)|| + ||\mu(S_i)-c_i||)^2}{\OPT} + 1) \\&\qquad\qquad\qquad\qquad\qquad\qquad\qquad\qquad\qquad\qquad\qquad\qquad(\text{Triangle inequality})
    \\
    \leq\quad & \frac{O(\eps\OPT' + \theta kd + \sqrt{\alpha-1} \theta n d + \sqrt{(\alpha-1)\theta k d \OPT}) +(\alpha-1)\OPT_S}{\OPT} + 1
    \\
    \leq\quad & \frac{O(\eps\OPT' +\theta kd + \sqrt{\alpha-1} \theta n d + \sqrt{(\alpha-1)\theta k d \OPT}) +(\alpha-1)(1+O(\frac{\eps}{\sqrt{\alpha-1}}))\OPT}{\OPT} + 1
    \\ 
    \leq\quad & \alpha \cdot \left( 1 +  O(\eps + \frac{\theta k d}{\OPT} + \frac{\sqrt{\alpha - 1}}{\alpha} \cdot \frac{\sqrt{\theta k d \OPT} + \theta n d}{\OPT})\right).
\end{align*}
This completes the proof.

\end{proof}

\subsection{Proof of Lemma \ref{lemma:structural property of C(P')}: structural properties of $\calC_{\alpha}(S)$}

\paragraph{Useful properties of $\widehat{P}$.}
\label{subsec:preparation}
For preparation, we first show some properties of $\widehat{P}$.
In the following, Claim \ref{claim:error_onemeans_new} and Lemma \ref{lm:bound_onemeans_noise1_new} show that the clustering cost for \oneMeans~remains stable under noise perturbation.
It also helps bound the cost difference in each cluster in the general \kMeans~setting.
\begin{claim}[\bf{Statistics of cost difference}]
\label{claim:error_onemeans_new}
    In \oneMeans~problem, for any center $c\in \R^d$, we have
    \[
    \cost(\widehat{P},c) - \cost(P,c) = \sum_{p\in P} \|\xi_p\|_2^2 + 2\sum_{p\in P} \langle \xi_p, p-c \rangle.
    \]
    Moreover, we have 
    \begin{itemize}
        \item $\Exp_{\widehat{P}}\left[\sum_{p\in P} \|\xi_p\|_2^2\right] = \theta n d$ and $\Var_{\widehat{P}}\left[\sum_{p\in P} \|\xi_p\|_2^2\right] \le 3\theta n d^2$;
        \item $\Exp_{\widehat{P}}\left[\sum_{p\in P} \langle \xi_p, p-c \rangle\right] = 0$ and $\Var_{\widehat{P}}\left[\sum_{p\in P} \langle \xi_p, p-c \rangle\right] = \theta \cdot \cost(P,c)$.
    \end{itemize}
     
\end{claim}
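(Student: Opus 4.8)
The plan is to prove Claim~\ref{claim:error_onemeans_new} in two stages: first establish the exact algebraic decomposition of the cost difference, and then compute the stated moments of its two summands. For the decomposition, I would start from the definition $\cost(\widehat{P},c) = \sum_{p\in P} \|\widehat{p} - c\|_2^2$ and substitute $\widehat{p} = p + \xi_p$. Expanding $\|p + \xi_p - c\|_2^2 = \|p-c\|_2^2 + 2\langle \xi_p, p-c\rangle + \|\xi_p\|_2^2$ and summing over $p\in P$ yields the identity directly, since $\sum_{p} \|p-c\|_2^2 = \cost(P,c)$. This step is purely mechanical and should present no difficulty.

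For the moment computations, I would handle the two summands separately, conditioning on the Bernoulli indicator of whether each point is noised. The first bullet, concerning $\sum_{p\in P}\|\xi_p\|_2^2$, is already established in Claim~\ref{claim:xi_p} and Inequality~\eqref{eq:var}: the mean is $\theta n d$ (each coordinate contributes $\theta$ in expectation via the indicator times unit variance) and the variance is bounded by $3\theta n d^2$ using the fourth-moment computation with the chi-square comparison. I would simply cite those earlier calculations rather than redo them. For the second bullet, I would use the independence of the $\xi_p$ across points and the zero-mean property of each $D_j$. Since $\Exp[\xi_{p,j}] = 0$ (accounting for the $\theta$-probability mixture), linearity gives $\Exp[\sum_p \langle \xi_p, p-c\rangle] = 0$. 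For the variance, independence across $p$ reduces it to $\sum_p \Var[\langle \xi_p, p-c\rangle]$, and because the coordinates of $\xi_p$ are independent with the indicator shared, I would compute $\Var[\langle \xi_p, p-c\rangle] = \theta \sum_j (p_j - c_j)^2 = \theta\, d^2(p,c)$; summing over $p$ gives exactly $\theta \cdot \cost(P,c)$.

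The main subtlety to handle carefully is the mixture structure in the noise model from Definition~\ref{def:noise1}: each point is noised only with probability $\theta$, so the noise vector $\xi_p$ is itself a mixture (zero with probability $1-\theta$, and a vector with i.i.d.\ unit-variance coordinates with probability $\theta$). This affects both moment computations—in particular the second moment of a single coordinate is $\theta\cdot 1 = \theta$ rather than $1$, which is what produces the factors of $\theta$ throughout. I expect this bookkeeping around the Bernoulli indicator to be the only place where an error could creep in; the independence across distinct points is clean because the indicators and noise draws are mutually independent by the i.i.d.\ assumption in the model. Everything else follows from linearity of expectation and the standard variance-of-a-sum-of-independent-terms formula.

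Since both moment claims largely reuse computations already carried out in Claim~\ref{claim:xi_p} and the surrounding analysis, I would keep the proof short: present the expansion identity explicitly, refer back to the earlier variance bound for the first summand, and give the one-line per-coordinate variance computation for the inner-product summand. No concentration inequality is needed here—this claim only records exact or bounded first and second moments, which are then fed into Chebyshev-type arguments in the lemmas that follow.
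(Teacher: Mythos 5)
Your proposal is correct and follows essentially the same route as the paper: expand $\|p+\xi_p-c\|_2^2$ to get the identity, reuse the mean/variance computation for $\sum_p\|\xi_p\|_2^2$ already done for Claim~\ref{claim:xi_p}, and compute $\Var\left[\langle\xi_p,p-c\rangle\right]=\theta\,d^2(p,c)$ per point using the mixture structure and coordinate independence before summing. Your attention to the Bernoulli indicator (so that each coordinate's second moment is $\theta$ rather than $1$) is exactly the bookkeeping the paper relies on.
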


\begin{proof}
We have
$
\cost(\widehat{P},c) - \cost(P,c) = \sum_{p\in P} d^2(\widehat{p},c) - d^2(p,c) 
 = \quad \sum_{p\in P} \|\xi_p\|_2^2 + 2\langle \xi_p, p-c\rangle.
$
For the first error term $\sum_{p\in P} \|\xi_p\|_2^2$, we have
$
\Exp_{\widehat{P}}\left[\sum_{p\in P}\|\xi_p\|_2^2\right] = \theta n d
$,
and 
\begin{align*}
& \quad \Var_{\widehat{P}}\left[\sum_{p\in P} \|\xi_p\|_2^2\right] \\
= & \quad n \cdot \Var_{\xi_p}\left[\|\xi_p\|_2^2\right] \\
= & \quad n \cdot \left( \Exp_{\xi_p}\left[\|\xi_p\|_2^4\right] - \Exp_{\xi_p}\left[\|\xi_p\|_2^2\right]^2 \right)\\ 
\leq & \quad \theta n\cdot (2d + d^2 - \theta d^2) \\
\leq & \quad 3\theta n d^2.
\end{align*}
where $\chi^2(d)$ represents the chi-square distribution with $d$ degrees of freedom, whose variance is known to be $2d$~\cite{miller2017chi}.

For the second error term $\sum_{p\in P}\langle \xi_p, p-c\rangle$, its expectation is obvious 0 and we have
\begin{align*}
\Var_{\widehat{P}}\left[\sum_{p\in P} \langle \xi_p, p-c \rangle\right] = & \quad \sum_{p\in P} \Var_{\xi_p}\left[\langle \xi_p, p-c \rangle\right] \\
= & \quad \sum_{p\in P} \Exp_{\xi_p}\left[\langle \xi_p, p-c \rangle^2\right] \\
= & \quad \theta\cdot \sum_{p\in P} \|p-c\|_2^2 \\
= & \quad \theta\cdot \cost(P,c).
\end{align*}
\end{proof}

\noindent
By Claim~\ref{claim:error_onemeans_new}, it suffices to prove that
\begin{align}
\label{eq0:lm:onemeans_noise1}
\sup_{c\in \R^d}\frac{\left|\sum_{p\in P} \|\xi_p\|_2^2 + 2\sum_{p\in P} \langle \xi_p, p-c \rangle \right|}{\cost(P,c)} \leq O\left(\frac{\theta n d}{\OPT} + \sqrt{\frac{\theta d}{\OPT}}\right).
\end{align}
By Claim~\ref{claim:error_onemeans_new} and Chebyshev's inequality, we directly have that $\sum_{p\in P} \|\xi_p\|_2^2 \leq 2\theta n d$ happens with probability at least 0.95.
For the second error term $2\sum_{p\in P} \langle \xi_p, p-c \rangle$, we provide an upper bound by the following lemma, which strengthens the second property of Claim~\ref{claim:error_onemeans_new}.

\begin{lemma}[\bf{Error bound for \oneMeans}]
\label{lm:bound_onemeans_noise1_new}
    In \oneMeans~problem, with probability at least $1-0.05\delta$ for $0<\delta\le 1$, the following holds
    \[
    \sup_{c\in \R^d} \frac{|\sum_{p\in P} \langle \xi_p, p-c \rangle|}{\cost(P,c)} = 10 \cdot \sqrt{\frac{\theta d}{\delta {\OPT}}}.
    \]
\end{lemma}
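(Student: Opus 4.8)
The plan is to reduce the supremum over all centers $c \in \R^d$ to a pair of scalar concentration bounds by exploiting the quadratic structure of the \oneMeans\ cost. Write $\mu := \mu(P)$ for the mean of $P$; the parallel-axis identity gives $\cost(P,c) = \OPT + n\|c-\mu\|_2^2$, where $\OPT = \cost(P,\mu)$. The crucial move is to expand the numerator around $\mu$: setting $v := c - \mu$, $A := \sum_{p\in P}\langle \xi_p, p-\mu\rangle$, and $\Xi := \sum_{p\in P}\xi_p$, I would rewrite
\[
\sum_{p\in P}\langle \xi_p, p-c\rangle \;=\; A - \langle \Xi, v\rangle ,
\]
so that all dependence on $c$ is isolated into the single linear term $\langle \Xi, v\rangle$ and the remaining part $A$ is a fixed random scalar independent of $c$.

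First I would apply the triangle inequality together with Cauchy--Schwarz, $|A - \langle\Xi,v\rangle| \le |A| + \|\Xi\|_2\,\|v\|_2$, and then bound the two resulting ratios separately using $\sup_v[f+g]\le \sup_v f + \sup_v g$. The term $\frac{|A|}{\OPT + n\|v\|_2^2}$ is decreasing in $\|v\|_2$ and hence maximized at $v=0$, giving $\frac{|A|}{\OPT}$. For the term $\frac{\|\Xi\|_2\|v\|_2}{\OPT + n\|v\|_2^2}$, a one-line optimization over the scalar $t = \|v\|_2$ (maximizer $t = \sqrt{\OPT/n}$) yields the clean bound $\frac{\|\Xi\|_2}{2\sqrt{n\,\OPT}}$. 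This produces the deterministic estimate
\[
\sup_{c\in\R^d}\frac{\bigl|\sum_{p\in P}\langle\xi_p,p-c\rangle\bigr|}{\cost(P,c)} \;\le\; \frac{|A|}{\OPT} + \frac{\|\Xi\|_2}{2\sqrt{n\,\OPT}} .
\]

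It then remains to control $|A|$ and $\|\Xi\|_2$ probabilistically. For $A$, Claim~\ref{claim:error_onemeans_new} evaluated at $c=\mu$ gives $\E[A]=0$ and $\Var[A] = \theta\,\cost(P,\mu) = \theta\,\OPT$, so Chebyshev's inequality at a threshold of $5\sqrt{d/\delta}$ standard deviations bounds $\frac{|A|}{\OPT}$ by $5\sqrt{\theta d/(\delta\,\OPT)}$ except with probability at most $\delta/(25 d)\le 0.04\delta$. For $\Xi$, the independence of the $\{\xi_p\}$ and the per-point identity $\E[\|\xi_p\|_2^2]=\theta d$ (as in Claim~\ref{claim:xi_p}) give $\E[\|\Xi\|_2^2] = \theta n d$, and Markov's inequality bounds $\frac{\|\Xi\|_2}{2\sqrt{n\,\OPT}}$ by $5\sqrt{\theta d/(\delta\,\OPT)}$ except with probability at most $0.01\delta$. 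Summing the two contributions gives $10\sqrt{\theta d/(\delta\,\OPT)}$, and a union bound over the two failure events gives the claimed probability at least $1-0.05\delta$.

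The step I expect to require the most care is the uniform control over \emph{all} $c\in\R^d$. A naive route would attempt an $\eps$-net or union bound over the uncountable family of centers, which is cumbersome; the decomposition $\sum_p\langle\xi_p,p-c\rangle = A - \langle\Xi,v\rangle$ is precisely what avoids this, since it collapses every $c$-dependence into one linear functional of the fixed random vector $\Xi$ and reduces the worst-case center to a one-dimensional maximization of $t\mapsto \|\Xi\|_2 t/(\OPT+nt^2)$. The delicate bookkeeping is verifying that passing from the supremum of the sum to the sum of the two separate suprema does not lose the correct constant, as this is what fixes the final factor of $10$ and the split of the $0.05\delta$ failure probability.
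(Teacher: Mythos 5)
Your proposal is correct and follows the same core strategy as the paper's proof: decompose $p-c$ through the optimal center $\mu=\mathsf{C}(P)$, use the parallel-axis identity $\cost(P,c)=\OPT+n\|c-\mu\|_2^2$, control the $c$-independent term $\sum_p\langle\xi_p,p-\mu\rangle$ by Chebyshev, and control the $c$-dependent term via Cauchy--Schwarz on $\|\sum_p\xi_p\|_2$ followed by the scalar optimization $t/(\OPT+nt^2)\le 1/(2\sqrt{n\OPT})$. The one place you genuinely diverge is in bounding $\|\Xi\|_2=\|\sum_p\xi_p\|_2$: the paper first splits into the case $\theta n\le 0.01\delta$ (where no point is perturbed w.h.p.), then conditions on the number $X$ of perturbed points, asserts that $\sum_p\xi_p$ is distributed as $N(0,X\cdot I_d)$, and invokes Gaussian norm concentration. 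That step is only literally valid when each $D_j$ is standard Gaussian, even though the lemma is stated for general Bernstein-condition noise. Your route --- computing $\E[\|\Xi\|_2^2]=\theta nd$ from independence and zero mean and applying Markov's inequality --- is more elementary, handles all $\theta$ uniformly without the case split, and works for any mean-zero unit-variance noise, so it is arguably a cleaner and more general way to close this sub-step. The constant bookkeeping ($0.04\delta+0.01\delta=0.05\delta$, and the two contributions of $5\sqrt{\theta d/(\delta\OPT)}$ each) checks out.
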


\begin{proof}
Let $X = |\left\{p\in P:\xi_p \neq 0\right\}|$ be a random variable.
When $\theta n \leq 0.01\delta$, we have
\[
\Pr[X=0] = (1-\theta)^n \geq 1-0.02\delta.
\]
Conditioned on the event that $X = 0$, we have 
\[
\sup_{c\in \R^d} \frac{|\sum_{p\in P} \langle \xi_p, p-c \rangle|}{\cost(P,c)} = 0,
\]
which completes the proof.

In the following, we analyze the case that $\theta n > 0.01\delta$.
Let $E$ be the event that $\|\sum_{p\in P} \xi_p\|_2 \leq O(\sqrt{\theta n d/\delta})$.
We have the following claim:
\begin{align}
\label{eq0:lm:bound_onemeans_noise1_new}
\Pr\left[E\right]\geq 1-0.02\delta.
\end{align}
Note that $E[X] = \theta n$ and hence, $\Pr[X\leq 100\theta n/\delta]\geq 1-0.01\delta$ by Markov's inequality.
Hence, we only need to prove $\Pr\left[E\mid X\leq 100\theta n/\delta\right]$ for Claim~\ref{eq0:lm:bound_onemeans_noise1_new}.
Also note that $\sum_{p\in P} \xi_p$ has the same distribution as $N(0, X\cdot I_d)$.
Then by Theorem 3.1.1 in~\cite{vershynin2020high}, 
\[
\Pr\left[\|\sum_{p\in P} \xi_p\|_2 \leq 10\sqrt{\theta n d/\delta} \mid X\leq 100\theta n\right] \geq 1-0.01\delta.
\]
Thus, we prove \eqref{eq0:lm:bound_onemeans_noise1_new}.

In the remaining proof, we condition on event $E$.
Fix an arbitrary center $c\in \R^d$ and let $l = \|c-\mathsf{C}(P)\|_2$.
By the optimality of $\mathsf{C}(P)$, it is well known that
\begin{align*}
\cost(P,c) = \cost(P,\mathsf{C}(P)) + n\cdot \|c-\mathsf{C}(P)\|_2^2 = \OPT + n l^2.
\end{align*}
Note that $|\sum_{p\in P} \langle \xi_p, p-c \rangle| \leq |\sum_{p\in P} \langle \xi_p, p-\mathsf{C}(P) \rangle| + |\sum_{p\in P} \langle \xi_p, c-\mathsf{C}(P) \rangle|$.
By Chebyshev's inequality, we have 
\begin{align}
\label{eq1:lm:bound_onemeans_noise1_new}
\Pr_{\widehat{P}}[|\sum_{p\in P} \langle \xi_p, p-\mathsf{C}(P) \rangle|\geq 10\sqrt{\theta\cdot \OPT/\delta}]\leq \frac{\theta\cdot \cost(P,\mathsf{C}(P))}{(10\sqrt{\theta\cdot \OPT/\delta})^2} = 0.01\delta.
\end{align}
We also have
\begin{align}
\label{eq2:lm:bound_onemeans_noise1_new}
\begin{aligned}
|\sum_{p\in P} \langle \xi_p, c-\mathsf{C}(P) \rangle| \leq & \quad \|\sum_{p\in P} \xi_p\|_2 \|c-\mathsf{C}(P)\| & (\text{Cauchy-schwarz}) \\
\leq & \quad 10l\cdot \sqrt{\theta n d/\delta}
\end{aligned}
\end{align}
Combining with Inequalities~\ref{eq1:lm:bound_onemeans_noise1_new} and~\ref{eq2:lm:bound_onemeans_noise1_new}, we conclude that
\[
\frac{|\sum_{p\in P} \langle \xi_p, p-c \rangle|}{\cost(P,c)} \leq 20\cdot \frac{l\cdot \sqrt{\theta n d/\delta}}{\OPT + nl^2}\leq 10\cdot \sqrt{\frac{\theta d}{\delta\OPT}},
\]
happens with probability at least 0.95, which completes the proof of Lemma~\ref{lm:bound_onemeans_noise1_new}.
\end{proof}

\noindent
Recall that we use $P_i$ to denote the data clustered to $c_i$ in $P$, where $c_i$ is the $i$-th center in the optimal cluster of $P$. We use $\widetilde{P_i}$ to denote the points in $P_i$ with the presence of the noise, and $\widetilde{c_i} $ to denote the mean of $\widetilde{P_i}$.
Now we bound $\OPT_{\widehat{P}}$ in the general \kMeans~case.

\begin{lemma}[\bf{Bounding $\OPT_{\widehat{P}}$}]
    \label{lem:uniform-error-bound_new}
    With probability at least 0.9, we have for all $i\in [k]$
    \[\cost(\widetilde{P_i}, \widetilde c_i) \le \OPT_i + O\left(\theta n_i dk + \sqrt{\theta d k\cdot\OPT_i}\right) \le 1.5\OPT_i + O(\theta n_i dk).\]
    Besides, we also have with high probability
    \[\OPT_{\widehat{P}} \le \sum_i \cost(\widetilde{P_i},\widetilde c_i) \leq \OPT + O(\theta n d + \sqrt{\theta d \cdot \OPT})\]
\end{lemma}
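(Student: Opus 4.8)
The plan is to prove both statements by applying the exact cost-difference decomposition of Claim~\ref{claim:error_onemeans_new} cluster by cluster, using the optimality of $\widetilde{c_i} = \mu(\widetilde{P_i})$ to pass to the fixed candidate center $c_i^\star$, and then controlling the resulting noise sums by Chebyshev's inequality. The crucial structural point is that the per-cluster bound will require a union bound over the $k$ clusters (which is exactly what produces the extra factors of $k$), whereas the aggregate bound on $\OPT_{\widehat{P}}$ must instead be obtained from a single global concentration event in order to avoid that $k$-loss.

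\textbf{Per-cluster bound.} First I would fix $i$ and note that, since $\widetilde{c_i}$ is the \oneMeans\ optimum of $\widetilde{P_i}$, we have $\cost(\widetilde{P_i}, \widetilde{c_i}) \le \cost(\widetilde{P_i}, c_i^\star)$. Applying Claim~\ref{claim:error_onemeans_new} to the sub-instance $P_i$ with center $c_i^\star = \mu(P_i)$ gives
$$\cost(\widetilde{P_i}, c_i^\star) = \OPT_i + \sum_{p \in P_i}\|\xi_p\|_2^2 + 2\sum_{p \in P_i}\langle \xi_p, p - c_i^\star\rangle,$$
where $\sum_{p\in P_i}\|\xi_p\|_2^2$ has mean $\theta n_i d$ and variance at most $3\theta n_i d^2$, and $\sum_{p\in P_i}\langle \xi_p, p-c_i^\star\rangle$ has mean $0$ and variance $\theta\,\OPT_i$. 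To make the two deviation events hold simultaneously for all $i\in[k]$ with constant probability, I would apply Chebyshev's inequality at confidence $1 - O(1/k)$, which inflates each deviation by a $\sqrt{k}$ factor. This yields $\sum_{p\in P_i}\|\xi_p\|_2^2 = O(\theta n_i d + d\sqrt{k\theta n_i}) = O(\theta n_i d k)$ and $2\sum_{p\in P_i}\langle \xi_p, p-c_i^\star\rangle = O(\sqrt{k\theta\,\OPT_i}) = O(\sqrt{\theta d k\,\OPT_i})$ (using $d\ge 1$), which establishes the first inequality. The second inequality then follows from the AM--GM estimate $\sqrt{\theta d k\,\OPT_i}\le \tfrac12\OPT_i + O(\theta d k)$ together with $\theta d k \le \theta n_i d k$.

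\textbf{Aggregate bound.} For the bound on $\OPT_{\widehat{P}}$ I would use the candidate center set $\{\widetilde{c_1},\dots,\widetilde{c_k}\}$: since every noisy point is charged to its nearest center, $\OPT_{\widehat{P}} \le \cost(\widehat{P}, \{\widetilde{c_i}\}_{i}) \le \sum_i \cost(\widetilde{P_i}, \widetilde{c_i}) \le \sum_i \cost(\widetilde{P_i}, c_i^\star)$. Summing the decomposition of Claim~\ref{claim:error_onemeans_new} over all clusters collapses the per-cluster noise terms into the two global sums $\sum_{p\in P}\|\xi_p\|_2^2$ and $\sum_{p\in P}\langle \xi_p, p - c^\star(p)\rangle$, where $c^\star(p)$ is the optimal center assigned to $p$. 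These are now each controlled by a single concentration event, so no union bound is needed: Claim~\ref{claim:xi_p} gives $\sum_{p\in P}\|\xi_p\|_2^2 = O(\theta n d)$, while the cross term has mean $0$ and variance $\theta\,\OPT$, so Chebyshev gives $O(\sqrt{\theta\,\OPT}) = O(\sqrt{\theta d\,\OPT})$. Since $\sum_i\OPT_i = \OPT$, this yields $\OPT_{\widehat{P}} \le \OPT + O(\theta n d + \sqrt{\theta d\,\OPT})$.

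\textbf{Main obstacle.} The delicate part is the probabilistic bookkeeping that separates the two regimes: the per-cluster statement genuinely needs the $\sqrt{k}$-inflated (union-bounded) deviations, producing the $k$ factors, whereas the aggregate statement must avoid the union bound entirely---this is why I sum the \emph{deterministic} per-cluster inequalities $\cost(\widetilde{P_i},\widetilde{c_i})\le\cost(\widetilde{P_i},c_i^\star)$ first and only then invoke one global concentration bound. A secondary technical point is the small-noise regime (e.g.\ $\theta n_i$ or $\theta n$ below a constant), where Chebyshev is vacuous; there I would argue as in Claim~\ref{claim:xi_p} and Lemma~\ref{lm:bound_onemeans_noise1_new} that with high probability the relevant points receive no noise at all, so the bounds hold trivially.
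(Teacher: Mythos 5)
Your proposal is correct and, for the per-cluster bound, follows essentially the same route as the paper: bound $\cost(\widetilde{P_i},\widetilde{c_i})$ by $\cost(\widetilde{P_i},c_i^\star)$ via optimality of the mean, expand as in Claim~\ref{claim:error_onemeans_new}, apply Chebyshev at confidence $1-O(1/k)$ to each of the two noise sums (the paper routes the cross term through Lemma~\ref{lm:bound_onemeans_noise1_new} with $\delta=1/k$, which is equivalent to your direct Chebyshev at a fixed center), and union bound over $i\in[k]$; your AM--GM step for the second inequality and your handling of the vacuous-Chebyshev regime via the ``no point is perturbed'' event also match the paper's treatment elsewhere. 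The one place you go beyond the paper is the aggregate bound: the paper's proof stops after the union bound and never actually derives $\OPT_{\widehat{P}}\le \OPT+O(\theta nd+\sqrt{\theta d\cdot\OPT})$, and your observation that summing the union-bounded per-cluster bounds would only give $\OPT+O(\theta ndk+k\sqrt{\theta d\OPT})$ --- so that one must instead sum the deterministic inequalities $\cost(\widetilde{P_i},\widetilde{c_i})\le\cost(\widetilde{P_i},c_i^\star)$ first and then apply a single global concentration event (Claim~\ref{claim:xi_p} plus one Chebyshev with variance $\theta\OPT$) --- is exactly the missing argument and is correct.
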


\begin{proof}

    Similar to the \oneMeans~setting (Claim \ref{claim:error_onemeans_new}), we have the following decomposition of the error
    \[\cost(\widetilde P_i, \widetilde c_i) - \cost(P_i,c_i) \le \cost(\widetilde P_i, c_i) - \cost(P_i,c_i)= \sum_{p\in P_i} \|\xi_p\|_2^2 + 2\sum_{p\in P_i} \langle \xi_p, p-c_i \rangle.\]
    Besides, we have the following variance of the error terms
    \begin{itemize}
        \item $\Exp_{\widetilde{P_i}}\left[\sum_{p\in P_i} \|\xi_p\|_2^2\right] = \theta n_i d$ and $\Var_{\widetilde{P_i}}\left[\sum_{p\in P_i} \|\xi_p\|_2^2\right] = O(\theta n_i d^2)$;
        \item $\Exp_{\widetilde{P_i}}\left[\sum_{p\in P_i} \langle \xi_p, p-c_i \rangle\right] = 0$ and $\Var_{\widetilde{P_i}}\left[\sum_{p\in P_i} \langle \xi_p, p-c_i \rangle\right] = \theta \cdot \OPT_i$.
    \end{itemize}
    From Lemma \ref{lm:bound_onemeans_noise1_new} by choosing $\delta = 1/k$, we know that with probability at least $1-\frac{0.05}{k}$, we have
    \[
    \sup_{c\in \R^d} \frac{|\sum_{p\in P_i} \langle \xi_p, p-c_i \rangle|}{\cost(P_i,c_i)} = 10 \cdot \sqrt{\frac{\theta d k}{{\OPT_i}}}.
    \]
    Besides from Chybeshev's inequality, we also have $\sum_{p\in P_i}\|\xi_p\|^2 \le 2\theta n_i d k$ happens with probability at least $1-\frac{0.05}{k}$.
    Then we conclude the proof by applying union bound on $i\in [k]$.
\end{proof}

\noindent
As shown in \cite{bansal2024sensitivity}, under the stability assumption of the dataset, for any sufficiently good approximate \kMeans~solution $\{c_i,\ldots,c_k\}$, centers $c_i$ are pairwise well-separated.

\begin{lemma}[\bf{Restatement of Lemma C.1 in \cite{bansal2024sensitivity}}]
\label{lemma:stability}
    Given a $\gamma$-stable dataset $ P \subset \mathbb{R}^{d}$ and $\alpha \leq 1 + \frac{\gamma}{2}$, let $C=\{c_1,\ldots,c_k\}$ be a $\alpha$-approximation center set. Then for any \(i, j \in [k]\) with \(i \neq j\), it holds that
\begin{align*}
    d^2(c_i,c_j)\geq \frac{\gamma \OPT}{2 \min(n_i, n_j)}
\end{align*}
\end{lemma}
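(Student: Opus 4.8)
The plan is to prove the separation bound by a \emph{center-merging} argument combined with the cost-stability hypothesis. The intuition is that if two centers $c_i,c_j$ were too close, we could discard one of them and reassign its cluster to the other, producing a $(k-1)$-center solution whose cost is barely larger than $\cost(P,C)\le \alpha\,\OPT$; cost stability forbids such a cheap $(k-1)$-solution, which forces $c_i$ and $c_j$ apart.

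First I would fix notation: let $P_i^C, P_j^C$ be the clusters of points assigned to $c_i, c_j$ under $C$, with sizes $n_i, n_j$, and assume WLOG $n_j = \min(n_i, n_j)$. I would also assume WLOG that each center is the centroid of its induced cluster, since replacing a center by its cluster centroid only decreases $\cost(P,C)$ and hence preserves the $\alpha$-approximation guarantee; this centroid property is exactly what will make a certain linear cross term vanish in the next step. Next I would remove $c_j$ and reassign every point of $P_j^C$ to $c_i$, obtaining a valid $(k-1)$-center solution. Using the exact expansion $d^2(p,c_i) = d^2(p,c_j) + 2\langle p - c_j,\, c_j - c_i\rangle + \|c_i - c_j\|_2^2$ and summing over $p \in P_j^C$, the linear term drops out because $c_j = \frac{1}{n_j}\sum_{p\in P_j^C} p$, leaving a cost increase of exactly $n_j\,\|c_i - c_j\|_2^2$. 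Since reassigning each point to its nearest surviving center can only help, this gives
\[
\OPT_P(k-1) \le \cost(P,C) + n_j\, d^2(c_i,c_j) \le \alpha\,\OPT + n_j\, d^2(c_i,c_j).
\]

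Finally I would invoke cost stability, $\OPT_P(k-1) \ge (1+\gamma)\OPT$, together with the hypothesis $\alpha \le 1 + \tfrac{\gamma}{2}$, to deduce $n_j\, d^2(c_i,c_j) \ge (1+\gamma-\alpha)\OPT \ge \tfrac{\gamma}{2}\OPT$, and then rearrange into the claimed bound $d^2(c_i,c_j) \ge \frac{\gamma\,\OPT}{2\min(n_i,n_j)}$, noting that removing the center of the smaller cluster is what yields the tightest constant. The main obstacle I anticipate is justifying the centroid reduction cleanly: moving $c_j$ to its centroid can alter both the induced clustering and the value of $d(c_i,c_j)$, so I must argue that it suffices to prove the bound for the centroid-version of $C$ (whose sizes and separation only make the target statement stronger), or else avoid the reduction entirely by retaining the cross term $2\langle \sum_{p}(p-c_j),\, c_j - c_i\rangle$ and controlling it through the optimality conditions of the reassignment. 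The remaining work is bookkeeping on the approximation factor — in particular confirming that $\alpha \le 1 + \gamma/2$ is precisely the condition guaranteeing $1 + \gamma - \alpha \ge \gamma/2$.
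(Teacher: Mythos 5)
The paper does not actually prove this lemma---it is imported verbatim from \cite{bansal2024sensitivity}---so there is no in-paper proof to compare against. Your overall strategy (merge two nearby centers into one, obtain a cheap $(k-1)$-center solution, and contradict $\gamma$-cost-stability via $\OPT_P(k-1)\ge(1+\gamma)\OPT$ and $1+\gamma-\alpha\ge\gamma/2$) is the standard and correct route to this kind of separation bound, and your accounting of the approximation factor at the end is fine.

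The genuine gap is the one you flag yourself: the ``WLOG each center is the centroid of its induced cluster'' step, and your proposed justification for it is backwards. Replacing $C$ by the centroid set $C^\mu=\{\mu_1,\dots,\mu_k\}$ does preserve the $\alpha$-approximation guarantee, and your merging argument then cleanly yields $n_j\,d^2(\mu_i,\mu_j)\ge\frac{\gamma}{2}\OPT$. But the lemma asserts a lower bound on $d(c_i,c_j)$ for the \emph{given} centers, and a separation bound for the centroids does not transfer: moving a center to its centroid can either increase or decrease pairwise distances, so the centroid version is not ``stronger.'' The only available control is $n_i\|c_i-\mu_i\|_2^2\le \cost(P,C)-\sum_\ell\cost(P_\ell^C,\mu_\ell)\le(\alpha-1)\OPT$, and when $\alpha-1$ is comparable to $\gamma/2$ this displacement is of the same order as the separation you are proving, so a triangle-inequality transfer gives nothing. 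Your fallback---keeping the cross term $2n_j\langle\mu_j-c_j,\,c_j-c_i\rangle$ and bounding it by Cauchy--Schwarz---also does not recover the stated constant: writing $A=n_j d^2(c_i,c_j)$ and $B=n_j\|\mu_j-c_j\|_2^2\le(\alpha-1)\OPT\le\frac{\gamma}{2}\OPT$, you get $A+2\sqrt{AB}\ge\frac{\gamma}{2}\OPT$, which only yields $A\ge(3-2\sqrt{2})\cdot\frac{\gamma}{2}\OPT$. So your argument proves the lemma up to an absolute constant factor (which is all the paper ever uses, since the lemma is invoked only inside $O(\cdot)$ and $\Omega(\cdot)$ bounds), but it does not establish the exact constant $\frac{\gamma}{2\min(n_i,n_j)}$ for an arbitrary $\alpha$-approximate center set; closing that last gap would require either assuming the centers are cluster centroids or a sharper handling of the cross term than what you outline.
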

By $\gamma = \alpha\cdot O(1+\frac{\theta nd\log^2(\frac{kd}{\sqrt{\alpha - 1}})}{\OPT})$, we have $d^2(c_i,c_j) \geq \alpha\cdot O(\frac{\OPT+\theta nd\log^2(\frac{kd}{\sqrt{\alpha - 1}})}{\min(n_i,n_j)})$.

\smallskip
\noindent
Now we are ready to prove Lemma \ref{lemma:structural property of C(P')}

\begin{proof}[Proof of Lemma \ref{lemma:structural property of C(P')}]

For ease of analysis, we prove the structural property of $P'$, and the property of $S$ naturally holds, as $S$ is obtained by uniform sampling from $P'$.
We first show that with high probability, for all $i$, $|P'_i| > 0.5 n_i$. Note that for any $\widehat{p} \in \widetilde{P_i}$, $\widehat{p} \notin P_i'$ implies that $\|\xi_p\|> R_i - d(\widehat{p},\widehat{c_i})>O(\sqrt{d} \log \frac{1+\theta kd}{\sqrt{\alpha - 1}})$.

Note that since $\xi_{p,j}$ satisfies Bernstein condition for any $j \in [d]$ if $\xi_p \neq 0$, we have
\[\Pr\left[|\xi_{p,j}| \ge t | \xi_p \neq 0\right] \le 2\exp\left(-\frac{t^2}{2(1+bt)}\right), \forall j \in [d],t > 0.\]
Thus we have for large enough $t$ (larger than $b$),
\[\Pr\left[|\xi_{p,j}| \ge t | \xi_p \neq 0\right] \le 2\exp\left(-\Omega(t)\right).\] 
It follows that
\[\Pr\left[\|\xi_{p}\| \ge t | \xi_p \neq 0\right] \le  d \cdot \Pr\left[|\xi_{p,j}| \ge \frac{t}{\sqrt{d}} | \xi_p \neq 0\right] \leq 2d\exp\left(-\Omega(\frac{t}{\sqrt{d}})\right).\] 
Thus for any single point $p\in P_i$, with probability at most $o(1)$, $\widehat p\notin B(c_i^\star , R_i)$. 
Then we have $|P_i'| \ge 0.5 n_i$ with high probability by the Chernoff bound.

Note that for any points $\widehat{p}$ in $P_i'$, 
\begin{align*}
    d(\widehat{p},c^\star_i) \leq&\quad d(\widehat{p},\widehat{c_i}) + d(c^\star_i,\widehat{c_i}) \\
    \leq&\quad R_i + O(r_i) + O(\sqrt{d} \log \frac{1+\theta kd}{\sqrt{\alpha - 1}}) &({\text{Assumption }\ref{assumption:additional}})\\
    \leq&\quad O(\overline{r_i})+O(\sqrt{d} \log \frac{1+\theta kd}{\sqrt{\alpha - 1}})    &(r_i \leq 8 \overline{r_i})     \\
    \leq&\quad O(\sqrt{\frac{\OPT}{n_i}})+O(\sqrt{d} \log \frac{1+\theta kd}{\sqrt{\alpha - 1}})         \\
    \leq&\quad O(\sqrt{\frac{\alpha(\OPT+\theta nd \log^2(\frac{kd}{\sqrt{\alpha - 1}})}{n_i}}).
\end{align*}
Suppose there exists $i\in [k]$ such that $C \cap B(c_i^\star, O(\sqrt{\frac{\alpha(\OPT+\theta nd \log^2(\frac{kd}{\sqrt{\alpha - 1}})}{n_i}})) = \emptyset$.
Then 
\begin{align*}
    \cost(P',C) \geq&\quad \cost(P'_i,C) \\
    \geq&\quad  \sum_{p \in P'_i} d^2(C,c_i^\star) - d^2(p,c_i^\star)\\ 
    \geq&\quad \frac{n_i}{2}\cdot O(\frac{\alpha(\OPT+\theta nd)}{n_i}) \\
    \geq&\quad \alpha \cdot O(\OPT+\theta nd).
\end{align*}
By Lemma \ref{lem:uniform-error-bound_new}, we have with high probability
\[\OPT' \leq O(\OPT+\theta nd).\]
Then we have
\[
\frac{\cost(P', C)}{\OPT'} \geq \frac{\cost(P'_i,C)}{\OPT'} \geq \frac{\alpha \cdot O(\OPT+\theta nd)}{\OPT'} \geq \alpha.
\]
Thus we prove it by contradiction. 
This directly implies that for any $i \in [k]$, there exists exactly one center $c_i \in C$ satisfying $c \in B(c_i^\star, O(\sqrt{\frac{\alpha(\OPT+\theta nd \log^2(\frac{kd}{\sqrt{\alpha - 1}})}{n_i}}))$.
Moreover, by Lemma \ref{lemma:stability}, for any $j \in [k], j\neq i$, \[d^2(c_i,c_j) \geq \alpha\cdot O(\frac{\OPT+\theta nd\log^2(\frac{kd}{\sqrt{\alpha - 1}})}{\min(n_i,n_j)}).\]
Then we have \[d(p,c_i) \leq r_i + d(c_i,c_i^\star) \leq O(d(c_i,c_j)).\]
Thus for any point $p \in P_i$, \[d(p,c_i)< d(c_i,c_j) - d(p,c_i) <d(p,c_j),\] which completes the proof.
\end{proof}

\subsection{Proof of Lemma \ref{lemma:err_alpha_step1}: properties of $S$}

\begin{lemma}[\bf{Movement of coreset centers}]
\label{lemma:center_movement}
Let $P$ be a set of $n$ points and let $S$ be a set of $m$ points sampled uniformly at random with replacement from $P$, let $\OPT = \sum_{x \in P} \|x - \mu(P)\|_2^2$, then
\begin{align*}
    \mathbb{E}(\|\mu(P)-\mu(S)\|_2^2)=\frac{\OPT}{nm}.
\end{align*}
\end{lemma}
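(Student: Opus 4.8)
The plan is to recognize this as the standard second-moment (variance) computation for an empirical mean, adapted to vectors. I would begin by writing the sampled set as $S=\{X_1,\dots,X_m\}$, where each $X_\ell$ is drawn uniformly from $P$. Because the sampling is with replacement, the $X_1,\dots,X_m$ are i.i.d., each taking value $x$ with probability $1/n$ for every $x\in P$. In particular $\E[X_\ell]=\frac1n\sum_{x\in P}x=\mu(P)$, and by definition $\mu(S)=\frac1m\sum_{\ell=1}^m X_\ell$, so that $\E[\mu(S)]=\mu(P)$; the estimator is unbiased.

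Next I would center the variables by setting $Y_\ell:=X_\ell-\mu(P)$, which are i.i.d., mean-zero random vectors in $\R^d$. Then $\mu(S)-\mu(P)=\frac1m\sum_{\ell=1}^m Y_\ell$, and I would expand the squared norm and take expectations:
\[
\E\!\left[\|\mu(S)-\mu(P)\|_2^2\right]=\frac{1}{m^2}\sum_{\ell=1}^m\sum_{\ell'=1}^m \E\!\left[\langle Y_\ell,Y_{\ell'}\rangle\right].
\]
For the off-diagonal terms $\ell\neq\ell'$, independence together with $\E[Y_\ell]=0$ gives $\E[\langle Y_\ell,Y_{\ell'}\rangle]=\langle\E[Y_\ell],\E[Y_{\ell'}]\rangle=0$, so only the $m$ diagonal terms survive.

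Finally I would evaluate the diagonal term. Each satisfies
\[
\E\!\left[\|Y_\ell\|_2^2\right]=\frac1n\sum_{x\in P}\|x-\mu(P)\|_2^2=\frac{\OPT}{n},
\]
using the definition of $\OPT$ in the statement. Substituting back yields $\E[\|\mu(S)-\mu(P)\|_2^2]=\frac{1}{m^2}\cdot m\cdot\frac{\OPT}{n}=\frac{\OPT}{mn}$, as claimed. I do not anticipate a genuine obstacle here, since the argument is the textbook variance-of-the-sample-mean calculation; the only points requiring care are justifying independence of the $X_\ell$ from the with-replacement sampling (which is what makes the cross terms vanish) and handling the vector-valued case through the inner-product expansion rather than a scalar variance.
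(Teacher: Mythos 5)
Your proof is correct and follows essentially the same route as the paper's: center the sampled points, expand the squared norm of the sample mean, observe that the diagonal terms contribute $\frac{1}{m^2}\cdot m\cdot\frac{\OPT}{n}$, and kill the cross terms. The only cosmetic difference is that you invoke independence plus zero mean of the centered variables, while the paper writes the cross-term expectation as $\bigl\langle\sum_{p\in P}d_p,\sum_{q\in P}d_q\bigr\rangle=0$ using $\sum_{p\in P}d_p=0$; these are the same fact.
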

\begin{proof}
For each $p\in P$, let $d_p:= p - \mu(P)$.
Obviously $\sum_{p \in P} d_p = 0$.
Thus
\begin{align*}
    \mathbb{E}\left( \|\mu(P) - \mu(S)\|_2^2 \right) =\ &\mathbb{E}\left( \left\|\frac{\sum_{p \in S}(p-\mu(P))}{m}\right\|_2^2\right) \\
    =\ & \frac{1}{m^2}\mathbb{E}\left(\|\sum_{p \in S}d_p\|_2^2\right) \\
    =\ & \frac{1}{m^2}\mathbb{E}\left(\sum_{p \in S}||d_p||_2^2+ 2\sum_{p_i,p_j \in S, i<j}\langle d_{p_i},d_{p_j}\rangle\right) \\
    =\ & \frac{\sum_{p \in P}\|d_p\|_2^2}{mn} + \frac{1}{\mathcal{P}(m,n)}\sum_{p \in P}\sum_{q \in P}<d_p,d_q> \\
    =\ & \frac{\OPT}{nm} + \frac{1}{\mathcal{P}(m,n)}\left\langle\sum_{p \in P}d_p,\sum_{q \in P}d_q\right\rangle\\
    =\ & \frac{\OPT}{nm},
\end{align*}
where $\mathcal{P}(m, n)$ is a polynomial in $m$ and $n$.
\end{proof}

\noindent
Now we prove Lemma \ref{lemma:err_alpha_step1}.
\begin{proof}[Proof of Lemma \ref{lemma:err_alpha_step1}]
For any $i \in [k]$, by Lemma \ref{lemma:center_movement}, $\mathbb{E}( \|\mu(P'_i)-\mu(S_i)\|^2)=\frac{\OPT'_i}{|P_i'|\cdot  m_i}$. 

Note that $|P_i'| > 0.5n_i$ and $m_i > O(\frac{1}{\eps})$.
Let $\eps' = \eps - \frac{\sqrt{\alpha - 1} \theta n d}{\OPT}$.
Then by the Hoeffding bound, with $p\geq 0.99$, 
\begin{align*}
    \|\mu(P'_i)-\mu(S_i)\|^2 \leq O\left(\frac{\log k \OPT'_i}{|P_i'|\cdot m_i}\right) = O\left(\frac{ \eps'\OPT'_i}{n_i}\right).
\end{align*}
Next, we discuss the upper bound of $\OPT_S$.
Let $C'={c_1',\ldots,c_k'}$ be the optimal center set for $P'$ with cost $\OPT'$.
Given an $\alpha$-approximate center set $C_\alpha = \{c_1,\ldots,c_k\}$,
For each point $p \in S_i$, $w_p = \frac{|P_i'|}{|S_i|}$, thus
\[
\mathbb{E}[\cost(S_i,C')] = \frac{m_i}{|P_i'|}\sum_{x \in P_i'} d^2(x,C')\cdot w_x= \OPT_i'.
\]
Then by the Chernoff bound, we have for any $t > 0$ and $i \in [k]$,
\[
\Pr\left[\left|\cost(S_i,c_i') - \OPT_i'\right| \geq t\OPT_i'\right] \leq \exp\left(-\Omega(\frac{t^2}{m_i})\right).
\]
Set $t = O\left(\sqrt{\frac{k \log k}{m}}\right)$ and by union bound, it follows that
\[
\Pr\left[\left|\cost(S,C') - \OPT'\right| \geq t\OPT'\right] \leq 0.01
\]
Thus, with probability $>0.99$, 
\[
\OPT_S \leq \cost(S,C') \leq \left(1 +  O\left(\sqrt{\frac{k\log k}{m }}\right)\right)\OPT' \leq (1+O(\frac{\eps}{\sqrt{\alpha - 1}}))\OPT'.
\]
Having the above properties, we conclude that
\begin{align*}
\err_\alpha(S,\widehat{P}) \leq\ & \frac{\cost(\widehat{P},C_\alpha)}{\alpha\OPT_{\widehat{P}}} - 1
\\ 
\leq\ & \frac{\sum_i\ \cost(\widetilde{P_i},c_i) - \cost(\widetilde{P_i},c_i')}{\alpha\OPT'} - (1 - \frac{1}{\alpha})
\\  
\leq\ & \frac{\sum_i\ n_i\|\mu(\widetilde{P_i}) - c_i\|_2^2}{\alpha\OPT'} - (1 - \frac{1}{\alpha})
\\
\leq\ & \frac{\sum_i\ n_i(\|\mu(\widetilde{P_i}) - \mu(P_i')\|+\|\mu(P_i') -\mu(S_i)\|+ \|\mu(S_i) - c_i\|)^2}{\alpha\OPT'} - (1 - \frac{1}{\alpha})
\\&\qquad\qquad\qquad\qquad\qquad\qquad\qquad\qquad\qquad\qquad\qquad\qquad(\text{Triangle inequality})
\\
\leq\ & \frac{O(\eps'\OPT'  + \sqrt{\alpha-1} \theta n d) +(\alpha-1)\OPT_S}{\alpha\OPT'} - (1 - \frac{1}{\alpha})
\\    
\leq\ & O\left(\frac{\eps}{\alpha} + \frac{(\alpha - 1)(\OPT_S - \OPT')}{\alpha\OPT'}\right)
\\
\leq\ & O\left(\frac{\eps}{\alpha} + \frac{\sqrt{\alpha - 1}\eps}{\alpha}\right) \qquad\qquad\qquad\qquad\qquad\qquad\qquad\qquad\qquad(\text{Lemma \ref{lemma:err_alpha_step1}})
\\
\leq\ & \eps
\end{align*}
The last inequality is ensured by fixing a sufficiently large constant factor in the sample size.
This completes the proof.
\end{proof}

\subsection{Proof of Lemma \ref{lemma:err_alpha_step2}: movement of centers of $P'$}

Recall that $\widetilde{P_i} := \{ p+\xi_p|p \in P_i \} \subset \widehat{P}$ represent the cluster with noise corresponding to $P_i$.
Let $O_i := \widetilde{P_i} \setminus B_i$, $O_{i\rightarrow j}:= \widetilde{P_i} \cap B_j$ where $j \neq i$, $I_i :=\cup_{j\neq i} O_{j\rightarrow i}$.
By the above definition, we have $P'_i = (\widetilde{P_i} \setminus O_i) \cup I_i$.

For simplicity, we use $n_i ,n_i^{in},n_i^{out}$ to denote $|\tilde{P_i}|,|O_i|,|I_i|$ respectively.

\begin{lemma}[\bf{Impact of removed points}]
\label{lemma:Bound cost of O_i}
With probability $p>0.99$, for any $i\in [k]$,
\begin{itemize}
    \item $\frac{n_i^{out}}{n_i^2} \sum_{p \in O_i}\|p-\mu(P_i')\|_2^2 \leq O(\sqrt{\alpha - 1}\cdot  \theta d)$.
    \item $\frac{n_i^{in}}{n_i^2} \sum_{p \in I_i}\|p-\mu(P_i')\|_2^2 \leq O(\sqrt{\alpha - 1}\cdot  \theta d)$.
\end{itemize}
\end{lemma}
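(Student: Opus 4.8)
The plan is to bound each of the two quantities by reducing the sum over removed/inserted points to a sum over \emph{large-noise events}, and then to control those via the Bernstein tail bound already established in the proof of Lemma~\ref{lemma:structural property of C(P')}. Write $L := \sqrt{d}\log\frac{1+\theta kd}{\sqrt{\alpha-1}}$ for the threshold scale appearing in $R_i$, and recall that conditioned on $\xi_p \neq 0$ the Bernstein condition gives $\Pr[\|\xi_p\|\ge t \mid \xi_p\neq 0]\le 2d\exp(-\Omega(t/\sqrt{d}))$. Since both $|O_i|=n_i^{in}$ and $|I_i|=n_i^{out}$ are at most $n_i=|\widetilde{P_i}|$, it suffices to prove the stronger, notation-agnostic bounds $\frac{1}{n_i}\sum_{p\in O_i}\|p-\mu(P_i')\|_2^2 = O(\sqrt{\alpha-1}\,\theta d)$ and $\frac{1}{n_i}\sum_{p\in I_i}\|p-\mu(P_i')\|_2^2 = O(\sqrt{\alpha-1}\,\theta d)$; the claimed bounds then follow immediately because $n_i^{out}/n_i^2\le 1/n_i$ and $n_i^{in}/n_i^2\le 1/n_i$.

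First I would show that membership in $O_i$ or $I_i$ forces $\|\xi_p\|=\Omega(L)$. For a removed point $p\in O_i$ we have $\|\widehat{p}-\widehat{c_i}\|>R_i=3\widehat{r_i}+\Omega(L)$, while $\|p-\widehat{c_i}\|=O(\widehat{r_i})$ using $r_i\le 8\overline{r_i}$ and Assumption~\ref{assumption:additional}; hence $\|\xi_p\|\ge\|\widehat{p}-\widehat{c_i}\|-\|p-\widehat{c_i}\|=\Omega(\widehat{r_i}+L)$. For an inserted point $p\in I_i$ (with $p\in P_j$, $j\neq i$), landing in $B_i$ forces $\|\xi_p\|$ to exceed essentially the inter-center gap $d(c_i^\star,c_j^\star)$, which by Lemma~\ref{lemma:stability} and the choice of $\gamma$ in Assumption~\ref{assum:dataset} is $\Omega(L)$ as well. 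In both cases the noise dominates every other length scale in the triangle inequality bounding $\|p-\mu(P_i')\|_2$, so $\|p-\mu(P_i')\|_2=O(\|\xi_p\|)$, giving the deterministic pointwise reduction $\sum_{p\in O_i}\|p-\mu(P_i')\|_2^2=O\big(\sum_{p\in O_i}\|\xi_p\|_2^2\big)$ and similarly for $I_i$.

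Next I would bound the expectations of these truncated second moments. Conditioning on the high-probability event (from Lemma~\ref{lem:uniform-error-bound_new} and the concentration of $\widehat{r_i}$) that $R_i$ lies in a fixed $\Theta(\widehat{r_i}+L)$ window, membership $p\in O_i$ reduces to the event $\{\|\xi_p\|\ge cL\}$ for a fixed constant $c$, so
\[
\Exp\Big[\sum_{p\in O_i}\|\xi_p\|_2^2\Big]\le\sum_{p\in P_i}\theta\,\Exp\big[\|\xi_p\|_2^2\,\mathbf{1}[\|\xi_p\|\ge cL]\mid\xi_p\neq 0\big]=n_i\,\theta\,r,
\]
where integrating the Bernstein tail yields $r=\mathrm{poly}(d,L)\cdot\big(\tfrac{\sqrt{\alpha-1}}{1+\theta kd}\big)^{\Omega(c)}$. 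Choosing the hidden constant in $R_i$ (hence $c$) large enough makes $r\le O(\sqrt{\alpha-1}\,d)$, since $(\sqrt{\alpha-1})^{\Omega(c)}\le\sqrt{\alpha-1}$ for $\alpha-1\le 1$ and the $(1+\theta kd)^{-\Omega(c)}$ factor absorbs $\mathrm{poly}(d,L)$. Dividing by $n_i$ gives expected value $O(\sqrt{\alpha-1}\,\theta d)$, and the analogous computation for $I_i$ sums the even smaller crossing probabilities over the source clusters $j\neq i$. A Bernstein/Markov concentration bound on these sums of nonnegative truncated terms, together with a union bound over $i\in[k]$ (affordable since the $\log\frac{1+\theta kd}{\sqrt{\alpha-1}}$ factor makes each tail smaller than any fixed inverse polynomial in $k$), upgrades the expectation bounds to the stated high-probability guarantee.

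The main obstacle I anticipate is the dependency between the data-dependent threshold $R_i$ (through $\widehat{c_i}$ and $\widehat{r_i}$, which depend on the entire cluster's noise) and the individual event $p\in O_i$, which prevents treating the summands as cleanly independent. I would resolve this by first conditioning on a high-probability event that pins $\widehat{c_i},\widehat{r_i}$ near their noise-free values, so that the membership test reduces to a fixed threshold on $\|\xi_p\|$ alone; a secondary difficulty is verifying, for the incoming points, that the stability-driven inter-cluster separation genuinely exceeds the scale $L$ so that the same tail estimate applies, which I would handle by plugging the value of $\gamma$ from Assumption~\ref{assum:dataset} into Lemma~\ref{lemma:stability}.
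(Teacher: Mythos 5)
Your proposal follows essentially the same route as the paper's proof: reduce membership in $O_i$ (resp.\ $I_i$) to the event that $\|\xi_p\|$ exceeds the threshold scale $\Omega(\sqrt{d}\log\frac{1+\theta kd}{\sqrt{\alpha-1}})$, integrate the Bernstein tail to bound the expected truncated second moment by $O(\sqrt{\alpha-1}\,\theta d)$ per point, and conclude via Markov's inequality; you are in fact slightly more careful than the paper about the dependence of the data-driven radius $R_i$ on the cluster's noise. The only cosmetic difference is in the $I_i$ term, where the paper bounds each summand deterministically by $4R_i^2$ and controls $\mathbb{E}[(|I_i|)^2]\cdot R_i^2/n_i^2$ directly (avoiding the a priori assumption $|I_i|\le n_i$ implicit in your normalization), but both variants go through since the crossing probability is exponentially small.
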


\begin{proof}
Note that since $\xi_{p,j}$ satisfies Bernstein condition for any $j \in [d]$ if $\xi_p \neq 0$, we have
\[\Pr\left[|\xi_{p,j}| \ge t | \xi_p \neq 0\right] \le 2\exp\left(-\frac{t^2}{2(1+bt)}\right), \forall j \in [d],t > 0.\]
Thus we have for large enough $t$ (larger than $b$),
\[\Pr\left[|\xi_{p,j}| \ge t | \xi_p \neq 0\right] \le 2\exp\left(-\Omega(t)\right).\]   
By union bound, with high probability,
\[\Pr\left[\|\xi_{p}\| \ge t | \xi_p \neq 0\right] \le  d \cdot \Pr\left[|\xi_{p,j}| \ge \frac{t}{\sqrt{d}} \bigg| \xi_p \neq 0\right] \leq 2d\exp\left(-\Omega(\frac{t}{\sqrt{d}})\right).\] 
Note that for any $p \in O_i$, 
\begin{align*}
    \|p-\mu(P_i')\|_2 \leq \|p-\xi_p-\mu(P_i')\|_2+\|\xi_p\|_2
    \leq 2(r_i+O(\sqrt{d} \log \frac{1+\theta kd}{\sqrt{\alpha - 1}}))+\|\xi_p\|_2
\end{align*}
Then we can decompose the cost as
\begin{align*}
    & \mathbb{E} \left[\sum_{p \in O_i}\|p-\mu(P_i')\|_2^2\right] \\
    \le &\quad n_i \cdot \theta \cdot \int_{r_i+O(\sqrt{d} \log \frac{1+\theta kd}{\sqrt{\alpha - 1}}) }\left(2(r_i+O(\sqrt{d} \log \frac{1+\theta kd}{\sqrt{\alpha - 1}})) + t\right)^2 \rho(\|\xi_p\| = t)d t \\
    \le &\quad n_i \cdot \theta\cdot \int_{r_i+O(\sqrt{d} \log \frac{1+\theta kd}{\sqrt{\alpha - 1}})}9t^2 \rho(\|\xi_p\| = t) d t.
\end{align*}
Note that when $\Pr\left[\|\xi_{p}\ge t | \xi_p \neq 0\right] \leq 2d\exp\left(-\Omega(\frac{t}{\sqrt{d}})\right)$, for a constant $c$, $\rho(\|\xi_p\| = t)$ satisfies that 
\begin{align*}
\rho(\|\xi_p\| = t) \leq 2c\sqrt{d} \exp\left(-\frac{c t}{\sqrt{d}}\right).
\end{align*}
Thus 
\begin{align*}
\int_{r_i+O(\theta\sqrt{d} \log k)}t^2 \rho(\|\xi_p\| = t) d t&\leq\ d(r_i+O(\sqrt{d} \log \frac{1+\theta kd}{\sqrt{\alpha - 1}}))^2\cdot \exp[\Omega(-\frac{r_i+O(\sqrt{d} \log \frac{1+\theta kd}{\sqrt{\alpha - 1}})}{\sqrt{d}})]
\\
&\leq\ O(\sqrt{\alpha-1}d).
\end{align*}
Then applying Markov's inequality, we have with probability $p>0.99$,
\begin{align*}
    \frac{n_i^{out}}{n_i} \sum_{p \in O_i}\|p-\mu(P_i')\|_2^2 \leq \frac{1}{n_i}\sum_{p \in O_i}\|p-\mu(P_i')\|_2^2 \leq O(\sqrt{\alpha - 1}\cdot  \theta d).
\end{align*}
By the definition of $I_i$, $p \in B(\widetilde{c_i},R_i)$ for any $p \in I_i$, thus
\begin{align*}
\max_{p \in I_i}\|p-\mu(\widetilde{P_i}))\|_2 \leq 2R_i. 
\end{align*}
Note that for any point $p \in I_i \cap \widetilde{P_j}$, 
\begin{align*}
    \|\widehat{c_i} - \widehat{c_j}\|_2 \leq R_i + \|p-\widehat{c_j}\|_2\leq R_i + R_j + \|\xi_{p}\|_2,
\end{align*}
By the stability of dataset $P$, every point $p \in I_i \cap \widetilde{P_j}$ satisfies that 
\begin{align*}
\|\xi_{p}\|_2^2 \geq \alpha\cdot O(\frac{\OPT+\theta nd\log^2(\frac{kd}{\sqrt{\alpha - 1}})}{\min(n_i,n_j)}) - O(R_i+R_j) \geq O(\frac{\OPT+\theta nd\log^2(\frac{kd}{\sqrt{\alpha - 1}})}{\min(n_i,n_j)})
\end{align*}
Thus 
\begin{align*}
\mathbb{E}(\frac{n_i^{in}}{n_i^2} \sum_{p \in I_i}\|p-\mu(P_i')\|_2^2 ) &\leq\ \mathbb{E}(\frac{n_i^{in}}{n_i^2}\cdot 4n_i^{in}R_i^2) 
\\
&\leq \frac{4R_i^2}{n_i^2}\mathbb{E}\left((n_i^{in})^2\right) \\
&\leq\ \frac{4R_i^2n^2\theta^2}{n_i^2}\exp\left[{-\Omega(R_i)}\right] \cdot\exp\left[{-\Omega(\frac{\theta n\log^2(\frac{kd}{\sqrt{\alpha - 1}})}{n_i})}\right]
\\
&\leq  O((\frac{\theta n}{n_i})^2)\cdot \exp\left[{-\Omega(\frac{\theta n\log^2(\frac{kd}{\sqrt{\alpha - 1}})}{n_i})}\right]
\\
&\leq O(\theta d\cdot \sqrt{\alpha - 1})
\end{align*}
Then we complete the proof by Markov's inequality.
\end{proof}

\noindent
For the center of $P_i$ and $\tilde{P_i}$, we have the following properties:
\begin{lemma}[\bf{Movement of noisy centers}]
\label{lemma:Bound center movement}
With high probability, for any $i\in [k]$,
\begin{align*}
\|\mu(P_i)-\mu(\tilde{P_i})\|_2^2 \leq O(\frac{\theta d}{n_i}).
\end{align*}
\end{lemma}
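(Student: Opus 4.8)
The plan is to reduce the center movement to the average of the cluster's noise vectors and then apply concentration. Since $\widetilde{P_i} = \{p+\xi_p : p \in P_i\}$ has the same cardinality $n_i=|P_i|$ as $P_i$, the two means satisfy
\[
\mu(\widetilde{P_i}) - \mu(P_i) = \frac{1}{n_i}\sum_{p \in P_i}\xi_p,
\]
so that $\|\mu(P_i)-\mu(\widetilde{P_i})\|_2^2 = \tfrac{1}{n_i^2}\big\|\sum_{p\in P_i}\xi_p\big\|_2^2$. It therefore suffices to show that $\big\|\sum_{p\in P_i}\xi_p\big\|_2^2 = O(n_i\theta d)$ holds with high probability simultaneously for all $i \in [k]$, after which dividing by $n_i^2$ yields the claim.

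First I would pin down the first moment. Because the $\xi_p$ are independent with $\E[\xi_p]=0$ and $\E[\|\xi_p\|_2^2] = \theta d$ (exactly as computed in Claim~\ref{claim:error_onemeans_new}), all cross terms vanish and
\[
\E\Big[\big\|\textstyle\sum_{p\in P_i}\xi_p\big\|_2^2\Big] = \sum_{p\in P_i}\E[\|\xi_p\|_2^2] = n_i\theta d,
\]
which already identifies the target order and gives $\E\|\mu(P_i)-\mu(\widetilde{P_i})\|_2^2 = \theta d/n_i$. The heart of the argument is then the concentration of $\big\|\sum_{p\in P_i}\xi_p\big\|_2^2$ around this mean. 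I would split into cases. If $\theta n_i$ is below a small constant, then $\Pr[\text{all }\xi_p=0]=(1-\theta)^{n_i}$ is close to $1$, the displacement is exactly $0$, and the bound holds trivially; this mirrors the degenerate case in Lemma~\ref{lm:bound_onemeans_noise1_new}. Otherwise I would condition on $X_i:=|\{p\in P_i:\xi_p\neq 0\}|$, which concentrates around $\theta n_i$ by a Chernoff bound. Given $X_i=m$, the sum $\sum_{p\in P_i}\xi_p$ is a sum of $m$ i.i.d.\ vectors whose coordinates are independent and drawn from the $D_j$, each mean-zero, unit-variance, and satisfying the Bernstein condition. Writing the squared norm coordinatewise as $\sum_{j\in[d]}\big(\sum_p \xi_{p,j}\big)^2$, each inner sum is a sum of independent Bernstein variables, hence sub-exponential with variance $m$, and the Bernstein condition supplies exponential tails that control $\sum_{j\in[d]}\big(\sum_p\xi_{p,j}\big)^2$ around its mean $md=O(n_i\theta d)$.

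The main obstacle is obtaining the clean order $O(\theta d/n_i)$ — matching the mean up to constants — uniformly over all $k$ clusters, without the union bound injecting extra factors of $k$. This is precisely where the Bernstein condition is indispensable: a Chebyshev/variance estimate alone would give only constant success probability per cluster, forcing a lossy $\sqrt{k}$- or $k$-type inflation after the union bound (as in Lemma~\ref{lem:uniform-error-bound_new}), whereas the exponential tails let me take per-cluster failure probability $1/\mathrm{poly}(k)$ at the cost of only additive polylogarithmic terms. Concretely, summing the independent coordinate contributions gives a deviation of order $\theta n_i\big(\sqrt{d\log k}+\log k\big)$, which is absorbed into $O(n_i\theta d)$ under the mild regime $d=\Omega(\log k)$; a final union bound over $i\in[k]$ then establishes $\big\|\sum_{p\in P_i}\xi_p\big\|_2^2=O(n_i\theta d)$ for all $i$ simultaneously, and dividing by $n_i^2$ completes the proof.
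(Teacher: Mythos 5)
Your proposal is correct and shares the paper's skeleton: both reduce the center movement to $\mu(\widetilde{P_i})-\mu(P_i)=\frac{1}{n_i}\sum_{p\in P_i}\xi_p$ and then show $\bigl\|\sum_{p\in P_i}\xi_p\bigr\|_2^2=O(\theta n_i d)$, whose expectation computation ($n_i\theta d$, cross terms vanishing) is identical. Where you diverge is the concentration step. The paper bounds $\Var\bigl[\|\sum_p\xi_p\|_2\bigr]$ by the second moment $\theta n_i d$ and invokes a Chebyshev-type argument to assert $\|\sum_p\xi_p\|_2=O(\sqrt{\theta n_i d})$ "with high probability" --- which, taken literally, only yields constant failure probability per cluster and leaves the union bound over $i\in[k]$ implicit. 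You instead condition on the number of noised points $X_i$ and exploit the Bernstein condition coordinatewise to get sub-exponential tails for $\sum_{j}\bigl(\sum_p\xi_{p,j}\bigr)^2$, which buys a genuine $1/\mathrm{poly}(k)$ per-cluster failure probability and a clean simultaneous guarantee over all $k$ clusters. The price is two mild caveats you should make explicit: (i) the deviation term $\theta n_i(\sqrt{d\log k}+\log k)$ is absorbed into $O(\theta n_i d)$ only when $d=\Omega(\log k)$, a regime restriction the lemma statement does not impose (otherwise you pick up a $\log k$ factor); and (ii) in the intermediate regime where $\theta n_i$ is between a constant and $\log k$, Chernoff gives $X_i=O(\theta n_i+\log k)$ rather than $O(\theta n_i)$ at confidence $1-1/\mathrm{poly}(k)$, which again injects an additive $\frac{d\log k}{n_i^2}$ into the final bound. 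Neither issue is fatal --- and your treatment of the union bound is arguably more rigorous than the paper's --- but the stated $O(\theta d/n_i)$ without extra logarithmic factors only follows from your argument under those regime assumptions.
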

\begin{proof}
Note that \[\mu(\tilde{P_i}) = \frac{\sum_{\hat{p} \in \tilde{P_i}}\hat{p}}{n_i} = \mu(P_i) + \frac{\sum_{p \in P_i}\xi_p}{n_i}.\]
Thus it remains to show \[\|\frac{\sum_{p \in P_i}\xi_p}{n_i}\|_2^2 \leq O(\frac{\theta d}{n_i}).\]
As shown in Claim \ref{claim:error_onemeans_new}, $\Exp_{\tilde{P}_i}\left[\sum_{p\in P_i} \|\xi_p\|_2^2\right] = \theta n_i d$ and $\Var_{\tilde{P}_i}\left[\sum_{p\in P_i} \|\xi_p\|_2^2\right] \le 3\theta n_i d^2$.

We also note that \(\mathbb{E}_{\tilde{P}_i} \left[ \|\sum_{p \in P_i} \xi_p \|_2\right] \leq O(\sqrt{\theta n_i d})\) and

\[\Var_{\tilde{P}_i} \left[ \| \sum_{p \in P_i} \xi_p \|_2 \right] \leq \mathbb{E}_{\tilde{P}_i} \left[ \| \sum_{p \in P_i} \xi_p \|_2^2 \right]= \mathbb{E}_{\tilde{P}_i} \left[ \sum_{p \in P_i} \left\| \xi_p \right\|_2^2 \right]= \theta n_i d,\]
which implies that \(\left\| \sum_{p \in P_i} \xi_p \right\|_2\) is \(O(\sqrt{\theta n_i d})\) with high probability.

Therefore, with high probability, we have:
\[
\left\| \mu(\tilde{P}_i) - \mu(P_i) \right\|_2^2 \le O\left(\frac{\theta d}{n_i}\right).
\]
This completes the proof.
\end{proof}

\noindent
Now we prove Lemma \ref{lemma:err_alpha_step2}.
\begin{proof}[Proof of Lemma \ref{lemma:err_alpha_step2}]

For any $i \in [k]$, we have
\begin{align*}
\|\mu(\widetilde{P_i}) - \mu(P'_i)\|_2^2 
 =&\quad \|\frac{\sum_{p \in \widetilde{P_i}}p}{n_i} - \mu(P'_i)\|_2^2\\
  =&\quad \| \frac{\sum_{p \in P_i'}p + \sum_{p \in O_i}p - \sum_{p \in I_i}p}{n_i} - \mu(P'_i)\|_2^2\\
 =&\quad \| \frac{\sum_{p \in O_i}(p-\mu(P_i')) - \sum_{p \in I_i}(p-\mu(P_i'))}{n_i}\|_2^2 \\
 \leq&\quad \frac{2n_i^{out}}{n_i^2} \cdot \sum_{p \in O_i}\|p-\mu(P_i')\|_2^2 + \frac{2n_i^{in}}{n_i^2} \cdot \sum_{p \in I_i}\|p-\mu(P_i')\|_2^2 \\
 \leq&\quad O(\theta d\cdot \sqrt{\alpha - 1}) &\text{(Lemma \ref{lemma:Bound cost of O_i})}
\end{align*}
By Lemma \ref{lemma:Bound center movement}, with high probability,
\begin{align*}
\|\mu(P_i)-\mu(\tilde{P_i})\|_2^2 \leq O(\frac{\theta d}{n_i}).
\end{align*}
Thus
\begin{align*}
\|\mu(P_i)-\mu(P_i')\|_2^2 \leq 2\|\mu(P_i)-\mu(\tilde{P_i})\|_2^2 + 2\|\mu(\widetilde{P_i}) - \mu(P'_i)\|_2^2 \leq  O(\theta d\cdot \sqrt{\alpha - 1} + \frac{\theta d}{n_i}),
\end{align*}
which completes the proof.
\end{proof}

\subsection{Justifying Assumption \ref{assumption:additional}: Finding center sets within local balls}
\label{subsec:remove_center_assumption}

Lemmas \ref{lem:uniform-error-bound_new} and \ref{lemma:Bound center movement} provide us with control of the clustering cost and the location of the optimal center set $\mathsf{C}(\widehat{P})$ of $\widehat{P}$.
Together with Assumption \ref{assum:dataset}, we know that $\widehat{P}$ is $O(\alpha)$-cost-stable.
By \cite{Ostrovsky2006TheEO,Jaiswal2012AnalysisOK}, we can efficiently compute an $O(1)$-approximate center set $C\in \calC$ for such stable $\widehat{P}$.
Partition $\widehat{P}$ into $\widehat{P}_1, \ldots, \widehat{P}_k$ of this $C$.
Using a similar argument as for Lemma \ref{lemma:Bound cost of O_i}, we can show that $\widehat{P}\cap B(c_i^\star, O(r_i+\sqrt{d} \log \frac{1+\theta kd}{\sqrt{\alpha - 1}})) \subseteq \widehat{P}_i$.
Furthermore, by a similar argument as for \ref{lemma:Bound center movement}, we can show that $\mu(\widehat{P}_i)\in B(c_i^\star, O(r_i+\sqrt{d} \log \frac{1+\theta kd}{\sqrt{\alpha - 1}}))$.
Thus, letting $\widehat{C}$ be the collection of $\mu(\widehat{P}_i)$'s satisfies Assumption \ref{assumption:additional}.

\section{Empirical results}
\label{sec:empirical}

We now evaluate the empirical performance of our proposed coreset algorithms on real-world datasets under varying noise levels and tolerance thresholds. The goal is to test whether our theoretical guarantees translate into practical improvements in coreset size, accuracy, and robustness. We also assess how well the theoretical bounds track actual performance in both clean and noisy data regimes.
All experiments are conducted using Python 3.11 on an Apple M3 Pro machine with an 11-core CPU, 14-core GPU, and 36 GB of memory.
\footnote{See \url{https://github.com/xiaohuangyang/Coresets-for-Clustering-Under-Stochastic-Noise} for the code.}

\smallskip
\noindent
\textbf{Setup.}
We consider the \kMeans\ problem on the \dataset{Adult}~\cite{kohavi1998adult} and \dataset{Census1990}~\cite{misc_us_census_data_(1990)_116} datasets from the UCI Repository.
Both satisfy the limited outlier assumption but exhibit small cost-stability constants $\gamma$; see Table~\ref{tab:assumption check}.
We set \(k = 10\). 
We perturb each dataset under noise model~\rom{1}, using Gaussian noise with \(\theta \in \{0, 0.01, 0.05, 0.25\}\), where $\theta=0$ denotes the noise-free case.
For varying tolerance levels \(\eps \in \{0.1, 0.15, 0.2, 0.25, 0.3\}\), we construct a coreset \(S\) from \(\widehat{P}\) using \ouralgdefault and \ouralgar.

\smallskip
\noindent
\textbf{Implementation of \ouralgdefault.} 
Given the perturbed dataset $\widehat{P}$ and budget $\eps>0$, \ouralgdefault first computes an approximate optimal center set $\widetilde{C}$ of $\widehat{P}$ by $k$-means++ with $\text{max\_iter} = 5$ and computes $\widetilde{OPT} = \cost_2(\widehat{P}, \widetilde{C})$.
Then it constructs a coreset by the importance sampling algorithm \cite{bansal2024sensitivity} with coreset size $|S| = 3k^{1.5}\eps^{-2}$, which represents the size bound derived by $\err$ .
The runtime of  \ouralgdefault is $O(ndk)$.

\smallskip
\noindent
\textbf{Implementation of \ouralgar.}  
Given the perturbed dataset $\widehat{P}$ and budget $\eps>0$, \ouralgar also first computes an approximate optimal center set $\widetilde{C}$ of $\widehat{P}$ by $k$-means++ with $\text{max\_iter} = 5$ and computes $\widetilde{OPT} = \cost_2(\widehat{P}, \widetilde{C})$. 
Next, we decompose $\widehat{P}$ into $k$ clusters $\widehat{P_i}$ by $\widetilde{C}$.
For each $i \in [k]$, compute $\widehat{r}_i = \sqrt{\frac{\cost(\widehat{P_i}, \widehat{c_i})}{|\widehat{P_i}|}}$.
Then we compute $P'_i = \widehat{P_i} \cap B_i$, where ball $B_i:=B(\widehat{c_i},R_i)$ with $R_i:=\widehat{r}_i + \sqrt{d} \log 10(1+\theta kd)$.
For each $i\in [k]$, take a uniform sample $S_i$ of size $\min(|P_i'|,\frac{9}{\eps} + \frac{6}{\eps^2})$ as an approximation of theoretical results.
Finally, it returns $S = \bigcup_{i\in [k]} S_i$ with $w(p) = \frac{|P_i'|}{S_i}$ for $p \in S_i$.
Similarly, the runtime of  \ouralgar is $O(ndk)$.

\smallskip
\noindent
\textbf{Metrics.}
For each coreset \(S\), we report:
(i) coreset size \(|S|\),
(ii) empirical approximation ratio \(\widetilde{r}_{S} := \frac{\cost(P,C_S)}{\cost(P,C_P)}\), and
(iii) tightness ratio \(\kappa_S := \frac{\widetilde{r}_{S}}{u_S}\), where $u_S$ is the theoretical bound for \(r_P(C_S)\) from Theorems~\ref{thm:err} and~\ref{thm:err_alpha}. 
The approximation ratio $\widetilde{r}_S$ measures how well the coreset solution approximates the true clustering cost on $P$.
To calculate $\widetilde{r}_S$, we obtain $C_S$ and $C_P$ by running $k$-means++ 10 times (default settings, varied seeds) on $S$ and $P$ separately and selecting the best solution for each.
Letting $\widetilde{\OPT} = \cost(\widehat{P}, C_{\widehat{P}})$, we set:
\[
\textstyle
u_S = 
\begin{cases}
(1 + \eps + \frac{\theta n d}{\widetilde{\OPT}} + \sqrt{\frac{\theta n d}{\widetilde{\OPT}}})^2 & \text{(\ouralgdefault)} \\
1 + \eps + \frac{\theta k d}{\widetilde{\OPT}} + \frac{\theta n d}{\widetilde{\OPT}} & \text{(\ouralgar)}.
\end{cases}
\]
A value $\kappa_S \leq 1$ implies the empirical ratio is below the theoretical bound; values closer to 1 indicate tighter guarantees.
All experiments are repeated 10 times, and average metrics are reported.

\begin{table}[!ht]
\centering
\caption{Datasets used in our experiments. 
$\gamma$ represents the cost stability constant. 
$r_i$ and $\overline{r}_i$ are as defined in Assumption \ref{assum:dataset}.
Note that the assumption $\max_i{\frac{r_i}{\overline{r}_i}} < 8$ holds for both datasets.
The \dataset{Census1990} dataset consists of 2458285 data points and we subsample 100000 from them. 
We also drop the categorical features of the datasets and only keep the continuous features for clustering.
}
\label{tab:assumption check}
\begin{tabular}{|c|c|c|c|c|c|}
  \hline
  {\bf Dataset} & {\bf Size} & {\bf Dim} & $k$ & $\gamma$ & $\max_i{\frac{r_i}{\overline{r}_i}}$ \\
  \hline
  \dataset{Adult} & 41188 & 10 & 10  & 0.07 & 7.52 \\
  \dataset{Census1990} & $10^5$ & 68 & 10  & 0.03 & 5.93 \\
  \hline
\end{tabular}
\end{table}

\subsection{Main empirical results}
\label{sec:empirical-main}
Table~\ref{tab:main-result-adult} reports results on the \dataset{Adult} dataset across noise levels. 
In all settings, \ouralgar consistently produces smaller coresets and achieves $\kappa_S$ values closer to 1 than \ouralgdefault.
For example, at \(\eps = 0.2, \theta = 0.01\), \ouralgar yields a coreset of size 1940 (82\% of \ouralgdefault's 2371), with $\kappa_S = 0.937$ vs.\ $0.596$ for \ouralgdefault—indicating much tighter empirical bounds.

We also find that for higher tolerance levels ($\eps \geq 0.2$), \ouralgar often yields better empirical approximation: e.g., for $\eps = 0.2, \theta = 0.01$, \ouralgar attains $\widetilde{r}_S = 1.156$ vs.\ $1.193$ for \ouralgdefault.
This suggests that \ouralgar outperforms even in noise-free settings, indicating its potential value beyond noisy data applications.
Moreover, in the noise-free case ($\theta = 0$), empirical ratios $\widetilde{r}_S$ consistently satisfy $\widetilde{r}_S \leq 1 + \eps$ for \ouralgar—further validating the theoretical guarantees.

Results on \dataset{Census1990} appear in Table \ref{tab:main-result-census1990}.
All observations are consistent with that for \dataset{Adult}.
Additional results under Laplace, uniform, non-i.i.d., and noise model~\rom{2} appear in Section~\ref{sec:empirical_noise}. These confirm the robustness and broader applicability of \ouralgar\ across diverse noise regimes.

Overall, the empirical results demonstrate that \ouralgar consistently achieves tighter approximation guarantees with smaller coresets across a range of noise levels. These findings validate the practical utility of the $\err_\alpha$-based construction and suggest that it remains effective even when the theoretical assumptions (e.g., exact cost-stability) are only approximately satisfied. The robustness of \ouralgar across multiple datasets and noise models underscores its suitability for integration into practical data preprocessing pipelines.

\subsection{Results under other noise settings}
\label{sec:empirical_noise}

We present additional results under various noise settings using the \dataset{Adult} dataset, with the same noise levels and tolerance parameters as in Section~\ref{sec:empirical-main}.
Tables~\ref{tab:adult-modelI-laplace} and~\ref{tab:adult-modelI-uniform} evaluate coreset performance under noise model~\rom{1}, replacing Gaussian noise with Laplacian and Uniform noise, respectively. The results are consistent with Table~\ref{tab:main-result-adult}, validating the robustness of our theoretical findings across different noise types.
Table~\ref{tab:adult-modelII} assesses performance under noise model~\rom{2}, illustrating the practical relevance of Theorem~\ref{thm:err_alpha_noise_II}, an extension of Theorem~\ref{thm:err_alpha}.
Table~\ref{tab:adult-nonindependent} considers non-independent noise, as analyzed in Theorem~\ref{thm:err_alpha_noise_non-independent}, and again shows consistency with Table~\ref{tab:main-result-adult}.
Overall, these results further support our theoretical analysis, confirming that coreset performance under noise is primarily governed by the noise variance.

\begin{table*}[t]
\setlength{\tabcolsep}{5pt}
\scriptsize
\caption{Results of \dataset{Adult} dataset under noise model \rom{1} with Gaussian noise. $|S|$ represents the coreset size, $\widetilde{r}_{S}$ represents its empirical approximation ratio, and ${\kappa}_S$ denotes the tightness ratio of its empirical approximation ratio over the theoretical bound.}
\label{tab:main-result-adult}
\begin{subtable}{0.49\textwidth}
\caption{$\theta=0$}
\begin{center}
\begin{tabular}{|c|c|c|c|c|c|c|c|}
\hline
\multicolumn{2}{|c|}{{\bf $\eps$}} & {\bf 0.1} & {\bf 0.15} & {\bf 0.2} & {\bf 0.25} & {\bf 0.3} \\ \hline
\multirow{2}{*}{$|S|$}& \ouralgdefault&9486&4216&2371&1517&1054\\ \cline{2-7} & \ouralgar&6445&3178&1940&1318&960 \\ \hline 
\multirow{2}{*}{$\widetilde{r}_{S}$}& \ouralgdefault& 1.040& 1.080& 1.183& 1.278& 1.200\\ \cline{2-7} & \ouralgar& 1.085& 1.115& 1.150& 1.197& 1.124 \\ \hline 
\multirow{2}{*}{${\kappa}_S$}& \ouralgdefault& 0.859& 0.817& 0.821& 0.818& 0.710\\ \cline{2-7} & \ouralgar& 0.986& 0.969& 0.959& 0.958& 0.865 \\ \hline 
\end{tabular}
\end{center}
\end{subtable}
\hspace{2mm}
\begin{subtable}{0.49\textwidth}
\caption{$\theta=0.01$}
\color{black}
\begin{center}
\begin{tabular}{|c|c|c|c|c|c|c|c|}
\hline
\multicolumn{2}{|c|}{{\bf $\eps$}} & {\bf 0.1} & {\bf 0.15} & {\bf 0.2} & {\bf 0.25} & {\bf 0.3} \\ \hline
\multirow{2}{*}{$|S|$}& \ouralgdefault&9486&4216&2371&1517&1054\\ \cline{2-7} & \ouralgar&6445&3178&1940&1320&960 \\ \hline 
\multirow{2}{*}{$\widetilde{r}_{S}$}& \ouralgdefault& 1.037& 1.081& 1.193& 1.187& 1.244\\ \cline{2-7} & \ouralgar& 1.114& 1.069& 1.156& 1.154& 1.145 \\ \hline 
\multirow{2}{*}{${\kappa}_S$}& \ouralgdefault& 0.600& 0.581& 0.596& 0.554& 0.543\\ \cline{2-7} & \ouralgar& 0.984& 0.904& 0.937& 0.899& 0.859 \\ \hline  
\end{tabular}
\end{center}
\end{subtable}

\vspace{4mm}
\begin{subtable}{0.49\textwidth}
\caption{$\theta=0.05$}
\color{black}
\begin{center}
\begin{tabular}{|c|c|c|c|c|c|c|c|}
\hline
\multicolumn{2}{|c|}{{\bf $\eps$}} & {\bf 0.1} & {\bf 0.15} & {\bf 0.2} & {\bf 0.25} & {\bf 0.3} \\ \hline
\multirow{2}{*}{$|S|$}& \ouralgdefault&9486&4216&2371&1517&1054\\ \cline{2-7} & \ouralgar&6445&3178&1940&1320&960 \\ \hline 
\multirow{2}{*}{$\widetilde{r}_{S}$}& \ouralgdefault& 1.027& 1.061& 1.217& 1.152& 1.200\\ \cline{2-7} & \ouralgar& 1.120& 1.108& 1.133& 1.186& 1.154 \\ \hline 
\multirow{2}{*}{${\kappa}_S$}& \ouralgdefault& 0.369& 0.359& 0.389& 0.348& 0.343\\ \cline{2-7} & \ouralgar& 0.886& 0.844& 0.830& 0.839& 0.788 \\ \hline  
\end{tabular}
\end{center}
\end{subtable}\hspace{2mm}
\begin{subtable}{0.49\textwidth}
\caption{$\theta=0.25$}
\color{black}
\begin{center}
\begin{tabular}{|c|c|c|c|c|c|c|c|}
\hline
\multicolumn{2}{|c|}{{\bf $\eps$}} & {\bf 0.1} & {\bf 0.15} & {\bf 0.2} & {\bf 0.25} & {\bf 0.3} \\ \hline
\multirow{2}{*}{$|S|$}& \ouralgdefault&9486&4216&2371&1517&1054\\ \cline{2-7} & \ouralgar&6445&3178&1940&1320&960 \\ \hline 
\multirow{2}{*}{$\widetilde{r}_{S}$}& \ouralgdefault& 1.044& 1.049& 1.067& 1.234& 1.341\\ \cline{2-7} & \ouralgar& 1.123& 1.163& 1.173& 1.158& 1.222 \\ \hline 
\multirow{2}{*}{${\kappa}_S$}& \ouralgdefault& 0.131& 0.127& 0.125& 0.139& 0.146\\ \cline{2-7} & \ouralgar& 0.585& 0.590& 0.581& 0.559& 0.576 \\ \hline  
\end{tabular}
\end{center}
\end{subtable}
\end{table*}

\begin{table*}[h]
\setlength{\tabcolsep}{5pt}
\scriptsize
\caption{Result of \dataset{census1990} dataset under noise model \rom{1} with Gaussian noise. $|S|$ represents the coreset size, $\widetilde{r}_{S}$ represents its empirical approximation ratio, and ${\kappa}_S$ denotes the ratio of its empirical approximation ratio over the theoretical bound.}
\label{tab:main-result-census1990}
\begin{subtable}{0.49\textwidth}
\caption{$\theta=0$}
\color{black}
\begin{center}
\begin{tabular}{|c|c|c|c|c|c|c|c|}
\hline
\multicolumn{2}{|c|}{{\bf $\eps$}} & {\bf 0.1} & {\bf 0.15} & {\bf 0.2} & {\bf 0.25} & {\bf 0.3} \\ \hline
\multirow{2}{*}{$|S|$}& \ouralgdefault&9486&4216&2371&1517&1054\\ \cline{2-7} & \ouralgar&6835&3260&1922&1320&960 \\ \hline 
\multirow{2}{*}{$\widetilde{r}_{S}$}& \ouralgdefault& 1.028& 1.026& 1.043& 1.052& 1.056\\ \cline{2-7} & \ouralgar& 1.057& 1.067& 1.079& 1.071& 1.087 \\ \hline 
\multirow{2}{*}{${\kappa}_S$}& \ouralgdefault& 0.849& 0.776& 0.724& 0.673& 0.625\\ \cline{2-7} & \ouralgar& 0.961& 0.928& 0.899& 0.857& 0.836 \\ \hline  
\end{tabular}
\end{center}
\end{subtable}\hspace{2mm}
\begin{subtable}{0.49\textwidth}
\caption{$\theta=0.01$}
\color{black}
\begin{center}
\begin{tabular}{|c|c|c|c|c|c|c|c|}
\hline
\multicolumn{2}{|c|}{{\bf $\eps$}} & {\bf 0.1} & {\bf 0.15} & {\bf 0.2} & {\bf 0.25} & {\bf 0.3} \\ \hline
\multirow{2}{*}{$|S|$}& \ouralgdefault&9486&4216&2371&1517&1054\\ \cline{2-7} & \ouralgar&6779&3260&1940&1320&960 \\ \hline 
\multirow{2}{*}{$\widetilde{r}_{S}$}& \ouralgdefault& 1.038& 1.030& 1.043& 1.052& 1.060\\ \cline{2-7} & \ouralgar& 1.058& 1.059& 1.063& 1.071& 1.071 \\ \hline 
\multirow{2}{*}{${\kappa}_S$}& \ouralgdefault& 0.655& 0.602& 0.565& 0.530& 0.498\\ \cline{2-7} & \ouralgar& 0.945& 0.905& 0.872& 0.843& 0.812 \\ \hline   
\end{tabular}
\end{center}
\end{subtable}

\vspace{4mm}
\begin{subtable}{0.49\textwidth}
\caption{$\theta=0.05$}
\color{black}
\begin{center}
\begin{tabular}{|c|c|c|c|c|c|c|c|}
\hline
\multicolumn{2}{|c|}{{\bf $\eps$}} & {\bf 0.1} & {\bf 0.15} & {\bf 0.2} & {\bf 0.25} & {\bf 0.3} \\ \hline
\multirow{2}{*}{$|S|$}& \ouralgdefault&9486&4216&2371&1517&1054\\ \cline{2-7} & \ouralgar&6834&3260&1928&1320&960 \\ \hline 
\multirow{2}{*}{$\widetilde{r}_{S}$}& \ouralgdefault& 1.024& 1.036& 1.057& 1.053& 1.059\\ \cline{2-7} & \ouralgar& 1.069& 1.061& 1.065& 1.097& 1.091 \\ \hline 
\multirow{2}{*}{${\kappa}_S$}& \ouralgdefault& 0.451& 0.427& 0.409& 0.384& 0.363\\ \cline{2-7} & \ouralgar& 0.893& 0.851& 0.821& 0.815& 0.781 \\ \hline   
\end{tabular}
\end{center}
\end{subtable}\hspace{2mm}
\begin{subtable}{0.49\textwidth}
\caption{$\theta=0.25$}
\color{black}
\begin{center}
\begin{tabular}{|c|c|c|c|c|c|c|c|}
\hline
\multicolumn{2}{|c|}{{\bf $\eps$}} & {\bf 0.1} & {\bf 0.15} & {\bf 0.2} & {\bf 0.25} & {\bf 0.3} \\ \hline
\multirow{2}{*}{$|S|$}& \ouralgdefault&9486&4216&2371&1517&1054\\ \cline{2-7} & \ouralgar&6708&3176&1922&1320&960 \\ \hline 
\multirow{2}{*}{$\widetilde{r}_{S}$}& \ouralgdefault& 1.041& 1.041& 1.054& 1.086& 1.088\\ \cline{2-7} & \ouralgar& 1.079& 1.067& 1.104& 1.122& 1.106 \\ \hline 
\multirow{2}{*}{${\kappa}_S$}& \ouralgdefault& 0.201& 0.192& 0.187& 0.184& 0.177\\ \cline{2-7} & \ouralgar& 0.682& 0.653& 0.656& 0.648& 0.620 \\ \hline 
\end{tabular}
\end{center}
\end{subtable}
\end{table*}

\begin{table*}[!ht]
\setlength{\tabcolsep}{5pt}
\scriptsize
\caption{Result of \dataset{Adult} dataset under noise model \rom{1} with Laplacian noise $\mathrm{Lap}(0, \frac{1}{\sqrt{2}})$. 
This choice of Laplacian noise ensures a variance of 1 per coordinate, matching that of the Gaussian noise.
$|S|$ represents the coreset size, $\widetilde{r}_{S}$ represents its empirical approximation ratio, and ${\kappa}_S$ denotes the tightness ratio of its empirical approximation ratio over the theoretical bound.}
\label{tab:adult-modelI-laplace}
	\begin{subtable}{0.49\textwidth}
		\caption{$\theta=0$}
		\color{black}
		\begin{center}
			\begin{tabular}{|c|c|c|c|c|c|c|c|}
				\hline
				\multicolumn{2}{|c|}{{\bf $\eps$}} & {\bf 0.1} & {\bf 0.15} & {\bf 0.2} & {\bf 0.25} & {\bf 0.3} \\ \hline
				\multirow{2}{*}{$|S|$}& \ouralgdefault&9486&4216&2371&1517&1054\\ \cline{2-7} & \ouralgar&6428&3178&1940&1320&960 \\ \hline 
				\multirow{2}{*}{$\widehat{r}_S$}& \ouralgdefault& 1.048& 1.072& 1.121& 1.179& 1.310\\ \cline{2-7} & \ouralgar& 1.138& 1.096& 1.173& 1.155& 1.153 \\ \hline 
				\multirow{2}{*}{${\kappa}_S$}& \ouralgdefault& 0.866& 0.810& 0.778& 0.755& 0.775\\ \cline{2-7} & \ouralgar& 1.035& 0.953& 0.978& 0.924& 0.887 \\ \hline  
			\end{tabular}
		\end{center}
	\end{subtable}\hspace{2mm}
	\begin{subtable}{0.49\textwidth}
		\caption{$\theta=0.01$}
		\color{black}
		\begin{center}
			\begin{tabular}{|c|c|c|c|c|c|c|c|}
				\hline
				\multicolumn{2}{|c|}{{\bf $\eps$}} & {\bf 0.1} & {\bf 0.15} & {\bf 0.2} & {\bf 0.25} & {\bf 0.3} \\ \hline
				\multirow{2}{*}{$|S|$}& \ouralgdefault&9486&4216&2371&1517&1054\\ \cline{2-7} & \ouralgar&6445&3178&1940&1320&960 \\ \hline 
				\multirow{2}{*}{$\widehat{r}_S$}& \ouralgdefault& 1.036& 1.031& 1.093& 1.118& 1.243\\ \cline{2-7} & \ouralgar& 1.083& 1.111& 1.120& 1.188& 1.173 \\ \hline 
				\multirow{2}{*}{${\kappa}_S$}& \ouralgdefault& 0.600& 0.554& 0.547& 0.522& 0.542\\ \cline{2-7} & \ouralgar& 0.956& 0.940& 0.908& 0.926& 0.880 \\ \hline   
			\end{tabular}
		\end{center}
	\end{subtable}
	
	\vspace{4mm}
	\begin{subtable}{0.49\textwidth}
		\caption{$\theta=0.05$}
		\color{black}
		\begin{center}
			\begin{tabular}{|c|c|c|c|c|c|c|c|}
				\hline
				\multicolumn{2}{|c|}{{\bf $\eps$}} & {\bf 0.1} & {\bf 0.15} & {\bf 0.2} & {\bf 0.25} & {\bf 0.3} \\ \hline
				\multirow{2}{*}{$|S|$}& \ouralgdefault&9486&4216&2371&1517&1054\\ \cline{2-7} & \ouralgar&6445&3178&1940&1320&960 \\ \hline 
				\multirow{2}{*}{$\widehat{r}_S$}& \ouralgdefault& 1.030& 1.075& 1.217& 1.095& 1.281\\ \cline{2-7} & \ouralgar& 1.081& 1.113& 1.174& 1.116& 1.147 \\ \hline 
				\multirow{2}{*}{${\kappa}_S$}& \ouralgdefault& 0.370& 0.364& 0.389& 0.331& 0.367\\ \cline{2-7} & \ouralgar& 0.855& 0.847& 0.860& 0.789& 0.783 \\ \hline 
			\end{tabular}
		\end{center}
	\end{subtable}\hspace{2mm}
	\begin{subtable}{0.49\textwidth}
		\caption{$\theta=0.25$}
		\color{black}
		\begin{center}
			\begin{tabular}{|c|c|c|c|c|c|c|c|}
				\hline
				\multicolumn{2}{|c|}{{\bf $\eps$}} & {\bf 0.1} & {\bf 0.15} & {\bf 0.2} & {\bf 0.25} & {\bf 0.3} \\ \hline
				\multirow{2}{*}{$|S|$}& \ouralgdefault&9486&4216&2371&1517&1054\\ \cline{2-7} & \ouralgar&6445&3178&1940&1320&960 \\ \hline 
				\multirow{2}{*}{$\widehat{r}_S$}& \ouralgdefault& 1.036& 1.073& 1.132& 1.167& 1.384\\ \cline{2-7} & \ouralgar& 1.158& 1.125& 1.183& 1.204& 1.221 \\ \hline 
				\multirow{2}{*}{${\kappa}_S$}& \ouralgdefault& 0.130& 0.130& 0.132& 0.132& 0.151\\ \cline{2-7} & \ouralgar& 0.603& 0.571& 0.586& 0.581& 0.576 \\ \hline   
			\end{tabular}
		\end{center}
	\end{subtable}
\end{table*}

\begin{table*}[!ht]
\setlength{\tabcolsep}{5pt}
\scriptsize
\caption{Result of \dataset{Adult} dataset under noise model \rom{1} with Uniform noise $U[-\sqrt{3}, \sqrt{3}]$. 
This choice of Uniform noise ensures a variance of 1 per coordinate, matching that of the Gaussian noise.
$|S|$ represents the coreset size, $\widetilde{r}_{S}$ represents its empirical approximation ratio, and ${\kappa}_S$ denotes the tightness ratio of its empirical approximation ratio over the theoretical bound.}
\label{tab:adult-modelI-uniform}
\begin{subtable}{0.49\textwidth}
	\caption{$\theta=0$}
	\color{black}
	\begin{center}
		\begin{tabular}{|c|c|c|c|c|c|c|c|}
			\hline
			\multicolumn{2}{|c|}{{\bf $\eps$}} & {\bf 0.1} & {\bf 0.15} & {\bf 0.2} & {\bf 0.25} & {\bf 0.3} \\ \hline
			\multirow{2}{*}{$|S|$}& \ouralgdefault&9486&4216&2371&1517&1054\\ \cline{2-7} & \ouralgar&6445&3178&1940&1320&960 \\ \hline 
			\multirow{2}{*}{$\widehat{r}_S$}& \ouralgdefault& 1.057& 1.139& 1.043& 1.281& 1.265\\ \cline{2-7} & \ouralgar& 1.106& 1.097& 1.142& 1.169& 1.155 \\ \hline 
			\multirow{2}{*}{${\kappa}_S$}& \ouralgdefault& 0.873& 0.861& 0.724& 0.820& 0.748\\ \cline{2-7} & \ouralgar& 1.006& 0.954& 0.951& 0.935& 0.889 \\ \hline   
		\end{tabular}
	\end{center}
\end{subtable}\hspace{2mm}
\begin{subtable}{0.49\textwidth}
	\caption{$\theta=0.01$}
	\color{black}
	\begin{center}
		\begin{tabular}{|c|c|c|c|c|c|c|c|}
			\hline
			\multicolumn{2}{|c|}{{\bf $\eps$}} & {\bf 0.1} & {\bf 0.15} & {\bf 0.2} & {\bf 0.25} & {\bf 0.3} \\ \hline
			\multirow{2}{*}{$|S|$}& \ouralgdefault&9486&4216&2371&1517&1054\\ \cline{2-7} & \ouralgar&6445&3178&1940&1320&960 \\ \hline 
			\multirow{2}{*}{$\widehat{r}_S$}& \ouralgdefault& 1.026& 1.050& 1.128& 1.224& 1.158\\ \cline{2-7} & \ouralgar& 1.089& 1.125& 1.098& 1.208& 1.130 \\ \hline 
			\multirow{2}{*}{${\kappa}_S$}& \ouralgdefault& 0.594& 0.564& 0.564& 0.571& 0.505\\ \cline{2-7} & \ouralgar& 0.962& 0.951& 0.891& 0.942& 0.848 \\ \hline    
		\end{tabular}
	\end{center}
\end{subtable}

\vspace{4mm}
\begin{subtable}{0.49\textwidth}
	\caption{$\theta=0.05$}
	\color{black}
	\begin{center}
		\begin{tabular}{|c|c|c|c|c|c|c|c|}
			\hline
			\multicolumn{2}{|c|}{{\bf $\eps$}} & {\bf 0.1} & {\bf 0.15} & {\bf 0.2} & {\bf 0.25} & {\bf 0.3} \\ \hline
			\multirow{2}{*}{$|S|$}& \ouralgdefault&9486&4216&2371&1517&1054\\ \cline{2-7} & \ouralgar&6445&3145&1940&1320&960 \\ \hline 
			\multirow{2}{*}{$\widehat{r}_S$}& \ouralgdefault& 1.061& 1.051& 1.092& 1.151& 1.208\\ \cline{2-7} & \ouralgar& 1.131& 1.133& 1.136& 1.163& 1.111 \\ \hline 
			\multirow{2}{*}{${\kappa}_S$}& \ouralgdefault& 0.381& 0.356& 0.349& 0.348& 0.346\\ \cline{2-7} & \ouralgar& 0.894& 0.862& 0.832& 0.822& 0.759 \\ \hline 
		\end{tabular}
	\end{center}
\end{subtable}\hspace{2mm}
\begin{subtable}{0.49\textwidth}
	\caption{$\theta=0.25$}
	\color{black}
	\begin{center}
		\begin{tabular}{|c|c|c|c|c|c|c|c|}
			\hline
			\multicolumn{2}{|c|}{{\bf $\eps$}} & {\bf 0.1} & {\bf 0.15} & {\bf 0.2} & {\bf 0.25} & {\bf 0.3} \\ \hline
			\multirow{2}{*}{$|S|$}& \ouralgdefault&9486&4216&2371&1517&1054\\ \cline{2-7} & \ouralgar&6445&3178&1940&1320&960 \\ \hline 
			\multirow{2}{*}{$\widehat{r}_S$}& \ouralgdefault& 1.064& 1.061& 1.099& 1.228& 1.295\\ \cline{2-7} & \ouralgar& 1.117& 1.136& 1.217& 1.220& 1.214 \\ \hline 
			\multirow{2}{*}{${\kappa}_S$}& \ouralgdefault& 0.133& 0.128& 0.128& 0.139& 0.141\\ \cline{2-7} & \ouralgar& 0.582& 0.576& 0.602& 0.589& 0.573 \\ \hline    
		\end{tabular}
	\end{center}
\end{subtable}
\end{table*}

\begin{table*}[!ht]
\setlength{\tabcolsep}{5pt}
\scriptsize
\caption{Result of \dataset{Adult} dataset under noise model \rom{2} with Gaussian noise $N(0, \sigma^2)$. 
Here, $\sigma^2$ controls the variance of noise, playing the same role as $\theta$ under noise model \rom{1}.
$|S|$ represents the coreset size, $\widetilde{r}_{S}$ represents its empirical approximation ratio, and ${\kappa}_S$ denotes the tightness ratio of its empirical approximation ratio over the theoretical bound.}
\label{tab:adult-modelII}
\begin{subtable}{0.49\textwidth}
	\caption{$\sigma^2=0$}
	\color{black}
	\begin{center}
		\begin{tabular}{|c|c|c|c|c|c|c|c|}
			\hline
			\multicolumn{2}{|c|}{{\bf $\eps$}} & {\bf 0.1} & {\bf 0.15} & {\bf 0.2} & {\bf 0.25} & {\bf 0.3} \\ \hline
			\multirow{2}{*}{$|S|$}& \ouralgdefault&9486&4216&2371&1517&1054\\ \cline{2-7} & \ouralgar&6445&3178&1940&1320&960 \\ \hline 
			\multirow{2}{*}{$\widehat{r}_S$}& \ouralgdefault& 1.023& 1.071& 1.133& 1.219& 1.220\\ \cline{2-7} & \ouralgar& 1.085& 1.126& 1.170& 1.156& 1.140 \\ \hline 
			\multirow{2}{*}{${\kappa}_S$}& \ouralgdefault& 0.845& 0.810& 0.787& 0.780& 0.722\\ \cline{2-7} & \ouralgar& 0.987& 0.979& 0.975& 0.925& 0.877 \\ \hline    
		\end{tabular}
	\end{center}
\end{subtable}\hspace{2mm}
\begin{subtable}{0.49\textwidth}
	\caption{$\sigma^2=0.01$}
	\color{black}
	\begin{center}
		\begin{tabular}{|c|c|c|c|c|c|c|c|}
			\hline
			\multicolumn{2}{|c|}{{\bf $\eps$}} & {\bf 0.1} & {\bf 0.15} & {\bf 0.2} & {\bf 0.25} & {\bf 0.3} \\ \hline
			\multirow{2}{*}{$|S|$}& \ouralgdefault&9486&4216&2371&1517&1054\\ \cline{2-7} & \ouralgar&6445&3178&1940&1320&960 \\ \hline 
			\multirow{2}{*}{$\widehat{r}_S$}& \ouralgdefault& 1.039& 1.066& 1.075& 1.104& 1.280\\ \cline{2-7} & \ouralgar& 1.096& 1.120& 1.137& 1.164& 1.161 \\ \hline 
			\multirow{2}{*}{${\kappa}_S$}& \ouralgdefault& 0.602& 0.573& 0.538& 0.515& 0.558\\ \cline{2-7} & \ouralgar& 0.967& 0.947& 0.922& 0.908& 0.871 \\ \hline     
		\end{tabular}
	\end{center}
\end{subtable}

\vspace{4mm}
\begin{subtable}{0.49\textwidth}
	\caption{$\sigma^2=0.05$}
	\color{black}
	\begin{center}
		\begin{tabular}{|c|c|c|c|c|c|c|c|}
			\hline
			\multicolumn{2}{|c|}{{\bf $\eps$}} & {\bf 0.1} & {\bf 0.15} & {\bf 0.2} & {\bf 0.25} & {\bf 0.3} \\ \hline
			\multirow{2}{*}{$|S|$}& \ouralgdefault&9486&4216&2371&1517&1054\\ \cline{2-7} & \ouralgar&6445&3178&1940&1320&960 \\ \hline 
			\multirow{2}{*}{$\widehat{r}_S$}& \ouralgdefault& 1.045& 1.074& 1.218& 1.135& 1.234\\ \cline{2-7} & \ouralgar& 1.098& 1.115& 1.129& 1.123& 1.139 \\ \hline 
			\multirow{2}{*}{${\kappa}_S$}& \ouralgdefault& 0.375& 0.363& 0.389& 0.343& 0.353\\ \cline{2-7} & \ouralgar& 0.869& 0.849& 0.828& 0.794& 0.778 \\ \hline  
		\end{tabular}
	\end{center}
\end{subtable}\hspace{2mm}
\begin{subtable}{0.49\textwidth}
	\caption{$\sigma^2=0.25$}
	\color{black}
	\begin{center}
		\begin{tabular}{|c|c|c|c|c|c|c|c|}
			\hline
			\multicolumn{2}{|c|}{{\bf $\eps$}} & {\bf 0.1} & {\bf 0.15} & {\bf 0.2} & {\bf 0.25} & {\bf 0.3} \\ \hline
			\multirow{2}{*}{$|S|$}& \ouralgdefault&9486&4216&2371&1517&1054\\ \cline{2-7} & \ouralgar&6445&3178&1940&1320&960 \\ \hline 
			\multirow{2}{*}{$\widehat{r}_S$}& \ouralgdefault& 1.042& 1.098& 1.127& 1.292& 1.323\\ \cline{2-7} & \ouralgar& 1.079& 1.125& 1.217& 1.215& 1.231 \\ \hline 
			\multirow{2}{*}{${\kappa}_S$}& \ouralgdefault& 0.130& 0.133& 0.132& 0.146& 0.145\\ \cline{2-7} & \ouralgar& 0.562& 0.571& 0.602& 0.587& 0.581 \\ \hline     
		\end{tabular}
	\end{center}
\end{subtable}
\end{table*}

\clearpage
\newpage

\begin{table*}[!ht]
\setlength{\tabcolsep}{5pt}
\scriptsize
\caption{Result of \dataset{Adult} dataset under noise model \rom{2} with non-independent Gaussian noise $N(0, \sigma^2 \Sigma)$ with $\mathrm{trace}(\Sigma) = d$. 
We first generate the covariance matrix $\Sigma$ randomly, and then apply it for different noise levels $\sigma^2$.
This choice of $\mathrm{trace}(\Sigma)$ ensures a variance of $\sigma^2 d$ per point, matching that of noise model \rom{2}.
$|S|$ represents the coreset size, $\widetilde{r}_{S}$ represents its empirical approximation ratio, and ${\kappa}_S$ denotes the tightness ratio of its empirical approximation ratio over the theoretical ratio.}
\label{tab:adult-nonindependent}
\begin{subtable}{0.49\textwidth}
	\caption{$\sigma^2=0$}
	\color{black}
	\begin{center}
		\begin{tabular}{|c|c|c|c|c|c|c|c|}
			\hline
			\multicolumn{2}{|c|}{{\bf $\eps$}} & {\bf 0.1} & {\bf 0.15} & {\bf 0.2} & {\bf 0.25} & {\bf 0.3} \\ \hline
			\multirow{2}{*}{$|S|$}& \ouralgdefault&9486&4216&2371&1517&1054\\ \cline{2-7} & \ouralgar&6445&3178&1940&1320&960 \\ \hline 
			\multirow{2}{*}{$\widehat{r}_S$}& \ouralgdefault& 1.029& 1.051& 1.101& 1.265& 1.227\\ \cline{2-7} & \ouralgar& 1.064& 1.149& 1.136& 1.119& 1.175 \\ \hline 
			\multirow{2}{*}{${\kappa}_S$}& \ouralgdefault& 0.851& 0.795& 0.764& 0.810& 0.726\\ \cline{2-7} & \ouralgar& 0.967& 0.999& 0.947& 0.895& 0.904 \\ \hline     
		\end{tabular}
	\end{center}
\end{subtable}\hspace{2mm}
\begin{subtable}{0.49\textwidth}
	\caption{$\sigma^2=0.01$}
	\color{black}
	\begin{center}
		\begin{tabular}{|c|c|c|c|c|c|c|c|}
			\hline
			\multicolumn{2}{|c|}{{\bf $\eps$}} & {\bf 0.1} & {\bf 0.15} & {\bf 0.2} & {\bf 0.25} & {\bf 0.3} \\ \hline
			\multirow{2}{*}{$|S|$}& \ouralgdefault&9486&4216&2371&1517&1054\\ \cline{2-7} & \ouralgar&6445&3178&1940&1320&960 \\ \hline 
			\multirow{2}{*}{$\widehat{r}_S$}& \ouralgdefault& 1.022& 1.039& 1.133& 1.147& 1.240\\ \cline{2-7} & \ouralgar& 1.077& 1.165& 1.150& 1.088& 1.164 \\ \hline 
			\multirow{2}{*}{${\kappa}_S$}& \ouralgdefault& 0.592& 0.559& 0.566& 0.535& 0.541\\ \cline{2-7} & \ouralgar& 0.951& 0.985& 0.933& 0.848& 0.873 \\ \hline      
		\end{tabular}
	\end{center}
\end{subtable}

\vspace{4mm}
\begin{subtable}{0.49\textwidth}
	\caption{$\sigma^2=0.05$}
	\color{black}
	\begin{center}
		\begin{tabular}{|c|c|c|c|c|c|c|c|}
			\hline
			\multicolumn{2}{|c|}{{\bf $\eps$}} & {\bf 0.1} & {\bf 0.15} & {\bf 0.2} & {\bf 0.25} & {\bf 0.3} \\ \hline
			\multirow{2}{*}{$|S|$}& \ouralgdefault&9486&4216&2371&1517&1054\\ \cline{2-7} & \ouralgar&6418&3178&1940&1320&960 \\ \hline 
			\multirow{2}{*}{$\widehat{r}_S$}& \ouralgdefault& 1.038& 1.036& 1.043& 1.152& 1.337\\ \cline{2-7} & \ouralgar& 1.083& 1.086& 1.141& 1.165& 1.167 \\ \hline 
			\multirow{2}{*}{${\kappa}_S$}& \ouralgdefault& 0.373& 0.350& 0.333& 0.348& 0.383\\ \cline{2-7} & \ouralgar& 0.857& 0.826& 0.836& 0.824& 0.797 \\ \hline   
		\end{tabular}
	\end{center}
\end{subtable}\hspace{2mm}
\begin{subtable}{0.49\textwidth}
	\caption{$\sigma^2=0.25$}
	\color{black}
	\begin{center}
		\begin{tabular}{|c|c|c|c|c|c|c|c|}
			\hline
			\multicolumn{2}{|c|}{{\bf $\eps$}} & {\bf 0.1} & {\bf 0.15} & {\bf 0.2} & {\bf 0.25} & {\bf 0.3} \\ \hline
			\multirow{2}{*}{$|S|$}& \ouralgdefault&9486&4216&2371&1517&1054\\ \cline{2-7} & \ouralgar&6445&3178&1940&1320&960 \\ \hline 
			\multirow{2}{*}{$\widehat{r}_S$}& \ouralgdefault& 1.090& 1.145& 1.221& 1.219& 1.304\\ \cline{2-7} & \ouralgar& 1.129& 1.199& 1.202& 1.193& 1.239 \\ \hline 
			\multirow{2}{*}{${\kappa}_S$}& \ouralgdefault& 0.136& 0.138& 0.143& 0.138& 0.142\\ \cline{2-7} & \ouralgar& 0.588& 0.609& 0.595& 0.576& 0.584 \\ \hline     
		\end{tabular}
	\end{center}
\end{subtable}
\end{table*}

\section{Conclusion, limitations, and future work}
\label{sec:conclusion}

This paper studies the practically relevant problem of coreset construction for clustering in the presence of noise. 
The main contributions are two new algorithms that construct coresets with provable guarantees relative to the true (unobserved) dataset, 
one based on adapting the traditional surrogate metric $\err$, and the other introducing a new metric $\err_\alpha$.
We prove that the algorithm based on $\err_\alpha$ yields smaller coreset sizes and tighter performance bounds, assuming the dataset satisfies certain natural assumptions that are necessary in worst-case scenarios.
Our analysis quantifies how noise impacts clustering costs and perturbs the optimal center sets, relying on properties such as noise cancellation and the concentration of the empirical center $\mathsf{C}(\widehat{P})$.
The new metric $\err_\alpha$ may also have independent utility beyond the noisy setting, for instance, enabling further coreset size reduction in noise-free tasks where preserving near-optimal solutions suffices—such as in regression.
Empirical evaluations strongly support the theoretical findings, demonstrating robust performance
 across a broad range of real-world datasets (which may violate the theoretical assumptions) and under diverse noise models.
These results suggest that the proposed algorithms can be readily integrated into existing coreset-based clustering pipelines to improve robustness in noisy environments.

One limitation lies in extending the use of the $\err_\alpha$ metric to coreset construction in the streaming model. 
Although $\err_\alpha$ satisfies a composition property akin to $\err$, it may lack the \emph{mergeability} property
(the union of coresets is a coreset for the union of datasets), which is crucial for enabling coreset construction in streaming settings. 
As a result, adapting the $\err_\alpha$ metric to the streaming model remains an technically challenging problem.
As an initial step toward this challenge, we present a relaxed version of mergeability for the $\err_\alpha$ metric in Section \ref{sec:merge}, which may be applicable in certain scenarios.

Besides this, our work opens several promising avenues for future research. 
A natural next step is to explore weaker assumptions that are applicable beyond worst-case scenarios, thereby improving the generalizability of our results. 
Another key direction is to extend our analysis to more realistic noise models, including those with dependencies across data points, {heavy-tailed noise}, or adversarially structured noise. 
{
Furthermore, our work focuses on clustering problems in Euclidean spaces.
Extending coreset construction under noise to general metric spaces presents an interesting direction for future research.
However, without access to Euclidean coordinates, it becomes nontrivial to define additive noise on individual points.
A natural alternative in such settings is to model noise additively on pairwise distances (edges) rather than on point coordinates.
}
In addition to clustering, it would be valuable to investigate how noise affects coreset construction in other learning tasks such as regression and classification.
Furthermore, studying connections between coreset constructions and other robustness notions in clustering could yield new insights. 
Finally, our empirical results suggest that coresets based on the $\err_\alpha$ metric may outperform those based on $\err$ even in the absence of noise.
Characterizing the conditions under which $\err_{\alpha}$ provides tighter guarantees than $\mathrm{Err}$ in the noise-free setting is an intriguing direction for future theoretical study.

We anticipate positive societal impact from this work by enabling more accurate and reliable data analysis pipelines. 
These improvements could benefit various sectors, including healthcare, finance, and technology, by enabling more robust data-driven decisions.

\section*{Acknowledgments}
LH acknowledges support from the State Key Laboratory of Novel Software Technology, the New Cornerstone Science Foundation, and NSFC Grant No. 625707396. 
ZL was supported by the Singapore Ministry of Education (MOE) Academic Research Fund (AcRF) Tier 1 grant. 
NKV was supported in part by NSF Grant CCF-2112665.

\bibliography{references}
\bibliographystyle{plain}

\appendix

\newpage

\section{Additional discussion on assumptions and error metrics}
\label{sec:missing_model}

This section provides additional context and justification for the modeling choices made in the main text. We first clarify the Bernstein condition and show it is satisfied by a wide class of noise distributions. We then discuss practical scenarios where the noise variance \(\theta\) is known or can be estimated. Next, we compare the standard error metric \(\err\) with the proposed \(\err_\alpha\), using concrete examples to highlight the tighter guarantees enabled by our approach. Finally, we present a weak mergeability property for \(\err_\alpha\) measure, which may be useful for streaming settings.

\subsection{Distributions satisfying the Bernstein condition}
\label{sec:bernstein}

The Bernstein condition plays a central role in our theoretical analysis, as it enables control over higher-order moments and supports concentration arguments under noise. We show that this condition is satisfied by several widely used distribution families.

\smallskip
\noindent
\textbf{Sub-Gaussian and sub-exponential distributions.}  
A distribution \( D \) is called \emph{sub-Gaussian} if there exists a constant \( K > 0 \) such that for all \( t > 0 \),
\[
\Pr_{X \sim D}[|X| \geq t] \leq 2e^{-t^2 / K^2}.
\]
Similarly, \( D \) is called \emph{sub-exponential} if there exists \( K > 0 \) such that
\[
\Pr_{X \sim D}[|X| \geq t] \leq 2e^{-t / K}.
\]
Gaussian distributions are sub-Gaussian; Laplace distributions are sub-exponential.

\begin{lemma}[\bf{Moment bounds}~{\cite[Sections 2.5 and 2.7]{vershynin2020high}}]
\label{lm:moment}
If \( D \) is sub-Gaussian, there exists a constant \( K > 0 \) such that for all integers \( i \geq 1 \),
\[
\mathbb{E}_{X \sim D}[|X|^i]^{1/i} \leq K \sqrt{i}.
\]
If \( D \) is sub-exponential, then for some \( K > 0 \),
\[
\mathbb{E}_{X \sim D}[|X|^i]^{1/i} \leq K i.
\]
\end{lemma}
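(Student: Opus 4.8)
The plan is to derive each moment bound directly from the corresponding tail bound via the layer-cake (integration-by-parts) representation of moments, and then to read off the growth rate from a Gamma-function evaluation. First I would write, for any integer $i \geq 1$,
$$\Exp_{X \sim D}[|X|^i] = \int_0^\infty i\, t^{i-1} \Pr_{X\sim D}[|X| \geq t]\, dt,$$
which holds for the nonnegative random variable $|X|^i$ by Fubini's theorem. This reduces both claims to plugging in the respective tail bound and evaluating an elementary integral.

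For the sub-Gaussian case, I would substitute $\Pr[|X|\geq t] \leq 2e^{-t^2/K^2}$ into this integral and evaluate it via the change of variables $u = t^2/K^2$, obtaining
$$\Exp_{X\sim D}[|X|^i] \leq 2i\int_0^\infty t^{i-1} e^{-t^2/K^2}\, dt = i K^i\,\Gamma(i/2).$$
Then I would apply Stirling's estimate $\Gamma(s) = \Theta\bigl(\sqrt{1/s}\,(s/e)^s\bigr)$ to see that $\Gamma(i/2)^{1/i} = \Theta\bigl(\sqrt{i/(2e)}\bigr)$, so that taking $i$-th roots and using $i^{1/i}\to 1$ yields $\Exp[|X|^i]^{1/i} \leq K'\sqrt{i}$ for a new absolute constant $K'$ depending only on $K$.

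For the sub-exponential case the computation is even cleaner: substituting $\Pr[|X|\geq t]\leq 2e^{-t/K}$ and rescaling $u=t/K$ gives
$$\Exp_{X\sim D}[|X|^i] \leq 2i\int_0^\infty t^{i-1} e^{-t/K}\, dt = 2i K^i\,\Gamma(i) = 2 K^i\, i!,$$
and Stirling's bound $(i!)^{1/i} = \Theta(i)$ then gives $\Exp[|X|^i]^{1/i}\leq K'' i$ for a suitable constant $K''$.

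The computation is entirely routine; the only care needed is bookkeeping rather than conceptual. Specifically, I would (i) keep the output constant distinct from the input constant $K$ — the conclusion's constant absorbs the factors $2^{1/i}$, $i^{1/i}$, and the Stirling constants, all of which are bounded uniformly over $i \geq 1$ — and (ii) verify the bound at small indices (e.g.\ $i = 1, 2$), where the asymptotic Stirling estimate is loose, which is handled by enlarging the constant to cover finitely many cases. The only mild ``obstacle'' is stating the Gamma/Stirling estimates cleanly enough that the $i$-th root extracts the claimed $\sqrt{i}$ (respectively $i$) growth without hidden index-dependent factors.
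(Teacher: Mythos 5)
Your proof is correct: the layer-cake identity, the substitution yielding $i K^i \Gamma(i/2)$ (resp.\ $2K^i i!$), and the Stirling step all check out, and your bookkeeping remarks about absorbing the $2^{1/i}$, $i^{1/i}$, and small-index cases into the constant are exactly the right cautions. The paper gives no proof of its own and simply cites Vershynin, whose argument is precisely this integration-of-tails computation, so your route coincides with the standard one.
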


\noindent
Combining Lemma~\ref{lm:moment} with Stirling’s approximation, we conclude that both sub-Gaussian and sub-exponential distributions satisfy the Bernstein condition. On the other hand, heavy-tailed distributions such as the Pareto, log-normal, and Student's \( t \)-distributions generally do not.

\subsection{Practical scenarios with known or estimable \texorpdfstring{\(\theta\)}{theta}}
\label{sec:scenarios}

Several real-world applications provide natural access to the noise parameter \(\theta\), either directly or through side-information.

\smallskip
\noindent
\textbf{Sensor networks.}  
In settings such as environmental monitoring using sensor arrays (e.g., for air quality), each measurement \( \widehat{p}_{i,j} = p_{i,j} + \xi_j \) is perturbed by Gaussian noise \( \xi_j \sim \mathcal{N}(0, \theta) \)~\cite{jin2013approximations}. Although the exact \(\theta\) may be unknown, sensor specifications often report an upper bound \(\theta'\), which can be directly used in our algorithm without additional estimation.

\smallskip
\noindent
\textbf{Standardized assessments.}  
In educational assessments (e.g., STEM exam scores), observed outcomes are noisy proxies for latent ability~\cite{emelianov2022on}. Here, \(\widehat{p}_{i,j} = p_{i,j} + \xi_j\), with \(\xi_j\) representing unknown variation. Historical data or variance analyses over large student cohorts often enable institutions to estimate \(\theta\) empirically, which can then be reused to construct robust coresets for future datasets.

\subsection{Comparing \texorpdfstring{\(\err\)}{err} and \texorpdfstring{\(\err_\alpha\)}{err\_alpha}}
\label{sec:example}

We now illustrate how \(\err_\alpha\) yields significantly tighter guarantees than \(\err\), even in simple 1D settings.

\paragraph{Noisy setting.}
Consider \oneMeans\ in \(\mathbb{R}\), with \(k = d = 1\). Let \(P\) consist of \(n/2\) points at \(-1\) and \(n/2\) points at \(+1\). Then \(\mathsf{C}(P) = 0\) and \(\OPT = n\). Both \(-1\) and \(+1\) are 2-approximate centers: \(\cost(P, -1) = \cost(P, 1) = 2n\).
Let \(\widehat{P}\) be the noisy version of \(P\) under model~\rom{1}, with \(\theta = 1\) and \(\xi_p \sim \mathcal{N}(0, 1)\). Using the decomposition:
\[
\cost(\widehat{P}, c) - \cost(P, c) = \sum_{p \in P} \|\xi_p\|_2^2 + 2 \sum_{p \in P} \langle \xi_p, p - c \rangle,
\]
the first term concentrates around \(n\). Bounding the second term naively:
\[
2\sum_{p \in P} \langle \xi_p, p - c \rangle \leq \sum_{p \in P} \left(\|\xi_p\|_2^2 + \|p - c\|_2^2\right),
\]
we get \(\err(\widehat{P}, P) \approx \frac{2n}{3n} = \frac{2}{3}\), yielding \(r_P(\widehat{P}, 1) \leq (1 + \frac{2}{3})^2 = \frac{25}{9}\).

\smallskip
\noindent
Now consider \(\err_\alpha(\widehat{P}, P)\). We have:
\[
\mathsf{C}(\widehat{P}) = \frac{1}{n} \sum_{p \in P} \xi_p, \quad |\mathsf{C}(\widehat{P}) - \mathsf{C}(P)| = O(n^{-1/2}), \quad \Rightarrow \cost(P, \mathsf{C}(\widehat{P})) = n + O(1).
\]
Thus, \(\err_1(\widehat{P}, P) \lesssim \frac{1}{n}\), giving \(r_P(\widehat{P}, 1) \lesssim 1 + \frac{1}{n}\)—a much sharper guarantee.

\paragraph{Noise-free setting.}
Consider the 1-Median problem in \(\mathbb{R}\), with \(n - 1\) points at 0 and one point at 1. Then \(\mathsf{C}(P) = 0\) and \(\OPT = 1\). Let \(S\) have \(n - 1\) points at 0 and one at \(1/n\). Then:
\[
\mathsf{C}(S) = 0, \quad \OPT_S = 1/n, \quad r_P(\mathsf{C}(S)) = 1, \quad r_S(\mathsf{C}(S)) = 1 \Rightarrow \err_1(S,P) = 0.
\]
However,
\[
\mathrm{Err}(S,P) = \frac{|\cost(P, \mathsf{C}(S)) - \cost(S, \mathsf{C}(S))|}{\cost(S, \mathsf{C}(S))} = n.
\]
This example underscores that even in noise-free settings, \(\err_\alpha\) can yield dramatically smaller errors than \(\mathrm{Err}\), especially when datasets differ slightly but preserve the same optimal solution.

\subsection{Weak mergeability under \texorpdfstring{\(\err_\alpha\)}{err\_alpha}}
\label{sec:merge}
We discuss the mergeability under our proposed measure $\err_{\alpha}$.
In the ideal case, mergeability means that if two coresets $S_1, S_2$ for disjoint datasets $P_1, P_2$ each satisfy $\err_\alpha(S_\ell, P_\ell) \leq \varepsilon$, then their union also satisfies  
$$
\err_\alpha(S_1 \cup S_2,\; P_1 \cup P_2) \leq \varepsilon.
$$
However, this guarantee can fail with our new metric $\err_\alpha$.
Consider the cases that $S_1$ and $S_2$ summarize datasets with significantly different cluster structures. 
In such cases, the optimal center set for $S_1 \cup S_2$ may differ substantially from the union of the individual optimal, and the combined coreset may not encode enough information to capture this emergent structure.

Mergeability remains plausible when the two coresets are structurally similar. 
If there exists $1 \leq \alpha' \leq \alpha$ such that  
$$
\calC_{\alpha'}(S_1) \subseteq \calC_\alpha(S_2)
\quad \text{and} \quad
\calC_{\alpha'}(S_2) \subseteq \calC_\alpha(S_1),
$$
then $S_1$ and $S_2$ approximately share the same set of near-optimal solutions. This overlap enables a weaker form of mergeability under $ \err_\alpha$.

\begin{claim}[\bf{Weak mergeability under $\err_\alpha$}]
\label{claim:mergeable}
Let $\eps > 0$ and $\alpha, \alpha'\geq 1$ with $\alpha' \leq \alpha$.
Given datasets $P_1$ and $P_2$ and weighted subsets $S_1$ and $S_2$, suppose $\err_\alpha(S_\ell, P_\ell) \leq \eps$ and $\calC_{\alpha'}(S_\ell) \subseteq \calC_{\alpha}(S_{3-\ell})$ for $\ell = 1,2$. 
Define:
\[
\kappa = \frac{\min\left\{ \OPT_{S_1}, \OPT_{S_2} \right\}}{\OPT_{S_1 \cup S_2}}, \quad
\tau = \max\left\{
\frac{\OPT_{S_1}/\OPT_{P_1}}{\OPT_{S_2}/\OPT_{P_2}},
\frac{\OPT_{S_2}/\OPT_{P_2}}{\OPT_{S_1}/\OPT_{P_1}}
\right\}.
\]
Then:
\[
\err_{1 + (\alpha - 1)\kappa}(S_1 \cup S_2,\; P_1 \cup P_2)
< \alpha' \cdot \tau \cdot (1 + \varepsilon) - 1.
\]
In the limiting case where $\alpha', \alpha \to 1$ and $\tau \to 1$, this bound recovers the ideal mergeability condition:  
\[
\err_\alpha(S_1 \cup S_2,\; P_1 \cup P_2) \leq \varepsilon.
\]
\end{claim}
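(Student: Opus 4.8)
The plan is to fix an arbitrary center set $C \in \calC_{1+(\alpha-1)\kappa}(S_1 \cup S_2)$ and bound the ratio $r_{P_1\cup P_2}(C)/r_{S_1\cup S_2}(C)$, since by the definition of $\err_\beta$ the claim is exactly the assertion that this ratio stays strictly below $\alpha'\tau(1+\eps)$ for every such $C$. The first ingredient is additivity of cost over the disjoint parts: $\cost(S_1\cup S_2, C) = \cost(S_1,C)+\cost(S_2,C)$ and likewise for $P_1\cup P_2$, together with the elementary facts $\OPT_{S_1\cup S_2}\geq \OPT_{S_1}+\OPT_{S_2}$ and $\OPT_{P_1\cup P_2}\geq \OPT_{P_1}+\OPT_{P_2}$ (each holds because a single $k$-center set pays at least the per-part optimum on each part). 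These reduce everything to the per-part quantities $\cost(S_\ell,C)$, $\OPT_{S_\ell}$, $\cost(P_\ell,C)$, $\OPT_{P_\ell}$.

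The key structural step is to promote $C$ to an $\alpha$-approximate solution on each part, so that the hypothesis $\err_\alpha(S_\ell,P_\ell)\leq\eps$ becomes applicable. Here the overlap condition $\calC_{\alpha'}(S_\ell)\subseteq\calC_\alpha(S_{3-\ell})$ does the work: applying it to the exact optimizer $\mathsf{C}(S_\ell)\in\calC_{\alpha'}(S_\ell)$ gives $\cost(S_{3-\ell},\mathsf{C}(S_\ell))\leq\alpha\OPT_{S_{3-\ell}}$, hence the two-sided control $\OPT_{S_1}+\OPT_{S_2}\leq \OPT_{S_1\cup S_2}\leq \min_\ell(\OPT_{S_\ell}+\alpha\OPT_{S_{3-\ell}})$. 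The choice $\kappa = \min\{\OPT_{S_1},\OPT_{S_2}\}/\OPT_{S_1\cup S_2}$ is calibrated precisely so that the slack $(\alpha-1)\min\{\OPT_{S_1},\OPT_{S_2}\}$ in the budget $\cost(S_1\cup S_2,C)\leq(1+(\alpha-1)\kappa)\OPT_{S_1\cup S_2}$ absorbs the merge overhead $\OPT_{S_1\cup S_2}-\OPT_{S_1}-\OPT_{S_2}$ and leaves $C$ certified as an $\alpha$-approximate center set on each part. With $C\in\calC_\alpha(S_\ell)$ in hand, $\err_\alpha(S_\ell,P_\ell)\leq\eps$ yields $r_{P_\ell}(C)\leq(1+\eps)\,r_{S_\ell}(C)$, equivalently $\cost(P_\ell,C)\leq(1+\eps)(\OPT_{P_\ell}/\OPT_{S_\ell})\,\cost(S_\ell,C)$.

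Finally I would recombine. Summing the per-part inequality and dividing by $\OPT_{P_1\cup P_2}\geq \OPT_{P_1}+\OPT_{P_2}$ writes $r_{P_1\cup P_2}(C)$ as a $(1+\eps)$-inflated, $\cost(S_\ell,C)$-weighted average of the per-part scales $\OPT_{P_\ell}/\OPT_{S_\ell}$. The role of $\tau = \max\{(\OPT_{S_1}/\OPT_{P_1})/(\OPT_{S_2}/\OPT_{P_2}),\,\text{reciprocal}\}$ is to bound the dispersion of these two scales: normalizing by the smaller one, each weighted contribution costs at most a factor $\tau$, so the weighted average collapses to $\tau$ times a single common scale that matches the denominator $r_{S_1\cup S_2}(C)$ up to the approximation level $\alpha'$ (used to bound the surviving $r_{S_\ell}(C)$ factor). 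This is what produces the advertised $\alpha'\tau(1+\eps)$.

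I expect this last recombination to be the main obstacle, rather than the per-part transfer: the merged optima $\OPT_{S_1\cup S_2}$ and $\OPT_{P_1\cup P_2}$ are only pinned down by one-sided inequalities, so jointly controlling their ratio and the weighted average of mismatched per-part scales is where slack is unavoidable—this is exactly why the conclusion is a strict inequality carrying the extra $\alpha'\tau$ factor instead of a clean $(1+\eps)$ bound. As a consistency check on the constants I would verify the degenerate regime $\alpha',\alpha\to 1$ and $\tau\to 1$: there $\kappa$ drops out, the overlap condition forces $\mathsf{C}(S_1)=\mathsf{C}(S_2)=\mathsf{C}(S_1\cup S_2)$ with $r_{S_1\cup S_2}(C)=1$, and the chain degenerates to $r_{P_1\cup P_2}(C)\leq 1+\eps$, recovering the ideal bound $\err_\alpha(S_1\cup S_2,P_1\cup P_2)\leq\eps$.
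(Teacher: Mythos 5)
Your proposal takes essentially the same route as the paper's proof: fix $C \in \calC_{1+(\alpha-1)\kappa}(S_1 \cup S_2)$, use the $\kappa$-calibrated budget together with the overlap condition to place $C$ in $\calC_\alpha(S_1)\cap\calC_\alpha(S_2)$ (the paper argues the contrapositive, you subtract $\cost(S_{3-\ell},C)\ge\OPT_{S_{3-\ell}}$ from the budget --- the same inequality), apply $r_{P_\ell}(C)\le(1+\eps)\,r_{S_\ell}(C)$ on each part, and recombine via the mediant inequality with the two-sided control on the merged optima, which is exactly where $\tau$ and the residual multiplicative factor enter. Two points of divergence worth flagging: in the paper the factor $\alpha'$ arrives through the bound $\OPT_{S_1\cup S_2}\le \OPT_{S_1}+\alpha'\OPT_{S_2}$ obtained from the overlap condition, not from bounding a ``surviving $r_{S_\ell}(C)$ factor'' as you suggest (your derivation, which yields $\alpha$ in that spot, is actually what the stated hypothesis $\calC_{\alpha'}(S_\ell)\subseteq\calC_\alpha(S_{3-\ell})$ literally gives); and your claim that the slack $(\alpha-1)\min\{\OPT_{S_1},\OPT_{S_2}\}$ absorbs the merge overhead $\OPT_{S_1\cup S_2}-\OPT_{S_1}-\OPT_{S_2}$ does not survive being written out when that overhead is strictly positive and the cheaper part attains the minimum --- though the paper's contrapositive version of this membership step rests on the same implicit identification of $\OPT_{S_1\cup S_2}$ with $\OPT_{S_1}+\OPT_{S_2}$, so you have not introduced a gap that the paper's own argument avoids.
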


\begin{proof}[Proof of Claim \ref{claim:mergeable}]
We first prove that for any center set $C\in \calC_{1+(\alpha-1)\kappa}(S_1\cup S_2)$, $C\in \calC_\alpha(S_1) \cap \calC_\alpha(S_2)$.
By contradiction if there exists some $C\in \calC_{1+(\alpha-1)\kappa}(S_1\cup S_2)$ such that $C\notin \calC_\alpha(S_1) \cap \calC_\alpha(S_2)$.
Then
\begin{align*}
& \quad \cost(S_1\cup S_2, C) &\\
> & \quad \OPT_{S_1\cup S_2} + (\alpha - 1)\cdot \min\left\{\OPT_{S_1}, \OPT_{S_2}\right\} &  (C\notin \calC_\alpha(S_1) \cap \calC_\alpha(S_2)) \\
\geq & \quad \OPT_{S_1\cup S_2} + (\alpha - 1)\kappa \cdot \OPT_{S_1\cup S_2} & (\text{Defn. of $\kappa$}) \\
= & \quad (1 + (\alpha - 1)\kappa) \cdot \OPT_{S_1\cup S_2}. &
\end{align*}
Thus, $C\notin \calC_{1+(\alpha-1)\kappa}(S_1\cup S_2)$, which is a contradiction.

Fix a center set $C\in \calC_{1+(\alpha-1)\kappa}(S_1\cup S_2)$.
We now have $C\in \calC_\alpha(S_1) \cap \calC_\alpha(S_2)$.
Since $\err_\alpha(S_\ell, P_\ell) \leq \eps$, we have
\[
\frac{r_{P_\ell}(C) - r_{S_\ell}(C)}{r_{S_\ell}(C)} \leq \eps,
\]
implying that $r_{P_\ell}(C) \leq (1+\eps) r_{S_\ell}(C)$.
Moreover, let $\mathsf{C}(P)$ denote the optimal center set of $S_1$ and we have $\mathsf{C}(P)\in \calC_\alpha(S_1)\subseteq \calC_{\alpha'}(S_2)$.
Thus, we have
\begin{align}
\label{eq:opt_bound}
\OPT_{S_1\cup S_2} \leq \cost(S_1\cup S_2, \mathsf{C}(P)) \leq \OPT_{S_1} + \alpha'\cdot \OPT_{S_2} \leq \alpha' (\OPT_{S_1} + \OPT_{S_2}).
\end{align}
Thus,
\begin{align*}
& \quad \frac{r_{P_1\cup P_2}(C) - r_{S_1\cup S_2}(C)}{r_{S_1\cup S_2}(C)} & \\
= & \quad \frac{r_{P_1\cup P_2}(C)}{r_{S_1\cup S_2}(C)} - 1 & \\
= & \quad \frac{\cost(P_1,C) + \cost(P_2,C)}{\cost(S_1,C) + \cost(S_2,C)} \cdot \frac{\OPT_{S_1\cup S_2}}{\OPT_{P_1\cup P_2}} - 1 & \\
= & \quad \frac{\OPT_{P_1}\cdot r_{P_1}(C) + \OPT_{P_2}\cdot r_{P_2}(C)}{\OPT_{S_1}\cdot r_{S_1}(C) + \OPT_{S_2}\cdot r_{S_2}(C)} \cdot \frac{\OPT_{S_1\cup S_2}}{\OPT_{P_1\cup P_2}} - 1 & \\
\leq & \quad (1+\eps) \frac{\OPT_{P_1}\cdot r_{S_1}(C) + \OPT_{P_2}\cdot r_{S_2}(C)}{\OPT_{S_1}\cdot r_{S_1}(C) + \OPT_{S_2}\cdot r_{S_2}(C)} \cdot \frac{\OPT_{S_1\cup S_2}}{\OPT_{P_1\cup P_2}} - 1 &\\
& \quad (r_{P_\ell}(C) \leq (1+\eps) r_{S_\ell}(C))\\
\leq & \quad (1+\eps) \max\left\{\frac{\OPT_{P_1}}{\OPT_{S_1}}, \frac{\OPT_{P_2}}{\OPT_{S_2}}\right\} \cdot \frac{\OPT_{S_1\cup S_2}}{\OPT_{P_1\cup P_2}} - 1 & \\
\leq & \quad (1+\eps) \max\left\{\frac{\OPT_{P_1}}{\OPT_{S_1}}, \frac{\OPT_{P_2}}{\OPT_{S_2}}\right\} \cdot \frac{\alpha'(\OPT_{S_1} + \OPT_{S_2})}{\OPT_{P_1} + \OPT_{P_2}} - 1 & (\text{Ineq. \eqref{eq:opt_bound}}) \\
\leq & \quad \alpha'(1+\eps) \cdot \max\left\{\frac{\OPT_{P_1}}{\OPT_{S_1}}, \frac{\OPT_{P_2}}{\OPT_{S_2}}\right\} \cdot \max\left\{\frac{\OPT_{S_1}}{\OPT_{P_1}}, \frac{\OPT_{S_2}}{\OPT_{P_2}}\right\} - 1 & \\
\leq & \quad \alpha'\tau(1+\eps) - 1. & (\text{Defn. of $\tau$})
\end{align*}
\end{proof}

\section{Missing results and discussions from Section~\ref{sec:theoretical}}
\label{sec:missing}

This appendix provides additional results and clarifications supporting our theoretical analysis. 
We first examine the necessity of the cost-stability assumption in worst-case settings and then illustrate how to examine Assumption \ref{assum:dataset} in practice.
{Finally, we show the impact of using an approximate optimal solution of $P$ in the algorithms.}

\subsection{Necessity of the stability assumption in the worst case}
\label{sec:necessity}

We present an example to demonstrate the necessity of the cost-stability assumption (Assumption~\ref{assum:dataset}) for ensuring a meaningful separation between the traditional error  $\err$ and the proposed  $\err_\alpha$.

Consider the 3-Means problem in $\mathbb{R}$ (i.e., $k = 3$, $d = 1$). Let $P \subset \mathbb{R}$ consist of $\frac{n}{4}$ points each at positions $-3.25\sqrt{2}$, $-1.25\sqrt{2}$, $1.25\sqrt{2}$, and $3.25\sqrt{2}$. 
A simple calculation shows that the optimal center set $\mathsf{C}(P)$ is either $\{-3.25\sqrt{2}, -1.25\sqrt{2}, 2.25\sqrt{2}\}$ or $\{3.25\sqrt{2}, 1.25\sqrt{2}, -2.25\sqrt{2}\}$, yielding $\OPT_P(k) = n$ in both cases.
Moreover, for the 2-Means problem, the optimal centers are at $\{\pm 2.25\sqrt{2}\}$, giving $\OPT_P(k-1) = 2n = 2 \cdot \OPT_P(k)$. Hence, $P$ is $\gamma$-cost-stable with $\gamma = 1$.

Now let $\widehat{P}$ be a noisy version of $P$ generated under noise model~I with $\theta = 1$, where each perturbation is sampled from $N(0, 1)$. Using the known expectation $\mathbb{E}_{x\sim N(0,1)}\left[|x| \mid x \geq 0\right] = \sqrt{2/\pi}$, we can approximate the means of the three clusters in $\widehat{P}$ as $-2.25\sqrt{2} - \sqrt{2/\pi}$, $0$, and $2.25\sqrt{2} + \sqrt{2/\pi}$, respectively.
Thus, the likely optimal center set for $\widehat{P}$ is $\mathsf{C}(S) = \{-2.25\sqrt{2} - \sqrt{2/\pi}, 0, 2.25\sqrt{2} + \sqrt{2/\pi}\}$, leading to:
\[
\err_{1}(\widehat{P}, P) = \frac{\cost(P, \mathsf{C}(S))}{\OPT_P(k)} - 1 \approx 0.82.
\]
By further computation, we approximate $\widehat{P}$ by assuming $\xi_p = \pm\mathbb{E}(|\xi_p|) = \pm\sqrt{2/\pi}$ for each point $p$, then we observe that $\OPT_{\widehat{P}} \approx 1.24 n$, thus $\err(\widehat{P}, P) \approx \frac{\left|\OPT_{\widehat{P}} - \cost(P, \mathsf{C}(S))\right|}{\OPT_{\widehat{P}}} \approx 0.47$.
This implies $\err(\widehat{P}, P) \lesssim \err_1(\widehat{P}, P)$.

This example underscores the importance of Assumption~\ref{assum:dataset}: in worst-case scenarios without cost-stability, the two errors may become comparable, and the benefit of using $\err_\alpha$ may diminish.

\paragraph{Empirical comparison of metrics.}
To further compare the behavior of $\err_\alpha$ and $\err$, we conduct simulations on synthetic datasets with varying separation levels $\beta$.

Let $P$ consist of $\frac{n}{4}$ points each at positions $p_1 = -2\sqrt{2}-\frac{\beta\sqrt{2}}{2}$, $p_2 = \frac{-\beta\sqrt{2}}{2}$, $p_3 = \frac{\beta\sqrt{2}}{2}$, and $p_4 = 2\sqrt{2}+\frac{\beta\sqrt{2}}{2}$. 
When $\beta = 2.5$, this setup matches the example discussed above. Note that the distances $|p_2 - p_1| = |p_4 - p_3| = 2\sqrt{2}$ remain fixed, while only the inter-cluster gap $|p_3-p_2|$ varies. This ensures that $\OPT_P = \theta n d$ and $\gamma = 1$ hold for all $\beta \ge 2$.

We set $n = 10{,}000$, $k = 3$, and vary $\beta$ from 2 to 3 in steps of 0.05. The dataset $P$ is perturbed under noise model \rom{1} with $\theta = 1$ to obtain $\widehat{P}$.

We compute near-optimal $k$-means++ centers for $P$ and $\widehat{P}$, denoted $C^\star$ and $\widehat{C^\star}$, respectively. We approximate $\err_1(\widehat{P},P)$ by computing $\frac{\cost(P,\widehat{C^\star})}{\cost(P,C^\star)} - 1$.

To estimate $\err(\widehat{P},P)$, we randomly sample 500 candidate $k$-center sets $C_1,\dots,C_{500}$, each constructed by uniformly sampling $k$ points from the interval $[\min(\widehat{P}),\max(\widehat{P})]$. We define
\[
\textstyle \widehat{\err}(\widehat{P},P) := \max_{1\le i\le 500}  \frac{|\cost(\widehat{P},C_i)-\cost(P,C_i)|}{\cost(\widehat{P},C_i)}
\]
as a proxy for the classical $\err$ metric.

\begin{figure*}[h]
    \centering
    \includegraphics[width=0.6\linewidth]{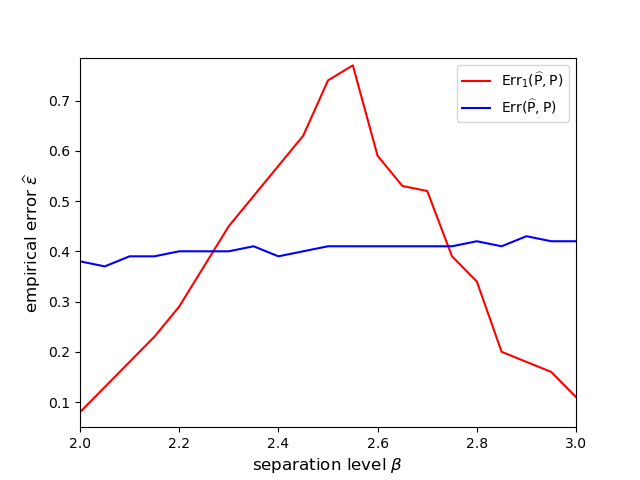}
  \caption{Empirical comparison of error metrics $\err_1(\widehat{P}, P)$ and $\err(\widehat{P}, P)$ on synthetic data as the separation level $\beta$ varies. While $\err_1$ responds to changing cluster geometry, $\err$ remains nearly constant, illustrating its insensitivity to structural properties in noisy settings.}
    \label{fig: simulations}
\end{figure*}

Figure~\ref{fig: simulations} presents the results. At $\beta = 2.5$, we observe that $\err(\widehat{P}, P) \approx 0.41$ and $\err_1(\widehat{P}, P) \approx 0.74$, consistent with our theoretical discussion. The overall trend confirms that $\err$ and $\err_\alpha$ can diverge significantly in value.

\paragraph{Interpretation.}
This simulation supports our comparison between the classical error metric $\err(\widehat{P}, P)$ and the proposed surrogate $\err_\alpha(\widehat{P}, P)$. 
It shows that $\err$ is relatively insensitive to changes in the structural properties of the dataset, potentially underrepresenting approximation error as data becomes less separable. 
In contrast, $\err_\alpha$, approximated here via $\err_1$, decreases with $\beta$ (when $\beta > 2.5$), reflecting the growing challenge of summarizing data as cluster separation decreases.

Even when the optimal clustering is unaffected by noise—i.e., $\OPT_P$ and cluster geometry remain stable—$\err$ remains nearly unchanged, while $\err_\alpha$ adapts to the increased representational difficulty.
This divergence (e.g., 0.41 vs. 0.74 at $\beta = 2.5$) underscores their differing sensitivities.

These findings reinforce our theoretical argument: $\err_\alpha$ is a structurally aware and more reliable metric for noisy settings. This justifies its use in guiding coreset construction and motivates algorithms such as \ouralgar that explicitly target this measure.

Interestingly, we observe that $\err_1$ remains small even when the separation is minor (e.g., $\err_1 < 0.1$ at $\beta = 2$), suggesting that $\err_\alpha$ may also better align with the ideal metric $r_P$ in the presence of noise when clusters are close. 
This opens a promising direction for strengthening our theoretical guarantees in low-separation regimes.

\subsection{Examining Assumption \ref{assum:dataset} in practice}
\label{sec:opt}

We detail the process for examining \(\widehat{P}\) in \ouralgar, whose goal is to estimate whether \(P\) meets the data assumptions. 

Given \(\widehat{P}\) and an \(O(1)\)-approximate center set \(\widetilde{C}_k\) for~\kMeans~problem, let \(\widetilde{\OPT}(k) := \cost(\widehat{P}, \widetilde{C}_k)\). 
Correspondingly, compute an \(O(1)\)-approximate center set \(\widetilde{C}_{k-1}\), let \(\widetilde{\OPT}(k-1) := \cost(\widehat{P}, \widetilde{C}_{k-1})\). 
For {\em cost stability} assumption, we directly verify whether
$$\textstyle \frac{\widetilde{\OPT}(k-1)}{\widetilde{\OPT}(k)} \geq 1 +  O(\alpha)\cdot \left(1 + \frac{\theta nd \log^2(kd/\sqrt{\alpha - 1})}{\widetilde{\OPT}(k)}\right).$$
Suppose $P$ does not meet the stability assumption, for example, there exists a solution $C_{k-1}$ for $\textstyle{(k-1)}$-Means problem, such that $ \frac{\cost(P,C_{k-1})}{\OPT_P}< 1 + \gamma$.
Then it’s likely that a corresponding solution for $\widehat{P}$ will also violate the stability assumption.

For the assumption that $r_i \leq 8\overline{r}_i$, consider removing the top 1\% of points in \(\widehat{P}\) with the greatest distances to \(\widetilde{C}\), resulting in a trimmed dataset \(\widetilde{P}\). Recompute \(r_i,\overline{r}_i\) for this adjusted dataset and \(\widetilde{C}\), and check if $r_i \leq 8\overline{r}_i$ holds for every $i \in [k]$.

{
 \subsection{Impact of Cost Estimation of \texorpdfstring{$\OPT_P$}{OPT	extunderscore P} on Algorithm Performance}
\label{sec:estimate-opt}

We discuss how the estimation of the optimal cost $\OPT_P$  affects the performance of \ouralgar and \ouralgdefault.
Generally speaking, a $c$-estimate ($c\geq 2$) of the optimal cost on $P$ worsens the error guarantee for \ouralgdefault (using $\mathrm{Err}$ measure) and hardens the data assumption for \ouralgar (using $\mathrm{Err}_\alpha$ measure). Below we illustrate these points.

Suppose we are given a center set $\widehat{C}$ for coreset construction, whose clustering cost is $\mathrm{cost}(\widehat{P},\widehat{C})=c\cdot \mathrm{OPT}_P$ for some $c>1$ ($c$-estimate), where $P$ is the underlying dataset and $\widehat{P}$ is a given noisy dataset. 

\smallskip
\noindent
\textbf{Impact on \ouralgdefault.} 
Let $S$ be a coreset derived from $\mathrm{CN}$. Employing a $c$-approximate estimate $\widehat{C}$ can weaken the quality bound of $S$ by up to a factor of $c$ in the following sense: With an exact estimate of $\OPT_P$, Theorem \ref{thm:err} provides a bound on the worst-case quality of an $\alpha$-approximate center set of $S$ as
$$
r_P(S,\alpha)\leq \left(1+\varepsilon+O\left(\frac{\theta n d}{\OPT_P}+\sqrt{\frac{\theta n d}{\OPT_P}}\right)\right)^2\cdot \alpha,
$$
In contrast, using $\mathrm{CN}$ with the approximate estimate $\widehat{C}$ yields the following bound for the derived coreset $S$:
$$
r_P(S,\alpha)\leq \left(1+c\cdot\varepsilon+O\left(\frac{\theta n d}{\mathrm{OPT}_P}+\sqrt{\frac{\theta n d}{\mathrm{OPT}_P}}\right)\right)^2\cdot \alpha.
$$
Compared to the exact estimation case, this bound increases the error term from $\varepsilon$ to $c\cdot \varepsilon$. This arises because the importance sampling procedure from \cite{bansal2024sensitivity}, used in \ouralgdefault, introduces an additional error term $\eps\cdot\frac{\cost(P,\widehat{C})}{\text{OPT}_P}=c\cdot \eps$ in its analysis, whereas the term $O\left(\frac{\theta n d}{\OPT_P}+\sqrt{\frac{\theta n d}{\OPT_P}}\right)$ results from $\err(\widehat{P},P)$ and remains unaffected by the quality of the estimation.

\smallskip
\noindent
\textbf{Impact on \ouralgar.}
Let $S$ be a coreset derived from \ouralgar. Using a $c$-approximation estimate $\widehat{C}$ weakens Assumption \ref{assum:dataset} by a factor of $c$; specifically, the required cost-stability constant $\gamma$ increases by this factor. Consequently, the range of datasets satisfying the theoretical bounds of \ouralgar (as stated in Theorem \ref{thm:err_alpha}) shrinks. This occurs because the performance guarantee of \ouralgar relies on correctly identifying $k$ separated clusters, a task complicated by the approximation error in $\widehat{C}$. Increasing $\gamma$ by a factor of $c$ ensures these clusters are accurately identified, restoring the desired guarantees.

Additionally, using a $c$-estimate may still result in a high-qualified coreset as with perfect estimate in practice.}

\section{Extensions of Theorems \ref{thm:err} and \ref{thm:err_alpha}}
\label{sec:extension}

This section extends Theorems \ref{thm:err} and \ref{thm:err_alpha} to other noise models and general \kzC.

\subsection{Extension to other noise models}
\label{sec:other_noise}

\paragraph{Noise model \rom{2}.}
Recall that under noise model \rom{2}, for every $i \in [n]$ and $j \in [d]$, $\widehat{p}_{i,j} = p_{i,j} + \xi_{p,j}$, where $\xi_{p,j}$ is drawn from $D_j$, a probability distribution on $\mathbb{R}$ with mean 0, variance $\sigma^2$, and satisfying the Bernstein condition.
The following Theorem shows the performance of coreset generated using the $\err$ metric under noise model \rom{2}.
The only difference from Theorem \ref{thm:err} is that we replace $\theta$ by $\sigma^2$.

\begin{theorem}[\bf{Coreset using $\err$ under noise model \rom{2}}]
\label{thm:err_II}
Let \(\widehat{P}\) be drawn from \(P\) via noise model~\rom{2} with known \(\sigma^2 \geq 0\).
Let $\eps \in (0,1)$ and fix $\alpha \geq 1$.
Let $\calA$ be an algorithm that constructs a weighted subset $S\subset \widehat{P}$ for \kMeans\ of size $\calA(\eps)$ and with guarantee $\err(S, \widehat{P}) \leq \eps$.
Then $$\textstyle \err(S, P) \leq \eps + O(\frac{\sigma^2 n d}{\OPT_P} + \sqrt{\frac{\sigma^2 n d}{\OPT_P}}) \mbox{ and } r_P(S, \alpha) \leq (1 + \eps + O(\frac{\sigma^2 n d}{\OPT_P} + \sqrt{\frac{\sigma^2 n d}{\OPT_P}}))^2 \cdot \alpha.$$
\end{theorem}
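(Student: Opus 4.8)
The plan is to mirror the proof of Theorem~\ref{thm:err} essentially verbatim, replacing the noise parameter $\theta$ by the per-coordinate variance $\sigma^2$ throughout. The whole argument hinges on a single quantity, $\err(\widehat{P}, P)$: once I establish the analogue of Lemma~\ref{lm:err_bound} under noise model~\rom{2}, namely $\err(\widehat{P}, P) = O(\frac{\sigma^2 nd}{\OPT} + \sqrt{\frac{\sigma^2 nd}{\OPT}})$, the composition property (Lemma~\ref{lemma:Composition Property}) gives $\err(S, P) \leq \err(S, \widehat{P}) + 2\err(\widehat{P}, P) \leq \eps + O(\frac{\sigma^2 nd}{\OPT} + \sqrt{\frac{\sigma^2 nd}{\OPT}})$, and the bound on $r_P(S, \alpha)$ follows from Equation~\eqref{eq:err_to_r} exactly as before.

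First I would recompute the noise statistics. Under model~\rom{2} every point is perturbed (rather than a $\theta$-fraction in expectation), and each coordinate has variance $\sigma^2$, so $\E[\|\xi_p\|_2^2] = \sigma^2 d$ and hence $\E[\sum_{p \in P} \|\xi_p\|_2^2] = \sigma^2 nd$. This is exactly the model-\rom{1} expectation $\theta nd$ with $\theta$ replaced by $\sigma^2$, confirming the advertised substitution. Likewise, for any center $c$ one has $\E[\sum_p \langle \xi_p, p-c\rangle] = 0$ and $\Var[\sum_p \langle \xi_p, p-c\rangle] = \sigma^2 \cost(P,c)$, matching Claim~\ref{claim:error_onemeans_new} under the same replacement.

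Next I would establish $\sum_{p \in P} \|\xi_p\|_2^2 = O(\sigma^2 nd)$ with high constant probability; since the summands are i.i.d.\ nonnegative with mean $\sigma^2 d$, Markov's inequality already gives $\Pr[\sum_p \|\xi_p\|_2^2 > 20\sigma^2 nd] \leq \frac{1}{20}$, uniformly in $\sigma^2$. With this in hand the $\err(\widehat{P},P)$ bound transcribes the proof of Lemma~\ref{lm:err_bound}: the triangle inequality yields $|d^2(p,C) - d^2(\widehat{p},C)| \leq \|\xi_p\|_2^2 + 2\|\xi_p\|_2 \, d(p,C)$ for every point, summing and applying Cauchy--Schwarz bounds the total deviation by $\sum_p \|\xi_p\|_2^2 + 2\sqrt{(\sum_p \|\xi_p\|_2^2)(\sum_p d^2(p,C))} \leq O(\sigma^2 nd) + O(\sqrt{\sigma^2 nd \cdot \cost(P,C)})$, and normalizing by $\cost(\widehat{P},C) = O(\cost(P,C))$ (valid in the regime $\OPT \gtrsim \sigma^2 nd$ where noise does not dominate) delivers the claimed rate.

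The one genuine structural difference, and the only step requiring care, is the concentration of $\sum_p \|\xi_p\|_2^2$. In model~\rom{1} the proof of Claim~\ref{claim:xi_p} exploited the ``untouched or perturbed'' structure: for small $\theta$ the event that \emph{no} point is perturbed has probability $(1-\theta)^n \geq 0.95$, sidestepping any concentration argument. Under model~\rom{2} noise is always present, so this escape is unavailable and one must control the deviation through moments directly. This turns out to be harmless rather than harder — Markov already suffices, and for a tighter constant one can use Chebyshev with the Bernstein condition, which gives $\E[\xi^4] \leq 12\sigma^2 b^2$ and hence $\Var[\|\xi_p\|_2^2] = d(\E[\xi^4] - \sigma^4) = O(d\sigma^2 b^2)$, so the tail is $O(\frac{b^2}{\sigma^2 nd})$, negligible once $\sigma^2 nd$ exceeds a constant depending on $b$ while Markov covers the remaining small regime. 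With concentration in place, every remaining step is the proof of Theorem~\ref{thm:err} under $\theta \mapsto \sigma^2$.
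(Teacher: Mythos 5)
Your proposal is correct and follows essentially the same route as the paper, which likewise reduces Theorem~\ref{thm:err_II} to the proof of Theorem~\ref{thm:err} with $\theta$ replaced by $\sigma^2$, the only new ingredient being the moment bounds $\E[\sum_p\|\xi_p\|_2^2]=O(\sigma^2 nd)$ and a variance bound via the Bernstein condition. Your observation that the "no point is perturbed" escape from Claim~\ref{claim:xi_p} is unavailable under model~\rom{2} and must be replaced by a direct Markov/Chebyshev argument is a correct filling-in of a detail the paper leaves implicit.
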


\begin{proof}
By a similar argument as that of Theorem~\ref{thm:err}, we only need to prove the following property:
\[
\Exp_{\widehat{P}}\left[\sum_{p\in P} \|\xi_p\|_2^2\right] = O(\sigma^{2} n d), \text{ and } \Var_{\widehat{P}}\left[\sum_{p\in P} \|\xi_p\|_2^2\right] = O(\sigma^{2} n d^2),
\]
which again holds by the Bernstein condition.
\end{proof}

\noindent
For the $\err_\alpha$ metric, we first give the data assumption under noise model \rom{2}, adapting from Assumption \ref{assum:dataset}.

\begin{assumption}[\bf{Data assumption under noise model \rom{2}}]
\label{assum:dataset_II}
Given \(\alpha \geq 1\) and \(\sigma^2\ge 0 \), assume \(P\) is \(\gamma\)-cost-stable with
\[ \textstyle
\gamma = O(\alpha)\cdot \left(1 + \frac{\sigma^2 nd \log^2(\frac{kd}{\sqrt{\alpha - 1}})}{\OPT_P}\right),
\]
and that \(r_i \leq 8 \overline{r}_i\) for all \(i \in [k]\).
\end{assumption}

\begin{theorem}[\bf{Coreset using the $\err_\alpha$ metric under noise model \rom{2}}]
\label{thm:err_alpha_noise_II}
Let $\widehat{P}$ be an observed dataset drawn from $P$ under the noise model \rom{2} with known parameter $\sigma^2 \in [0, \frac{\OPT_P}{nd}]$.
Let $\eps \in (0,1)$ and fix $\alpha \in [1, 2]$.
Under Assumption \ref{assum:dataset_II}, there exists a randomized algorithm that constructs a weighted $S\subset \widehat{P}$ for \kMeans\ of size $O(\frac{k \log k}{\eps - \frac{\sqrt{\alpha - 1} \sigma^2 n d}{\alpha \OPT_P}} + \frac{(\alpha-1)k\log k}{(\eps - \frac{\sqrt{\alpha - 1} \sigma^2 n d}{\alpha \OPT_P})^2})$ and with guarantee $\err_{\alpha}(S, \widehat{P}) \leq \eps$.
Moreover, 
$$\textstyle \err_\alpha(S, P) \leq \eps + O(\frac{\sigma^2 k d}{\OPT_P} + \frac{\sqrt{\alpha - 1}}{\alpha}\cdot \frac{\sqrt{\sigma^2 k d \OPT_P} + \sigma^2 n d}{\OPT_P}) \mbox{ and } $$ 
$$ \textstyle r_P(S, \alpha) \leq (1 + \eps + O(\frac{\sigma^2 k d}{\OPT_P} + \frac{\sqrt{\alpha - 1}}{\alpha}\cdot \frac{\sqrt{\sigma^2 k d \OPT_P} + \sigma^2 n d}{\OPT_P})) \cdot \alpha.$$
\end{theorem}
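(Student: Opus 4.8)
The plan is to reuse wholesale the proof architecture of Theorem~\ref{thm:err_alpha} (Section~\ref{sec:proof_err_AR}), running the same Algorithm~\ref{alg:our_alg} but with the noise level $\theta$ replaced by $\sigma^2$ throughout the ball radius $R_i$ and the sample sizes, and then isolating exactly those lemmas whose statements depend on the noise distribution. Every step of the main argument — the structural property of $\calC_\alpha(S)$ (Lemma~\ref{lemma:structural property of C(P')}), the properties of $S$ (Lemma~\ref{lemma:err_alpha_step1}), the center-movement bound (Lemma~\ref{lemma:err_alpha_step2}), and the final composition in the proof of Theorem~\ref{thm:err_alpha} — is phrased entirely in terms of three noise-dependent quantities: the aggregate squared noise norm $\sum_p\|\xi_p\|_2^2$, the inner-product sum $\sum_p\langle\xi_p,p-c\rangle$, and the per-cluster drift vector $\sum_{p\in P_i}\xi_p$. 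So it suffices to re-derive the moment and concentration estimates for these three quantities under noise model~\rom{2} and verify that they match their noise~\rom{1} counterparts under the single substitution $\theta\mapsto\sigma^2$.

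First I would redo the moment computation of Claim~\ref{claim:error_onemeans_new}. Under noise~\rom{2} every coordinate is perturbed, so $\E[\|\xi_p\|_2^2]=\sigma^2 d$ and hence $\E[\sum_p\|\xi_p\|_2^2]=\sigma^2 nd$, while the fourth-moment control from the Bernstein condition gives $\Var[\sum_p\|\xi_p\|_2^2]=O(\sigma^2 nd^2)$, exactly as already recorded in Theorem~\ref{thm:err_II}. Because the coordinates are independent with mean zero and variance $\sigma^2$, the inner-product sum still has zero mean and variance $\sigma^2\cdot\cost(P,c)$, so Lemma~\ref{lm:bound_onemeans_noise1_new} goes through verbatim with $\theta$ replaced by $\sigma^2$, yielding $\sup_c |\sum_p\langle\xi_p,p-c\rangle|/\cost(P,c)=O(\sqrt{\sigma^2 d/\OPT})$. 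Likewise the drift bound of Lemma~\ref{lemma:Bound center movement} becomes $\|\mu(P_i)-\mu(\widetilde P_i)\|_2^2=O(\sigma^2 d/n_i)$, since $\|\sum_{p\in P_i}\xi_p\|_2=O(\sqrt{\sigma^2 n_i d})$ with high probability. Propagating these through Lemma~\ref{lem:uniform-error-bound_new} gives $\OPT_{\widehat P}\le\OPT+O(\sigma^2 nd+\sqrt{\sigma^2 d\,\OPT})$, matching the $O(\OPT+\sigma^2 nd)$ bound the structural lemma requires.

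The step I expect to be the main obstacle is the removed-points analysis (Lemma~\ref{lemma:Bound cost of O_i}) together with the escape-probability estimate in Lemma~\ref{lemma:structural property of C(P')}, because under noise~\rom{2} there is no ``zero-out with probability $1-\theta$'' event to exploit; instead every point carries noise and the only control is the Bernstein tail. The clean way to handle this is to write the variance-$\sigma^2$ noise as $\sigma$ times a unit-variance Bernstein variable, so the per-coordinate tail reads $\Pr[|\xi_{p,j}|\ge t]\le 2\exp(-\Omega(t/\sigma))$ and $\Pr[\|\xi_p\|_2\ge t]\le 2d\exp(-\Omega(t/(\sigma\sqrt d)))$. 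With the ball radius rescaled to $R_i=3\widehat r_i+O(\sigma\sqrt d\,\log\frac{1+\sigma^2 kd}{\sqrt{\alpha-1}})$ (and the matching $\log^2(kd/\sqrt{\alpha-1})$ term already built into Assumption~\ref{assum:dataset_II}), the integral bounding $\sum_{p\in O_i}\|p-\mu(P'_i)\|_2^2$ and the in-flow term $\sum_{p\in I_i}\|p-\mu(P'_i)\|_2^2$ both contract to $O(\sqrt{\alpha-1}\,\sigma^2 d)$ by the same exponential-tail estimate, so Lemma~\ref{lemma:err_alpha_step2} yields $\|\mu(P_i)-\mu(P'_i)\|_2^2=O(\sigma^2 d/n_i+\sqrt{\alpha-1}\,\sigma^2 d)$.

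Finally I would assemble the pieces exactly as in the proof of Theorem~\ref{thm:err_alpha}: Lemma~\ref{lemma:err_alpha_step1} (now with $\sigma^2$) gives $\err_\alpha(S,\widehat P)\le\eps$ together with $\OPT_S\le(1+O(\eps/\sqrt{\alpha-1}))\OPT'$, and the triangle-inequality decomposition of $r_P(C_\alpha)$ into the center-movement term, the sampling term, and the $(\alpha-1)\OPT_S$ term produces the stated bound $\err_\alpha(S,P)\le\eps+O(\frac{\sigma^2 kd}{\OPT_P}+\frac{\sqrt{\alpha-1}}{\alpha}\cdot\frac{\sqrt{\sigma^2 kd\,\OPT_P}+\sigma^2 nd}{\OPT_P})$, whence the quality bound on $r_P(S,\alpha)$ follows immediately. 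No genuinely new idea beyond the $\theta\mapsto\sigma^2$ bookkeeping and the tail rescaling is required; the entire content is in checking that every occurrence of $\sigma^2$ lands precisely where $\theta$ did and that the Bernstein tail, scaled by $\sigma$, still suffices for the removed-points contraction.
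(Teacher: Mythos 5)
Your proposal is correct and follows essentially the same route as the paper: the paper's own proof simply re-verifies the moment and variance bounds for $\sum_p\|\xi_p\|_2^2$, $\sum_p\langle\xi_p,p-c\rangle$, and the per-cluster drift $\sum_{p\in P_i}\xi_p$ with $\theta$ replaced by $\sigma^2$, and then declares that the remaining steps of Theorem~\ref{thm:err_alpha} go through unchanged. Your additional care with the removed-points tail estimate (rescaling the Bernstein tail by $\sigma$ and adjusting $R_i$ accordingly, since there is no zero-noise event under model~\rom{2}) is a point the paper glosses over, but it is consistent with, and if anything more complete than, the published argument.
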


\begin{proof}
By a similar argument as that of Theorem~\ref{thm:err_alpha}, we have the following property:
\begin{itemize}
\item $
\Exp_{\widehat{P}}\left[\sum_{p\in P} \|\xi_p\|_2^2\right] = O(\sigma^{2} n d), \text{ and } \Var_{\widehat{P}}\left[\sum_{p\in P} \|\xi_p\|_2^2\right] = O(\sigma^{2} n d^2)$,
\item $\Exp_{\widehat{P}}\left[\sum_{p\in P} \langle \xi_p, p-c \rangle\right] = 0$ and $\Var_{\widehat{P}}\left[\sum_{p\in P} \langle \xi_p, p-c \rangle\right] = \sigma^2 \cdot \cost(P,c)$.
\end{itemize}
which holds by the Bernstein condition.

Similar to the proof of Lemma \ref{lemma:Bound center movement}, for any $i \in [k]$, \(\mathbb{E}_{\tilde{P}_i} \left[ \|\sum_{p \in P_i} \xi_p \|_2 \right] \leq O(\sqrt{\sigma^2 n_i d})\) and
\[\Var_{\tilde{P}_i} \left[ \| \sum_{p \in P_i} \xi_p \|_2 \right] \leq \mathbb{E}_{\tilde{P}_i} \left[ \| \sum_{p \in P_i} \xi_p \|_2^2 \right]= \mathbb{E}_{\tilde{P}_i} \left[ \sum_{p \in P_i} \left\| \xi_p \right\|_2^2 \right]= O(\sigma^{2} n_i d),\]
Then by Chebyshev's inequality, with high probability,
 \(\left\| \sum_{p \in P_i} \xi_p \right\|_2 \leq O(\sqrt{\sigma^2 n_i d}),\)
 which implies 
\[\left\| \mu(\tilde{P}_i) - \mu(P_i) \right\|_2^2 = 
 \|\frac{\sum_{p \in P_i}\xi_p}{n_i}\|_2^2 \leq O(\frac{\sigma^{2} d}{n_i}).\]
The remaining steps remain the same as in Theorem~\ref{thm:err_alpha}.
\end{proof}

\paragraph{Non-independent noise across dimensions.}
For simplicity, we consider a specific setting where the covariance matrix of each noise vector $\xi_p$ is $\Sigma\in \mathbb{R}^{d\times d}$.
Note that under the noise model \rom{2}, $\Sigma = \sigma^2 \cdot I_d$ when each $D_j = N(0,\sigma^2)$. 
In contrast, this setting considers non-independent noise across dimensions.

Note that the above proofs rely on certain concentration properties of the terms $\sum_{p\in P} \|\xi_p\|_2^2$ and $\sum_{p\in P}\langle \xi_p, p-c\rangle$. 
In general, $\mathbb{E} [\|\xi_p\|_2^2] = \mathrm{trace}(\Sigma)$. 
Hence, by a similar argument as in the proof of Claim \ref{claim:error_onemeans_new}, one can show that $\sum_{p\in P} \|\xi_P\|_2^2$ concentrates on $n \cdot \mathrm{trace}(\Sigma)$ and $\sum_{p\in P}\langle \xi_p, p-c\rangle \leq O(\sqrt{\mathrm{trace}(\Sigma) \cdot \cost(P,c) })$. 
The only difference with Theorem \ref{thm:err} and \ref{thm:err_alpha} is that we replace the original variance term $\theta d$ to $\mathrm{trace}(\Sigma)$.

\begin{theorem}[\bf{Coreset using $\err$ under non-independent noise model}]
\label{thm:err_non-independent}
Let \(\widehat{P}\) be drawn from \(P\) under non-independent noise model where each $\xi_{p}$ is drawn from $D_j = N(0,\Sigma)$.
Let $\eps \in (0,1)$ and fix $\alpha \geq 1$.
Let $\calA$ be an algorithm that constructs a weighted subset $S\subset \widehat{P}$ for \kMeans\ of size $\calA(\eps)$ and with guarantee $\err(S, \widehat{P}) \leq \eps$.
Then 
$$\err(S, P) \leq \eps + O(\frac{n \cdot \mathrm{trace}(\Sigma)}{\OPT_P} + \sqrt{\frac{n \cdot \mathrm{trace}(\Sigma)}{\OPT_P}}) \mbox{ and } 
$$
$$
r_P(S, \alpha) \leq (1 + \eps + O(\frac{n \cdot \mathrm{trace}(\Sigma)}{\OPT_P} + \sqrt{\frac{n \cdot \mathrm{trace}(\Sigma)}{\OPT_P}}))^2 \cdot \alpha.$$
\end{theorem}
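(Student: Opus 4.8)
The plan is to follow the proof of Theorem~\ref{thm:err} essentially verbatim, isolating the single place where the structure of the noise enters. By the composition property (Lemma~\ref{lemma:Composition Property}) it suffices to establish the analog of Lemma~\ref{lm:err_bound}, namely that with high probability
$$\err(\widehat{P}, P) \le O\!\left(\frac{n \cdot \mathrm{trace}(\Sigma)}{\OPT_P} + \sqrt{\frac{n \cdot \mathrm{trace}(\Sigma)}{\OPT_P}}\right),$$
and then to combine it with the assumed guarantee $\err(S, \widehat{P}) \le \eps$ exactly as in Theorem~\ref{thm:err}. The passage from the bound on $\err(S,P)$ to the bound on $r_P(S,\alpha)$ is then identical to the noise-free argument, since it uses only the definition of $\err$ together with the definition of $\alpha$-approximate center sets.

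The key observation is that, inspecting the proof of Lemma~\ref{lm:err_bound}, the only property of the noise it actually invokes is a high-probability upper bound on the scalar quantity $\sum_{p\in P}\|\xi_p\|_2^2$. Indeed, the per-point deviation is controlled by the triangle inequality $|d(p,C)-d(\widehat{p},C)|\le \|\xi_p\|_2$, giving $|d^2(p,C)-d^2(\widehat{p},C)| \le \|\xi_p\|_2^2 + 2\|\xi_p\|_2\, d(p,C)$, and the resulting cross term is absorbed by Cauchy--Schwarz into $\sqrt{(\sum_p\|\xi_p\|_2^2)\cdot\cost(P,C)}$. Thus the directional inner product $\langle \xi_p, p-c\rangle$ never appears in this coarser analysis, so the whole argument goes through uniformly over $C\in\calC$ once $\sum_p\|\xi_p\|_2^2$ is shown to concentrate.

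So the main step I would carry out is the concentration of $\sum_{p\in P}\|\xi_p\|_2^2$ under $\xi_p \sim N(0,\Sigma)$. Diagonalizing $\Sigma = U\Lambda U^\top$ with eigenvalues $\lambda_1,\dots,\lambda_d \ge 0$, each $\|\xi_p\|_2^2$ is distributed as $\sum_j \lambda_j Z_{p,j}^2$ with $Z_{p,j}$ i.i.d.\ standard normal, so
$$\Exp[\|\xi_p\|_2^2] = \sum_j \lambda_j = \mathrm{trace}(\Sigma), \qquad \Var[\|\xi_p\|_2^2] = 2\sum_j \lambda_j^2 = 2\,\mathrm{trace}(\Sigma^2).$$
Summing over the $n$ independent points gives mean $n\,\mathrm{trace}(\Sigma)$ and variance $2n\,\mathrm{trace}(\Sigma^2)$. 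Feeding the resulting high-probability bound $\sum_p\|\xi_p\|_2^2 = O(n\,\mathrm{trace}(\Sigma))$ into the proof of Lemma~\ref{lm:err_bound} in place of Claim~\ref{claim:xi_p}, with $\theta d$ replaced by $\mathrm{trace}(\Sigma)$ throughout, yields the claimed bound on $\err(\widehat{P},P)$.

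I do not expect a serious obstacle here, precisely because the delicate noise-cancellation used for the finer $\err_\alpha$ metric in Theorem~\ref{thm:err_alpha} is not needed for $\err$. The one point worth flagging is the inequality $\mathrm{trace}(\Sigma^2) = \sum_j \lambda_j^2 \le \bigl(\sum_j \lambda_j\bigr)^2 = (\mathrm{trace}(\Sigma))^2$, valid since $\Sigma\succeq 0$: this is what guarantees that the standard deviation $O(\sqrt{n}\,\mathrm{trace}(\Sigma))$ is lower-order relative to the mean $n\,\mathrm{trace}(\Sigma)$, so that Chebyshev's inequality closes the concentration step. Consequently, the law of $\sum_p\|\xi_p\|_2^2$ under correlated coordinates differs from the independent case only through the two scalars $\mathrm{trace}(\Sigma)$ and $\mathrm{trace}(\Sigma^2)$, and the substitution $\theta d \mapsto \mathrm{trace}(\Sigma)$ suffices to transport the entire argument.
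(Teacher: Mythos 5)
Your proposal is correct and follows essentially the same route as the paper: the paper's own proof also reduces everything to showing $\Exp[\sum_{p\in P}\|\xi_p\|_2^2] = O(n\cdot\mathrm{trace}(\Sigma))$ and $\Var[\sum_{p\in P}\|\xi_p\|_2^2] = O(n\cdot\mathrm{trace}(\Sigma)^2)$, then reruns the argument of Theorem~\ref{thm:err} with $\theta d$ replaced by $\mathrm{trace}(\Sigma)$. Your explicit computation of the variance as $2n\,\mathrm{trace}(\Sigma^2)$ together with the observation $\mathrm{trace}(\Sigma^2)\le(\mathrm{trace}(\Sigma))^2$ is in fact slightly more detailed than what the paper writes.
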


\begin{proof}
By a similar argument as that of Theorem~\ref{thm:err}, we only need to prove the following property:
\[
\Exp_{\widehat{P}}\left[\sum_{p\in P} \|\xi_p\|_2^2\right] = O( n \cdot \mathrm{trace}(\Sigma)), \text{ and } \Var_{\widehat{P}}\left[\sum_{p\in P} \|\xi_p\|_2^2\right] = O(n \cdot \mathrm{trace}(\Sigma)^2).
\]
The remaining steps of the proof remain the same as in Theorem~\ref{thm:err}.
\end{proof}

\noindent
For the $\err_\alpha$ metric, we also extend Theorem \ref{thm:err_alpha} to this noise model.

\begin{assumption}[\bf{Data assumption under non-independent noise model}]
\label{assum:dataset_non-independent}
Given \(\alpha \geq 1\) and a covariance matrix  $\Sigma\in \mathbb{R}^{d\times d}$, assume \(P\) is \(\gamma\)-cost-stable with
\[ \textstyle
\gamma = O(\alpha)\cdot \left(1 + \frac{n \cdot \mathrm{trace}(\Sigma) \cdot \log^2(\frac{kd}{\sqrt{\alpha - 1}})}{\OPT_P}\right),
\]
and that \(r_i \leq 8 \overline{r}_i\) for all \(i \in [k]\).
\end{assumption}

\begin{theorem}[\bf{Coreset using the $\err_\alpha$ metric under non-independent noise model}]
\label{thm:err_alpha_noise_non-independent}
Let $\widehat{P}$ be an observed dataset drawn from $P$ under non-independent noise model where each $\xi_{p}$ is drawn from $D_j = N(0,\Sigma)$ with $\mathrm{trace}(\Sigma) \in [0, \frac{\OPT_P}{nd}]$.
Let $\eps \in (0,1)$ and fix $\alpha \in [1, 2]$.
Under Assumption \ref{assum:dataset_non-independent}, there exists a randomized algorithm that constructs a weighted $S\subset \widehat{P}$ for \kMeans\ of size $O(\frac{k \log k}{\eps - \frac{\sqrt{\alpha - 1}  n \cdot \mathrm{trace}(\Sigma)}{\alpha \OPT_P}} + \frac{(\alpha-1)k\log k}{(\eps - \frac{\sqrt{\alpha - 1}  n \cdot \mathrm{trace}(\Sigma)}{\alpha \OPT_P})^2})$ and with guarantee $\err_{\alpha}(S, \widehat{P}) \leq \eps$.
Moreover, 
$$\err_\alpha(S, P) \leq \eps + O(\frac{ k \cdot \mathrm{trace}(\Sigma)}{\OPT_P} + \frac{\sqrt{\alpha - 1}}{\alpha}\cdot \frac{\sqrt{ k \cdot \mathrm{trace}(\Sigma)\cdot  \OPT_P} + n \cdot \mathrm{trace}(\Sigma)}{\OPT_P}) \mbox{ and } $$ 
$$ r_P(S, \alpha) \leq (1 + \eps + O(\frac{ k \cdot \mathrm{trace}(\Sigma)}{\OPT_P} + \frac{\sqrt{\alpha - 1}}{\alpha}\cdot \frac{\sqrt{ k \cdot \mathrm{trace}(\Sigma)\cdot \OPT_P} +  n \cdot \mathrm{trace}(\Sigma)}{\OPT_P})) \cdot \alpha.$$
\end{theorem}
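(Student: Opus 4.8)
The plan is to reduce Theorem~\ref{thm:err_alpha_noise_non-independent} to Theorem~\ref{thm:err_alpha} by identifying the correct noise-variance proxy. The entire analysis in Section~\ref{sec:proof_err_AR} interacts with the noise only through two scalar quantities: the concentration of $\sum_{p\in P}\|\xi_p\|_2^2$ and the variance of the cross term $\sum_{p\in P}\langle \xi_p, p-c\rangle$. Under noise model~\rom{1} these evaluate to $\theta n d$ and $\theta\cdot\cost(P,c)$ respectively (Claim~\ref{claim:error_onemeans_new}). For non-independent Gaussian noise with covariance $\Sigma$, the first step is to recompute these two statistics: since $\mathbb{E}[\|\xi_p\|_2^2]=\mathrm{trace}(\Sigma)$ and $\mathbb{E}[\langle\xi_p,v\rangle^2]=v^\top\Sigma v\le \|\Sigma\|_{\mathrm{op}}\|v\|_2^2\le\mathrm{trace}(\Sigma)\|v\|_2^2$ for any fixed $v$, I expect $\sum_{p\in P}\|\xi_p\|_2^2$ to concentrate around $n\cdot\mathrm{trace}(\Sigma)$ with variance $O(n\cdot\mathrm{trace}(\Sigma)^2)$, and $\mathrm{Var}[\sum_p\langle\xi_p,p-c\rangle]\le \mathrm{trace}(\Sigma)\cdot\cost(P,c)$. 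This is exactly the substitution $\theta d\mapsto \mathrm{trace}(\Sigma)$ already flagged in the paragraph preceding the theorem.

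Second, I would re-derive the three supporting lemmas with this substitution. In Lemma~\ref{lm:bound_onemeans_noise1_new} (error bound for \oneMeans), the key quantity $\|\sum_{p\in P}\xi_p\|_2$ must be controlled; here $\sum_{p\in P}\xi_p$ is distributed as $N(0, X\Sigma)$ conditioned on $X$ active-noise points, so by the Gaussian norm concentration (Theorem~3.1.1 of~\cite{vershynin2020high}) it is $O(\sqrt{X\cdot\mathrm{trace}(\Sigma)})=O(\sqrt{\theta n\,\mathrm{trace}(\Sigma)/\delta})$ with high probability, yielding $\sup_c |\sum_p\langle\xi_p,p-c\rangle|/\cost(P,c)=O(\sqrt{\mathrm{trace}(\Sigma)/(\delta\,\OPT)})$. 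In Lemma~\ref{lemma:Bound center movement} (movement of noisy centers), the bound $\|\mu(P_i)-\mu(\widetilde{P_i})\|_2^2=\|\frac{1}{n_i}\sum_{p\in P_i}\xi_p\|_2^2$ becomes $O(\mathrm{trace}(\Sigma)/n_i)$ by the same concentration. In Lemma~\ref{lemma:Bound cost of O_i} (impact of removed points), the tail bound on $\|\xi_p\|$ used to control the removed mass requires a sub-Gaussian tail for $\|\xi_p\|_2$; for $N(0,\Sigma)$ this follows from the Hanson--Wright inequality or directly from the fact that $\|\xi_p\|_2$ is a $\sqrt{\|\Sigma\|_{\mathrm{op}}}$-Lipschitz function of a standard Gaussian, giving the needed exponential concentration with the radius threshold $R_i$ reparametrized through $\mathrm{trace}(\Sigma)$.

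Third, I would propagate these bounds through Lemmas~\ref{lemma:structural property of C(P')}, \ref{lemma:err_alpha_step1}, and~\ref{lemma:err_alpha_step2} verbatim, since those proofs only invoke the four moment facts above. The cost-stability parameter in Assumption~\ref{assum:dataset_non-independent} is already set to $\gamma=O(\alpha)(1+n\,\mathrm{trace}(\Sigma)\log^2(kd/\sqrt{\alpha-1})/\OPT_P)$, which is precisely the value that keeps Lemma~\ref{lemma:stability} applicable after the substitution. Assembling the cluster-wise bounds exactly as in the proof of Theorem~\ref{thm:err_alpha} then gives $\err_\alpha(S,P)\le\eps+O(\frac{k\,\mathrm{trace}(\Sigma)}{\OPT_P}+\frac{\sqrt{\alpha-1}}{\alpha}\cdot\frac{\sqrt{k\,\mathrm{trace}(\Sigma)\OPT_P}+n\,\mathrm{trace}(\Sigma)}{\OPT_P})$, and the bound on $r_P(S,\alpha)$ follows from the final computation in the proof of Theorem~\ref{thm:err_alpha}.

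I expect the main obstacle to be the tail-bound step in the analogue of Lemma~\ref{lemma:Bound cost of O_i}. Under independent coordinates, the per-coordinate Bernstein condition gives a clean union-bound over dimensions to bound $\Pr[\|\xi_p\|\ge t]$; with a general correlated Gaussian this coordinate-wise decomposition fails, and I must instead argue directly that $\|\xi_p\|_2$ concentrates using $\|\Sigma\|_{\mathrm{op}}\le\mathrm{trace}(\Sigma)$. One must check that the radius $R_i=3\widehat{r}_i+O(\sqrt{d}\log\frac{1+\theta kd}{\sqrt{\alpha-1}})$ still captures all but a $\sqrt{\alpha-1}$-fraction of noisy mass when the effective per-direction variance can be as large as $\mathrm{trace}(\Sigma)$ rather than $1$; this may require replacing the $\sqrt{d}$ scaling in $R_i$ by $\sqrt{\mathrm{trace}(\Sigma)}$ inside the logarithmic term, a cosmetic but necessary adjustment. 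Aside from this tail argument, every step is a mechanical substitution, so I anticipate no conceptual difficulty beyond confirming the Gaussian concentration in the correlated regime.
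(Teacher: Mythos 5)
Your proposal is correct and follows essentially the same route as the paper: the paper's proof also reduces to Theorem~\ref{thm:err_alpha} by recomputing the moment facts under the substitution $\theta d \mapsto \mathrm{trace}(\Sigma)$ (namely $\Exp[\sum_p\|\xi_p\|_2^2]=O(n\cdot\mathrm{trace}(\Sigma))$, $\Var[\sum_p\langle\xi_p,p-c\rangle]\le\mathrm{trace}(\Sigma)\cdot\cost(P,c)$, and $\|\mu(\tilde{P}_i)-\mu(P_i)\|_2^2\le O(\mathrm{trace}(\Sigma)/n_i)$ via Chebyshev) and declaring the remaining steps unchanged. You are in fact somewhat more careful than the paper on the tail-bound step for $\|\xi_p\|_2$ in the analogue of Lemma~\ref{lemma:Bound cost of O_i}, where the coordinate-wise union bound genuinely does not transfer to correlated Gaussians and the Lipschitz-concentration argument you sketch is the right fix.
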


\begin{proof}
By a similar argument as that of Theorem~\ref{thm:err_alpha}, we have the following property:
\begin{itemize}
\item $
\Exp_{\widehat{P}}\left[\sum_{p\in P} \|\xi_p\|_2^2\right] = O( n \cdot \mathrm{trace}(\Sigma)), \text{ and } \Var_{\widehat{P}}\left[\sum_{p\in P} \|\xi_p\|_2^2\right] = O(n \cdot \mathrm{trace}(\Sigma)^2)$,
\item $\Exp_{\widehat{P}}\left[\sum_{p\in P} \langle \xi_p, p-c \rangle\right] = 0$ and $\Var_{\widehat{P}}\left[\sum_{p\in P} \langle \xi_p, p-c \rangle\right] = \mathrm{trace}(\Sigma) \cdot \cost(P,c)$.
\end{itemize}
which holds by the Bernstein condition.

Similar to the proof of Lemma \ref{lemma:Bound center movement}, for any $i \in [k]$,
\[\Var_{\tilde{P}_i} \left[ \| \sum_{p \in P_i} \xi_p \|_2 \right] \leq \mathbb{E}_{\tilde{P}_i} \left[ \| \sum_{p \in P_i} \xi_p \|_2^2 \right]= \mathbb{E}_{\tilde{P}_i} \left[ \sum_{p \in P_i} \left\| \xi_p \right\|_2^2 \right]= O( n_i \cdot\mathrm{trace}(\Sigma) ),\]
Then by Chebyshev's inequality, with high probability,
 \(\left\| \sum_{p \in P_i} \xi_p \right\|_2 \leq O(\sqrt{ n_i \cdot\mathrm{trace}(\Sigma)}),\)
 which implies 
\[\left\| \mu(\tilde{P}_i) - \mu(P_i) \right\|_2^2 = 
 \|\frac{\sum_{p \in P_i}\xi_p}{n_i}\|_2^2 \leq O(\frac{\mathrm{trace}(\Sigma)}{n_i}).\]
The remaining steps remain the same as in Theorem~\ref{thm:err_alpha}.
\end{proof}

\subsection{Extension to \kzC}
\label{sec:kzC}

The following theorem extends Theorem \ref{thm:err} to \kzC, under both noise models \rom{1} and \rom{2}.
\begin{theorem}[\bf{\kzC\ coreset using the $\err_\alpha$ metric in the presence of noise}]
\label{thm:main_kzC}
Let \(\widehat{P}\) be drawn from \(P\) via noise model~\rom{1} (or \rom{2}) with known \(\theta \geq 0\) (or \(\sigma^2 \geq 0\)).
Let $\eps \in (0,1)$ and fix $\alpha \geq 1$.
Let $\calA$ be an algorithm that constructs a weighted subset $S\subset \widehat{P}$ for \kMeans\ of size $\calA(\eps)$ and with guarantee $\err(S, \widehat{P}) \leq \eps$.
Then for noise model \rom{1}, $$\textstyle \err(S, P) \leq \eps + O(\frac{\theta n d^{z/2}}{\OPT_P} + \sqrt[z]{\frac{\theta n d^{z/2}}{\OPT_P}}) \mbox{ and } r_P(S, \alpha) \leq (1 + \eps + O(\frac{\theta n d^{z/2}}{\OPT_P} + \sqrt[z]{\frac{\theta n d^{z/2}}{\OPT_P}}))^2 \cdot \alpha.$$
For noise model \rom{2}, $$\textstyle \err(S, P) \leq \eps + O(\frac{\sigma^z n d^{z/2}}{\OPT_P} + \sqrt[z]{\frac{\sigma^z n d^{z/2}}{\OPT_P}}) \mbox{ and } r_P(S, \alpha) \leq (1 + \eps + O(\frac{\sigma^z n d^{z/2}}{\OPT_P} + \sqrt[z]{\frac{\sigma^z n d^{z/2}}{\OPT_P}}))^2 \cdot \alpha.$$
\end{theorem}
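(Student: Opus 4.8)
The plan is to mirror the proof of Theorem~\ref{thm:err} (the $z=2$ case), since the two ingredients used there—the composition property and the reduction from $\err$ to $r_P(S,\alpha)$—are insensitive to the value of $z$. Concretely, Lemma~\ref{lemma:Composition Property} only invokes the triangle inequality on cost values, so it holds verbatim with $\cost$ replaced by $\cost_z$, giving $\err(S,P)\le \err(S,\widehat{P})+2\err(\widehat{P},P)$; likewise the chain of inequalities leading to Equation~\eqref{eq:err_to_r} is generic in $z$, so once we control $\err(\widehat{P},P)$ the bound on $r_P(S,\alpha)$ follows immediately. Thus the entire task reduces to generalizing Lemma~\ref{lm:err_bound}, i.e.\ to proving $\err(\widehat{P},P)=O\big(\tfrac{\theta n d^{z/2}}{\OPT_P}+\sqrt[z]{\tfrac{\theta n d^{z/2}}{\OPT_P}}\big)$ under model~\rom{1} (and the analogue with $\theta\mapsto\sigma^z$ under model~\rom{2}).

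To bound $\err(\widehat{P},P)$, I would start from the per-point power inequality. Writing $a=d(p,C)$ and $b=d(\widehat{p},C)$, the triangle inequality gives $|a-b|\le\|\xi_p\|_2$, and for any $z\ge 1$ one has $|a^z-b^z|\le z\,|a-b|\,\max(a,b)^{z-1}$; combining this with $\max(a,b)\le a+\|\xi_p\|_2$ and the elementary bound $(x+y)^{z-1}\le C_z(x^{z-1}+y^{z-1})$ yields
\[
|d^z(p,C)-d^z(\widehat{p},C)|\le C_z\big(\|\xi_p\|_2\,d^{z-1}(p,C)+\|\xi_p\|_2^{z}\big)
\]
for a constant $C_z$ depending only on $z$. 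Summing over $p$, the first term is handled by Hölder's inequality with exponents $(z,\tfrac{z}{z-1})$,
\[
\sum_{p}\|\xi_p\|_2\,d^{z-1}(p,C)\le\Big(\sum_p\|\xi_p\|_2^{z}\Big)^{1/z}\cdot\cost_z(P,C)^{(z-1)/z},
\]
which is the exact analogue of the Cauchy--Schwarz step in the $z=2$ proof. Normalizing by $\cost_z(\widehat{P},C)=\Theta(\cost_z(P,C))$ (valid in the regime where $\OPT_P$ dominates the noise scale $\theta n d^{z/2}$, the $z$-analogue of the assumption $\OPT>\theta nd$) and using $\cost_z(P,C)\ge\OPT_P$ then produces exactly the two advertised terms $\tfrac{\theta n d^{z/2}}{\OPT_P}$ and $\sqrt[z]{\tfrac{\theta n d^{z/2}}{\OPT_P}}$, provided we know $\sum_p\|\xi_p\|_2^{z}=O(\theta n d^{z/2})$.

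The remaining—and main—step is therefore a concentration bound on $\sum_p\|\xi_p\|_2^{z}$, generalizing Claim~\ref{claim:xi_p}. Here the clean variance computation that sufficed for $z=2$ no longer applies, since $\|\xi_p\|_2^{z}=(\sum_j\xi_{p,j}^2)^{z/2}$ is not a sum over coordinates. Instead I would use the Bernstein condition to control the per-coordinate moments (via Lemma~\ref{lm:moment}), deduce that $\|\xi_p\|_2^2$ concentrates around $d$ when noise is present, and hence that each individual moment satisfies $\Exp[\|\xi_p\|_2^{z}\mid\xi_p\neq 0]=O(d^{z/2})$; multiplying by the activation probability $\theta$ gives $\Exp[\|\xi_p\|_2^{z}]=O(\theta d^{z/2})$ and, after a Chebyshev (or Bernstein) concentration over the $n$ independent points, $\sum_p\|\xi_p\|_2^{z}=O(\theta n d^{z/2})$ with constant probability. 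The hardest part is exactly this moment/norm-concentration estimate for $z>2$, where one must argue that $\|\xi_p\|_2$ stays close to $\sqrt{d}$ with sufficiently light tails so that $\Exp[\|\xi_p\|_2^z]=O(d^{z/2})$ rather than blowing up—and the Bernstein condition is precisely what secures this. For noise model~\rom{2} the only change is that each coordinate has variance $\sigma^2$ and noise is always present, so $\Exp[\|\xi_p\|_2^{z}]=O(\sigma^{z}d^{z/2})$ and $\sum_p\|\xi_p\|_2^{z}=O(\sigma^{z}nd^{z/2})$, replacing $\theta$ by $\sigma^z$ throughout and yielding the stated bounds.
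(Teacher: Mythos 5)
Your proposal follows essentially the same route as the paper's proof: the same per-point bound $|d^z(p,C)-d^z(\widehat{p},C)|\le O_z(\|\xi_p\|_2^z+\|\xi_p\|_2\, d^{z-1}(p,C))$, the same generalized H\"older step with exponents $(z,\tfrac{z}{z-1})$, the same concentration claim $\sum_p\|\xi_p\|_2^z=O(\theta n d^{z/2})$ via the Bernstein condition, and the same composition argument inherited from the $z=2$ case. Your write-up is in fact somewhat more careful than the paper's (which leaves the moment-concentration step and the per-point power inequality largely asserted), but the underlying argument is identical.
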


\begin{proof}
The case of \kMeans\ has been proved in Theorem \ref{thm:err}.
Now we show how to extend to \kzC.
For the upper bound, the main difference is that we have 
\[
|d^z(p,C) - d^z(\widehat{p},C)|\leq O_z\left(\|\xi_p\|_2^z + \|\xi_p\|_2\cdot d^{z-1}(p,C) \right),
\]
where $O_z(\cdot)$ hides constant factor $2^{O(z)}$.
Similar to Claim~\ref{claim:error_onemeans_new}, we assume $\sum_{p\in P} \|\xi_p\|_2^z \leq O_z(\theta n d^{z/2})$ by the Bernstein condition, which happens with probability at least 0.9.
Then we have
\begin{align*}
& \quad \frac{|\cost_z(P,C) - \cost_z(\widehat{P},C)|}{\cost_z(\widehat{P},C)} & \\
\leq & \quad \frac{O_z\left(\sum_{p\in P} \|\xi_p\|_2^z + \|\xi_p\|_2\cdot d^{z-1}(p,C)\right)}{\cost_z(\widehat{P},C)} &\\
\leq & \quad O(\frac{\theta n d}{\OPT}) + \frac{O_z\left(\sum_{p\in P} \|\xi_p\|_2\cdot d^{z-1}(p,C) \right)}{\OPT} & (\text{by assumption}) \\
\leq & \quad O(\frac{\theta n d^{\frac{z}{2}}}{\OPT}) + \frac{O_z\left(\sqrt[z]{(\sum_{p\in P} \|\xi_p\|_2^z) (\sum_{p\in P} d^z(p,C) )^{z-1}}\right)}{\OPT} & (\text{Generalized H\"{o}lder inequality}) \\
\leq & \quad O(\frac{\theta n d}{\OPT}) + \sqrt[z]{\frac{O(\theta n d^{z/2})}{\OPT}} & (\text{by assumption}) \\
\leq & \quad O(\frac{\theta n d}{\OPT}+\sqrt[z]{\frac{\theta n d^{z/2}}{\OPT}}), & (\text{Defn. of $\OPT$})
\end{align*}
which completes the proof for \kzC\ under noise model \rom{1}.

Similarly, for \kzC\ under noise model \rom{2}, we only need to prove the following property:
\[
\Exp_{\widehat{P}}\left[\sum_{p\in P} \|\xi_p\|_2^2\right] = O_z(\sigma^{z} n d^{z/2}), \text{ and } \Var_{\widehat{P}}\left[\sum_{p\in P} \|\xi_p\|_2^2\right] = O_z(\sigma^{2z} n d^z),
\]
which again holds by the Bernstein condition.
This completes the proof.
\end{proof}

\noindent
The extension of Theorem~\ref{thm:err_alpha} to general \kzC\ introduces additional technical challenges, as the optimal center for $k = 1$ is not the mean point, making it more difficult to control the location of centers. 
Adapting the use of the $\err_\alpha$ metric to general \kzC\ remains an interesting direction for future work.

\end{document}